\documentclass{article}

\PassOptionsToPackage{square,numbers}{natbib}
\usepackage[final]{neurips_2023}

\usepackage[utf8]{inputenc} %
\usepackage[T1]{fontenc}    %
\usepackage{hyperref}       %
\usepackage{url}            %
\usepackage{booktabs}       %
\usepackage{amsfonts}       %
\usepackage{nicefrac}       %
\usepackage{microtype}      %
\usepackage{lipsum}         %
\usepackage{graphicx}
\usepackage{doi}

\usepackage{notation}

\UpdateStyleAlias{mat}{bolditalic}

\DeclareInt{\numinstances}{n}
\DeclareInt{\numlabels}{m}

\DeclareSet{\instancespace}{X}
\DeclareSet{\labelspace}{Y}
\DeclareSet{\dataset}{D}

\DeclareVec{\rinstance}{x}
\DeclareMat{\rinstancematrix}{X}
\DeclareVec{\sinstance}{x}
\DeclareMat{\sinstancematrix}{X}

\DeclareVec{\rlabels}{y}
\DeclareVec{\slabels}{y}
\DeclareMat{\slabelmatrix}{Y}
\DeclareMat{\rlabelmatrix}{Y}

\DeclareStyleAlias{predicted}{withhat}
\DeclareStyleAlias{empirical}{withhat}
\DeclareModifier{\optimal}{supast}
\DeclareModifier{\primed}{primed}
\DeclareModifier{\empirical}{withhat}
\DeclareModifier{\predicted}{withhat}
\DeclareModifier{\withtilde}{withtilde}
\DeclareModifier{\semiemp}{withtilde}

\DeclareModification{\predictspaces}{predicted}{\labelspace}
\DeclareSeqOf{\predictspace}[subscript]{set}{\relax}{\predictspaces}

\DeclareModification{\rpreds}{predicted}{\rlabels}
\DeclareModification{\spreds}{predicted}{\slabels}
\DeclareModification{\spredmatrix}{predicted}{\slabelmatrix}
\DeclareModification{\rpredmatrix}{predicted}{\rlabelmatrix}
\DeclareSeqOf{\predmatrixseq}[supscript]{set}{\relax}{\spredmatrix}
\DeclareModification{\optpredmatrix}{supstar}{\spredmatrix}
\DeclareModification{\pluginpredmatrix}{adjoint}{\spredmatrix}
\DeclareModification{\approxpredmatrix}{withtilde}{\pluginpredmatrix}
\DeclareVec{\binlabels}{y}
\DeclareModification{\binpreds}{predicted}{\binlabels}

\DeclareMat{\confmat}{C}
\DeclareNum{\confmatentry}{c}
\DeclareNum{\ctruepos}[roman]{tp}
\DeclareNum{\cfalsepos}[roman]{fp}
\DeclareNum{\cfalseneg}[roman]{fn}
\DeclareNum{\ctrueneg}[roman]{tn}
\DeclareNum{\falsepos}[roman]{fp}
\DeclareNum{\falseneg}[roman]{fn}
\DeclareNum{\trueneg}[roman]{tn}

\DeclareNum{\truepos}{t}
\DeclareNum{\condpos}{p}
\DeclareNum{\predpos}{q}
\DeclareModification{\confmatvec}{boldupright}{\confmat}
\DeclareModification{\trueposvec}{bolditalic}{\truepos}
\DeclareModification{\falseposvec}{boldupright}{\falsepos}
\DeclareModification{\falsenegvec}{boldupright}{\falseneg}
\DeclareModification{\truenegvec}{boldupright}{\trueneg}
\DeclareModification{\condposvec}{bolditalic}{\condpos}
\DeclareModification{\predposvec}{bolditalic}{\predpos}

\DeclareModification{\labelscolhelper}{bolditalic}{\rlabels}
\DeclareModification{\predscolhelper}{bolditalic}{\spreds}
\NewDocumentCommand{\labelscol}{o}{\labelscolhelper_{:#1}}
\NewDocumentCommand{\predscol}{o}{\predscolhelper_{:#1}}

\DeclareMat{\ssolutionmat}{z}
\DeclareVec{\ssolutionvec}{z}

\DeclareGraph{\tree}{T}
\DeclareSet{\nodes}{V}
\DeclareSet{\leaves}{L}

\DeclareStyleAlias{vertex}{sanserif}

\DeclareGeneric{\leafnode}[vertex]{l}
\DeclareGeneric{\rootnode}[vertex]{r}
\DeclareGeneric{\node}[vertex]{v}
\DeclareRandVar{\nodeisrelevant}{z}
\DeclareFun{\treeconditional}{\eta}
\DeclareModification{\estimtreeconditional}{predicted}{\treeconditional}
\DeclareFun{\truecost}{c}
\DeclareModification{\estimcost}{overlined}{\truecost}
\DeclareFun{\partialpathcost}{p}
\DeclareFun{\restpathestimate}{h}
\DeclareFun{\estimgain}{\overline{g}}
\DeclareFun{\truegain}{g}
\DeclareVec{\pltw}{w}
\DeclareModification{\pltwprime}{primed}{\pltw}
\DeclareModification{\pltwprimeprime}{dprimed}{\pltw}

\DeclareNamedFun{\parent}{pa}
\DeclareNamedFun{\children}{ch}
\DeclareNamedFun{\Path}{path}

\NewDocumentCommand{\datadistribution}{}{\mathds{P}}

\DeclareVecFun{\hypothesis}{h}
\DeclareSet{\hypothesisspace}{H}
\DeclareSubScript{wsub}{w}
\DeclareSubScript{bcsub}{BC}
\DeclareSubScript{sursub}{sur}
\DeclareSubScript{tasksub}{task}
\DeclareSuperScript{instancewise}{i}
\DeclareSuperScript{macro}{m}
\DeclareSubScript{etusub}{ETU}
\DeclareModifier{\binary}{bcsub}
\DeclareFun{\lossfn}{u}
\DeclareFun{\confmatprod}[roman]{cm}
\DeclareFun{\lossfnc}{\psi}
\DeclareFun{\taskloss}{\Psi}

\DeclareModification{\weightedloss}{wsub}{\lossfn}
\DeclareModification{\instanceloss}{instancewise}{\lossfn}
\DeclareModification{\macroloss}{macro}{\lossfn}

\DeclareNum{\poslossweight}{w}

\DeclareFun{\marginal}{\eta}
\DeclareVecFun{\marginals}{\eta}
\DeclareVecFun{\gains}{g}
\DeclareModification{\predmarginals}{predicted}{\marginals}
\DeclareVec{\empiricaldensity}{\rho}

\DeclareFun{\inference}[calligraphic]{I}
\DeclareFun{\confmatutility}{\psi}

\NewDocumentCommand{\topk}{o}{
    \IfNoValueTF{#1}{\operatorname{top}}{\operatorname{top}_#1}
}
\DeclareModification{\eturisk}{etusub}{\taskloss}
\DeclareModification{\eturiskpp}{primed}{\taskloss}
\DeclareModification{\eturiskp}{etusub}{\eturiskpp}
\DeclareModification{\semieturisk}{withtilde}{\eturisk}
\DeclareModification{\semieturiskpp}{primed}{\taskloss}
\DeclareModification{\semieturiskp}{etusub}{\semieturiskpp}
\DeclareSet{\confmatrices}{C}
\DeclareSet{\macrolosses}{L}

\NewDocumentCommand{\pmacro}{o}{
    \operatorname{P}^{\text{m}}\IfNoValueTF{#1}{}{\!@{#1}}
}
\NewDocumentCommand{\pinstance}{o}{
    \operatorname{P}^{\text{i}}\IfNoValueTF{#1}{}{\!@{#1}}
}
\NewDocumentCommand{\rmacro}{o}{
    \operatorname{R}^{\text{m}}\IfNoValueTF{#1}{}{\!@{#1}}
}
\NewDocumentCommand{\abandon}{o}{
    \IfNoValueTF{#1}{\operatorname{Abd}}{\operatorname{Abd@}\!#1}
}
\NewDocumentCommand{\coverage}{e{_}o}{
    {\newcommand{\subscript}{\IfNoValueF{#1}{_{#1}}}
    \IfNoValueTF{#2}{\operatorname{Cov}\subscript}{\operatorname{Cov\subscript{@}}\!#2}
    }
}

\DeclareNum{\greedydecay}{\gamma}
\DeclareVec{\othertruepositives}{a}
\DeclareVec{\otherpredictedpositives}{b}
\DeclareVec{\otherfalsepositives}{d}
\DeclareVec{\otherfalsenegatives}{e}

\DeclareVec{\examplevec}{v}
\DeclareMat{\examplemat}{Y}
\DeclareNum{\examplematentry}{y}
\DeclareVec{\examplecol}{y}
\DeclareVec{\examplerow}{y}

\DeclareMethod{\infmrec}{MRec}
\DeclareMethod{\infbca}{BCPre}
\DeclareMethod{\infbcc}{BCCov}
\DeclareMethod{\infgcov}{GCov}
\DeclareMethod{\infgpre}{GPre}
\DeclareMethod{\infbcf}{BCF}

\DeclareMethod{\lightxml}{LightXML}
\DeclareMethod{\plt}{PLT}
\DeclareMethod{\infmonoweight}{Weighted-Top-K}
\DeclareMethod{\inftop}{Top}
\DeclareMethod{\inftopk}{Top-K}
\DeclareMethod{\infpstopk}{PS-K}
\DeclareMethod{\infpower}{Pow-K}
\DeclareMethod{\inflog}{Log-K}
\DeclareMethod{\infmacr}{Macro-R$_{\text{prior}}$}
\DeclareMethod{\infbcmacp}{Macro-P$_{\text{bca}}$}
\DeclareMethod{\infbcmacr}{Macro-R$_{\text{bca}}$}
\DeclareMethod{\infbcmacpl}{Macro-Prec$_{\text{bca}}$}
\DeclareMethod{\infbcmacrl}{Macro-Rec$_{\text{bca}}$}
\DeclareMethod{\infbcmacf}{Macro-F1$_{\text{bca}}$}
\DeclareMethod{\infbccov}{Cov$_{\text{bca}}$}
\DeclareMethod{\infgreedymacp}{Macro-P$_{\text{greed}}$}
\DeclareMethod{\infgreedymacf}{Macro-F1$_{\text{greed}}$}
\DeclareMethod{\infgreedycov}{Cov$_{\text{greed}}$}

\DeclareDataset{\eurlex}{Eurlex-4K}
\DeclareDataset{\wiki}{Wiki-31K}
\DeclareDataset{\amazoncatsmall}{AmazonCat-13K}
\DeclareDataset{\deliciouslarge}{DeliciousLarge-200K}
\DeclareDataset{\wikilshtc}{WikiLSHTC-325K}
\DeclareDataset{\wikipedia}{WikipediaLarge-500K}
\DeclareDataset{\amazon}{Amazon-670K}
\DeclareDataset{\amazonlarge}{Amazon-3M}

\usepackage[dvipsnames]{xcolor}

\DeclareVec{\failureprobvec}{f}

\DeclareSubScript{truepossub}{t}
\DeclareSubScript{predpossub}{q}
\DeclareSubScript{condpossub}{p}
\DeclareModifier{\fortp}{truepossub}
\DeclareModifier{\forpp}{predpossub}
\DeclareModifier{\forcp}{condpossub}
\DeclareFun{\funf}{f}
\newcommand\ftruepos{f_{\textup{t}}}
\newcommand\fpredpos{f_{\textup{q}}}
\newcommand\fcondpos{f_{\textup{p}}}

\DeclareNum{\lipschitz}{L}
\DeclareModification{\Tconst}{truepossub}{\lipschitz}
\DeclareModification{\Qconst}{predpossub}{\lipschitz}
\DeclareModification{\Pconst}{condpossub}{\lipschitz}

\NewDocumentCommand{\smashsum}{e{_^}}{
    \mathop{\vphantom{\sum}\smash{\sum_{\IfValueTF{#1}{#1}{}}^{\IfValueTF{#2}{#2}{}}}}
}
\NewDocumentCommand{\smashprod}{e{_^}}{
    \mathop{\vphantom{\prod}\smash{\prod_{\IfValueTF{#1}{#1}{}}^{\IfValueTF{#2}{#2}{}}}}
}

\newcommand{\oilosses}{\mathcal{L}_{\mathrm{OI}}}
\newcommand{\transposesup}{{\mkern-1mu\mathsf{T}}}
\DeclareFun{\permutationgroup}[fraktur]{S}

\newcommand{\selecttopk}{\text{select-top-$k$}}

\usepackage{csquotes}
\usepackage{pgfplotstable}
\usepackage{enumitem}
\usepackage{xspace}
\usepackage{siunitx}
\usepackage{amsmath}
\usepackage{amssymb}
\usepackage{mathtools}
\usepackage{amsthm}
\usepackage{xcolor}
\usepackage{colortbl}
\usepackage{comment}
\usepackage{wrapfig}
\usepackage{thmtools} 
\usepackage{thm-restate}

\usepackage{cleveref}       %

\usepackage{algorithm}
\usepackage[noend]{algpseudocode}

\algrenewcommand\algorithmicindent{1.2em}%
\makeatletter
\expandafter\patchcmd\csname\string\algorithmic\endcsname{\itemsep\z@}{\itemsep=0.1pt}{}{}
\makeatother

\usepackage{etoolbox}
\makeatletter
\patchcmd{\hyper@makecurrent}{%
    \ifx\Hy@param\Hy@chapterstring
        \let\Hy@param\Hy@chapapp
    \fi
}{%
    \iftoggle{inappendix}{%
        \@checkappendixparam{chapter}%
        \@checkappendixparam{section}%
        \@checkappendixparam{subsection}%
        \@checkappendixparam{subsubsection}%
        \@checkappendixparam{paragraph}%
        \@checkappendixparam{subparagraph}%
    }{}%
}{}{\errmessage{failed to patch}}

\newcommand*{\@checkappendixparam}[1]{%
    \def\@checkappendixparamtmp{#1}%
    \ifx\Hy@param\@checkappendixparamtmp
        \let\Hy@param\Hy@appendixstring
    \fi
}
\makeatletter

\newtoggle{inappendix}
\togglefalse{inappendix}

\apptocmd{\appendix}{\toggletrue{inappendix}}{}{\errmessage{failed to patch}}

\theoremstyle{plain}
\newtheorem{theorem}{Theorem}[section]

\newtheorem{lemma}[theorem]{Lemma}

\theoremstyle{definition}
\newtheorem{definition}[theorem]{Definition}

\theoremstyle{remark}

\usepackage[textsize=tiny]{todonotes}

\newcommand{\ourtitle}{Generalized test utilities for long-tail performance in extreme multi-label classification}
\title{\ourtitle}

\author{
    Erik Schultheis \\
    Aalto University\\
    Helsinki, Finland \\
	\texttt{erik.schultheis@aalto.fi} \\
	\And
	Marek Wydmuch \\
	Poznan University of Technology \\
	Poznan, Poland \\
	\texttt{mwydmuch@cs.put.poznan.pl} \\
    \And
    Wojciech Kotłowski \\
	Poznan University of Technology \\
	Poznan, Poland \\
	\texttt{wkotlowski@cs.put.poznan.pl} \\
    \And
    Rohit Babbar \\
    University of Bath / Aalto University\\
    Bath, UK / Helsinki, Finland \\
	\texttt{rb2608@bath.ac.uk} \\
    \And
    Krzysztof Dembczyński \\
	Yahoo! Research / Poznan University of Technology\\
	New York, USA / Poznan, Poland \\
	\texttt{krzysztof.dembczynski@yahooinc.com} \\
}

\hypersetup{
pdftitle={\ourtitle},
pdfauthor={Erik Schultheis, Marek Wydmuch, Wojciech Kotłowski, Rohit Babbar, Krzysztof Dembczyński},
}

\begin{document}
\maketitle

\begin{abstract}
Extreme multi-label classification (XMLC) is the task of selecting 
a small subset of relevant labels from a very large set of possible labels. 
As such, it is characterized by long-tail labels, 
i.e., most labels have very few positive instances. 
With standard performance measures such as precision@k, 
a classifier can ignore tail labels and still report good performance.
However, it is often argued that correct predictions in the tail are more \textquote{interesting} or \textquote{rewarding,}  
but the community has not yet settled on a metric capturing this intuitive concept. 
The existing propensity-scored metrics fall short on this goal 
by confounding the problems of long-tail and missing labels. 
In this paper, we analyze generalized metrics budgeted ``at k'' as an alternative solution. 
To tackle the challenging problem of optimizing these metrics, 
we formulate it in the \emph{expected test utility} (ETU) framework,
which aims to optimize the expected performance on a fixed test set.
We derive optimal prediction rules and construct computationally efficient approximations 
with provable regret guarantees and robustness against model misspecification.
Our algorithm, based on block coordinate ascent, 
scales effortlessly to XMLC problems and obtains promising results in terms of long-tail performance. 
\end{abstract}

\section{Introduction}

Extreme multi-label classification (XMLC) is a challenging task 
with a wide spectrum of real-life applications, 
such as tagging of text documents~\citep{Dekel_Shamir_2010}, 
content annotation for multimedia search~\citep{Deng_et_al_2011}, 
or different type of recommendation~\citep{
Beygelzimer_et_al_2009b,Agrawal_et_al_2013,Prabhu_Varma_2014,
Weston_et_al_2013,Zhuo_et_al_2020,Medini_et_al_2019,Chang_et_al_2020}. 
Because of the nature of its applications, 
the typical approach in XMLC is to predict exactly $k$ labels
(e.g., corresponding to $k$ slots in the user interface)
which optimize a standard performance metric such as precision %
or (normalized) discounted cumulative gain. %
Given the enormous number of labels in XMLC tasks, 
which can reach millions or more, 
it is not surprising that many of them are very sparse, and hence
make the label distribution strongly long-tailed
\cite{babbar2017dismec}. It has been noticed that algorithms can
achieve high performance on the standard metrics, but never predict
any tail labels~\citep{Wei_and_Li_2020}. Therefore, there is a need to
develop a metric that prefers \textquote{rewarding}~\citep{ye2020pretrained},
\textquote{diverse}~\citep{babbar2019data}, and \textquote{rare and
informative}~\citep{prabhu2018extreme} labels over frequently-occurring head
labels. Currently, the XMLC community attempts to capture this need using
\emph{propensity-scored} performance metrics~\citep{Jain_et_al_2016}. These
metrics give increased weight to tail labels, but have been derived from the
perspective of missing labels, and as such they are not really solving the
problem of tail labels~\citep{schultheis2022missing}.

\begin{table}
\caption{Performance measures (\%) on AmazonCat-13k of a classifier trained on the full set of labels and a classifier trained with only 1k head labels.}
\centering
\footnotesize
\pgfplotstabletypeset[
every head row/.style={
  before row={\toprule Metric& \multicolumn{3}{c|}{full labels}& \multicolumn{6}{c}{head labels}\\},
  after row={\midrule}},
columns={Metric,K1,K3,K5,HK1,HK1diff,HK3,HK3diff,HK5,HK5diff},
every row no 2/.style={after row=\midrule},
every last row/.style={after row=\bottomrule},
columns/Metric/.style={string type, column type={l|}, column name={}},
columns/K1/.style={column name={@1},string type,column type={r}},
columns/K3/.style={column name={@3},string type,column type={r}},
columns/K5/.style={column name={@5},string type,column type={r|}},
columns/HK1/.style={column name={@1},string type,column type={r@{ }}},
columns/HK1diff/.style={column name={diff.},string type,column type={l},column type/.add={>{\scriptsize(}}{<{)}}},
columns/HK3/.style={column name={@3},string type,column type={r@{ }}},
columns/HK3diff/.style={column name={diff.},string type,column type={l},column type/.add={>{\scriptsize(}}{<{)}}},
columns/HK5/.style={column name={@5},string type,column type={r@{ }}},
columns/HK5diff/.style={column name={diff.},string type,column type={l},column type/.add={>{\scriptsize(}}{<{)}}},
]{tables/data/full_vs_head_metrics_plt_str.txt}
\label{tab:measures-acat-13k}
\end{table}
In \autoref{tab:measures-acat-13k} we compare different metrics for budgeted
at $k$ predictions. We train a PLT model~\citep{Jasinska-Kobus_et_al_2020}
on the full \amazoncatsmall dataset~\citep{mcauley2015inferring} and a reduced
version with the \num{1000} most popular labels only. 
The test is performed for both models on the full set of labels.
The standard metrics are only slightly perturbed by reducing
the label space to the head labels. This holds even for propensity-scored
precision, which decreases by just 1\%-20\% despite discarding over 90\% of the
label space. In contrast, macro measures and coverage decrease between 60\% and
90\% if tail labels are ignored.
These results show that budgeted-at-$k$ macro measures might be very attractive in the context of long tails. 
Macro-averaging treats all the labels equally important,
preventing the labels with a small number of positive examples to be ignored.
Furthermore, the budget of $k$ labels "requires"
the presence of long-tail labels in a compact set of predicted labels.

While we can easily use these measures to evaluate and compare different methods,
we also would like to make predictions that directly optimize these metrics. 
The existing approaches to macro-averaged metrics consider the unconstrained case, 
in which label-wise optimization is possible~\citep{
Yang_2001,dembczynski2013optimizing,Koyejo_et_al_2015,Jasinska_et_al_2016,Kotlowski_Dembczynski_2017,Lin_Lin_2023}.
Each binary problem can be then solved under one of two frameworks for optimizing complex performance measures,
namely \emph{population utility} (PU) or \emph{expected test utility} (ETU)~\citep{Ye_et_al_2012, Dembczynski_et_al_2017}. 
The former aims at optimizing the performance on the population level. 
The latter optimizes the performance directly on a given test set. 
Interestingly, in both frameworks 
the optimal solution is based on thresholding conditional label probabilities~\citep{Dembczynski_et_al_2017}, 
but the resulting thresholds are different with the discrepancy diminishing with the size of the test set. 
The threshold tuning for PU is usually performed on a validation set~\citep{Yang_2001,Lin_Lin_2023}, 
while the exact optimization for ETU is performed on a test set. 
It requires cubic time in a general case and quadratic time 
in some special cases~\citep{Ye_et_al_2012,Natarajan_et_al_2016}. 
Approximate solutions can be obtained in linear time~\citep{lewis95,Dembczynski_et_al_2017}. 

These approaches cannot be directly applied if prediction of exactly $k$ labels for each instance is required.
In such case, the optimization problems for different labels are tightly coupled through this constraint, 
making the final problem much more difficult.
Despite the fact that optimization of complex performance metrics is a well-established problem, 
considered not only in binary and multi-label classification as discussed above, 
but also in multi-class classification~\citep{Narasimhan_et_al_2015, Narasimhan_et_al_2022}, 
the results presented in this paper go beyond the state-of-the-art as 
budgeted-at-$k$ predictions have not yet been analyzed in this context.
Let us underline that the requirement of $k$ predictions is natural for recommendation systems, 
in which exactly $k$ slots are available in the user interface to display recommendations. 
Even in situations where this does not apply, 
requiring the prediction to be \textquote{at $k$} can be advantageous, 
as it prevents trivial solutions such as predicting nothing (for precision) or everything (for recall).\looseness=-1

In this paper, we investigate optimal solutions for the class of utility
functions that can be linearly decomposed over labels into binary utilities,
which includes both instance-wise weighted measures and macro-averages. 
We solve the problem in the ETU framework
which is well-suited, for example, to recommendation tasks 
in which recommendations for all users or items are rebuilt in regular intervals. 
In this case, we can first obtain probability estimates 
of individual labels for each instance in the test set, 
and then provide optimal predictions for a given metric based on these estimates.
We derive optimal prediction rules and construct computationally efficient approximations 
with provable guarantees, formally quantifying the influence of the estimation error of the label probabilities 
on the suboptimality of the resulting classifier. This result is expressed in the form of a regret bound~\citep{bartlett2006convexity, Narasimhan_et_al_2015, Kotlowski_Dembczynski_2017, Dembczynski_etal_ICML2017}. 
It turns out that for most metrics of interest, 
a small estimation error results in at most a small drop of the performance, 
which confirms our method is viable for applications.
Our general algorithm, based on block coordinate ascent, scales effortlessly to XMLC problems 
and obtains promising empirical results.

\section{Setup and notation}
\label{sec:setup}

Let $\rinstance \in \instancespace$ denote an input instance, 
and $\rlabels \in \set{0,1}^{\numlabels} \eqqcolon \labelspace$ 
the vector indicating the relevant labels, 
distributed according to $\datadistribution(\rlabels | \rinstance)$. 
We consider the prediction problem in the \emph{expected test utility} (ETU) framework,
that is, we assume that we are given
a known set of $\numinstances$ instances $\sinstancematrix =
[\sinstance_1,\ldots,\sinstance_{\numinstances}]^\transposesup \in
\instancespace^{\numinstances}$ with unknown labels, on which we have to make
predictions.%
\footnote{We use calligraphic letters for sets $\TypesetWithStyle{set}{S}$, 
bold font for vectors $\examplevec$ with entries $\examplevec[i]$, 
bold capital letters for matrices $\examplemat$ with entries $\examplematentry_{ij}$, 
rows $\examplerow_i$, and columns $\examplecol_{:j}$. 
$\indicator{S}$ denotes the indicator of event $S$, and
$\intrange{s} \coloneqq \set{1, \ldots, s}$} 
Our goal is to assign each instance
$\sinstance_i$ a set of exactly $k$ (out of $\numlabels$) labels represented as a
$k$-hot vector $\spreds_i \in \labelspace_k \coloneqq \set{\slabels \in
\labelspace \;\colon\; \|\slabels\|_1 = k}$, and we let $\smash{\spredmatrix =
[\spreds_1,\ldots,\spreds_{\numinstances}]^\transposesup}$ denote the entire
$\numinstances \times \numlabels$ prediction matrix for a set of instances
$\sinstancematrix$. 

In the ETU framework, we treat $\sinstancematrix$ as given and only make an
assumption about the labeling process for the test sample: the labels
$\rlabels_i \in \labelspace$ corresponding to $\rinstance_i \in
\instancespace$ do not depend on any other instances, %
that is $\datadistribution(\slabelmatrix | \sinstancematrix) =
\prod_{i=1}^{\numinstances}\datadistribution(\slabels_i|\sinstance_i)$, 
where we use 
$\slabelmatrix = [\slabels_1,\ldots,\slabels_{\numinstances}]^\transposesup \in \labelspace^{\numinstances}$ 
to denote the entire label matrix. 
We assume the quality of predictions {$\spredmatrix$} is jointly evaluated against
the observed labels $\slabelmatrix$ 
by a \emph{task utility} $\taskloss(\slabelmatrix, \spredmatrix)$,
and define the \emph{optimal (Bayes) prediction} $\optpredmatrix$ as the one 
maximizing the \emph{expected task utility} $\eturisk$:
\begin{equation}
    \optpredmatrix = \argmax_{\spredmatrix \in \labelspace_k^{\numinstances}} \expectation*_{\,\rlabelmatrix|\sinstancematrix}{\taskloss(\rlabelmatrix, \spredmatrix)} \eqqcolon \argmax_{\spredmatrix \in \labelspace_k^{\numinstances}} \eturisk(\spredmatrix)\,.
\label{eq:bayes_predictor}
\end{equation}

We consider task utilities $\taskloss{}(\slabelmatrix, \spredmatrix)$ that linearly decompose over labels, 
i.e., there exists $\lossfnc^j$ such that
\begin{equation}
    \taskloss(\slabelmatrix, \spredmatrix) = \sum_{j=1}^{\numlabels} \lossfnc^j(\labelscol[j], \predscol[j])  \,.
    \label{eq:linearly_decomposed}
\end{equation} 
We allow the functions $\lossfnc^j$ to be non-linear themselves and
different for each label $j$. This is a large class of functions, which
encompasses weighted instance-wise and macro-averaged utilities,
the two groups of functions which we thoroughly analyze in the next sections.

Let us next define the binary confusion matrix $\confmat(\slabels, \spreds)$ 
for a vector $\slabels$ of $n$ ground truth labels and a corresponding vector $\spreds$ of binary predictions:%
\begin{equation}
    \confmat(\slabels, \spreds) \coloneqq 
    \begin{pmatrix}
        \frac{1}{n} \sum_{i=1}^n (1 -\slabels[i]) (1 - \spreds[i]) & \frac{1}{n} \sum_{i=1}^n (1-\slabels[i]) \spreds[i]  \\
        \frac{1}{n} \sum_{i=1}^n \slabels[i]  (1 - \spreds[i]) & \frac{1}{n} \sum_{i=1}^n \slabels[i]  \spreds[i]  
    \end{pmatrix} 
    \,.
\end{equation}
By indexing from 0, the entry $\confmatentry_{00}$ corresponds to true negatives, 
$\confmatentry_{01}$ to false positives, 
$\confmatentry_{10}$ to false negatives, 
and $\confmatentry_{11}$ to true positives.\looseness=-1 %
We define the \emph{multi-label confusion tensor}\footnote{Notice that
the confusion matrix can be computed either for multi-label predictions for a
given instance $\sinstance$ or binary predictions for label $j$ obtained on a
set of instances $\sinstancematrix$. In the following, we focus on the
latter.}
$\confmatvec(\slabelmatrix, \spredmatrix) \coloneqq [\confmat(\labelscol[1], \predscol[1]), \ldots, \confmat(\labelscol[\numlabels], \predscol[\numlabels])]$ 
being the concatenation of binary confusion matrices of all $m$ labels.

Assuming the utility function \eqref{eq:linearly_decomposed} to be invariant under instance reordering, 
i.e., its value does not change if rows of both matrices are re-ordered using the same permutation,
we can define $\taskloss$ in terms of confusion matrices, instead of ground-truth labels and predictions (shown in \autoref{sec:cm-metrics}):
\begin{equation}
    \taskloss(\slabelmatrix, \spredmatrix) = \taskloss(\confmatvec(\slabelmatrix, \spredmatrix)) = \sum_{j=1}^{\numlabels} \lossfnc^j(\confmat(\labelscol[j], \predscol[j])) 
     \,.
    \label{eq:linearly_decomposed_invariant}
\end{equation}

Finally, we assume that we have access to a \emph{label probability estimator (LPE)} $\predmarginals(\sinstance)$
that estimates the marginal probability of each label given the instance,
$\marginals(\sinstance) = (\marginal_1(\sinstance), \ldots, \marginal_{\numlabels}(\sinstance)) \coloneqq \expectation_{\rlabels | \sinstance}{\rlabels}$. 
Such an LPE can be attained by fitting a model on an additional training set of $\numinstances'$ examples
$(\sinstance_{i}, \slabels_{i})_{i=1}^{\numinstances'}$ using a proper composite loss
function~\citep{Reid_and_Williamson_2010}, which is a common approach in XMLC, e.g.,
\cite{jiang_lightxml_2021}.

\section{Performance measures for tail labels}

\subsection{Instance-wise weighted utility functions}

By assigning utility (or cost) to each correct/wrong prediction for each label, 
we can construct an \emph{instance-wise weighted utility} 
$\defmap{\weightedloss}{\labelspace \times \labelspace}{\nnreals}$ 
with labels $\slabels \in \labelspace$ and 
predictions $\spreds \in \labelspace$ as
\begin{equation}
    \weightedloss(\slabels, \spreds) = 
    \sum_{j = 1}^{\numlabels}
     \poslossweight_{00}^j (1 - \slabels[j])(1 - \spreds[j]) + 
     \poslossweight_{01}^j (1 - \slabels[j])\spreds[j] + 
     \poslossweight_{10}^j \slabels[j](1 - \spreds[j]) + 
     \poslossweight_{11}^j \slabels[j]\spreds[j] \,.
     \label{eq:general-weighted-instance-wise}
\end{equation}
We use $\smash{\poslossweight_{00}^j}$, $\smash{\poslossweight_{01}^j}$, $\smash{\poslossweight_{10}^j}$, and $\smash{\poslossweight_{11}^j}$
to express the utility of true negatives, false positives, false negatives, and true positives, respectively. 
The corresponding task loss results from summing over all instances. By interchanging the order of summation,
we can see that it is of form \eqref{eq:linearly_decomposed_invariant}:
\begin{equation}
    \taskloss(\slabelmatrix,\! \spredmatrix)
    =\sum_{i=1}^{\numinstances} \!\weightedloss(\slabels_i, \spreds_i) 
    =\sum_{j=1}^{\numlabels}
    \poslossweight_{00}^j \confmatentry_{00}^j \!+\!
    \poslossweight_{01}^j \confmatentry_{01}^j \!+\!
    \poslossweight_{10}^j \confmatentry_{10}^j \!+\!
    \poslossweight_{11}^j \confmatentry_{11}^j
    =\sum_{j=1}^{\numlabels} \!\lossfnc^j(\confmat(\labelscol[j], \predscol[j])) \,.
    \label{eq:instance-wise-as-linear}
\end{equation}
In other words, the instance-wise weighted utilities can be seen as 
\emph{linear} confusion-based metrics.
By choosing $\smash{\poslossweight_{00}^j = \poslossweight_{11}^j = \numlabels^{-1}}$ and $\smash{\poslossweight_{01}^j = \poslossweight_{10}^j = 0}$, 
the expression \eqref{eq:general-weighted-instance-wise} reduces to the $@k$-variant of the \emph{Hamming utility}.
Similarly, $\smash{\poslossweight_{11}^j = k^{-1}}$ and $\smash{\poslossweight_{00}^j = \poslossweight_{01}^j = \poslossweight_{10}^j = 0}$ yield precision$@k$. 

Another example is the popular \emph{propensity-scoring} approach~\citep{Jain_et_al_2016}, 
commonly used as a tail-performance metric in XMLC.
Here, the weights are computed based on training data through
\begin{equation}
    \poslossweight_{11}^{j, \text{prop}} = k^{-1} \left(1 + (\log \numinstances' - 1)(b + 1)^a (\numinstances' \hat \pi_j + b)^{-a} \right),
    \label{eq:jpv-prop-model}
\end{equation}
where $\numinstances'$ is the number of training instances, $\hat \pi_j$ is the empirical prior of label $j$, 
and parameters $a$ and $b$ are (potentially) dataset-dependent.
This form of weighting has been derived from a missing-labels perspective, 
so its application to tail labels is not fully justified~\citep{schultheis2022missing}.
Also, it introduces two more hyperparameters, 
which makes the interpretation and comparison of its values rather difficult.
It is not less heuristical than other approaches like power-law or logarithmic weighting, given by: 
\begin{equation}
    \poslossweight_{11}^{j, \text{pl}} \propto \hat{\pi}_j^{-\beta}, \quad 
    \poslossweight_{11}^{j, \text{log}} \propto -\log\left(\hat{\pi}_j \right) \,. \label{eq:powerlaw-weighted}
\end{equation}

\subsection{Macro-average of non-decomposable utilities}

Macro-averaging usually concerns non-decomposable binary utilities such as the F-measure. 
In this case, we set up $\lossfnc^j(\cdot, \cdot) = \numlabels^{-1} \lossfnc(\cdot, \cdot)$ for all labels
$j$ in \eqref{eq:linearly_decomposed}, yielding:
\begin{equation}
    \taskloss(\slabelmatrix, \spredmatrix) = \numlabels^{-1} \sum_{j=1}^{\numlabels} \lossfnc(\labelscol[j], \predscol[j]) = \numlabels^{-1} \sum_{j=1}^{\numlabels} \lossfnc(\confmat(\labelscol[j], \predscol[j])) \,.
\end{equation}
By using $\confmatutility_{\text{pr}}(\confmat)
\coloneqq \confmatentry_{11} / (\confmatentry_{11} + \confmatentry_{01})$, 
this becomes \emph{macro-precision}, 
for $\confmatutility_{\text{rec}}(\confmat) \coloneqq
\confmatentry_{11} / (\confmatentry_{11} + \confmatentry_{10})$ 
we get \emph{macro-recall}, 
and for 
$\confmatutility_{\text{F}_{\beta}}(\confmat) \coloneqq (\beta + 1) \confmatentry_{11} /
((1+\beta)\confmatentry_{11} + \beta^2 \confmatentry_{10} + \confmatentry_{01})$ 
the \emph{macro-F-measure}.

Another measure that is promising for the evaluation of long-tailed
performance is \emph{coverage}. 
It is sometimes used as an auxiliary measure in XMLC~\citep{Jain_et_al_2016, babbar2019data, Wei_and_Li_2020, schultheis2022missing}. 
This metric detects for how many different labels
the classifier is able to make \emph{at least one} correct prediction. 
In our framework, this is achieved by using an indicator function on the true positives, 
$\confmatutility_{\text{cov}}(\confmat) \coloneqq \indicator{\confmatentry_{11} > 0}$.

\section{Optimal predictions}
\label{sec:optimal_predictions}

Let us start with the observation that the label probabilities $\marginals$ are
sufficient to make optimal predictions.
With the assumption that $\datadistribution(\slabelmatrix | \sinstancematrix) =
\prod_{i=1}^{\numinstances}\datadistribution(\slabels_i|\sinstance_i)$, 
we obtain (cf.~\autoref{app:eta-sufficient}):
\newcommand\dummylabels{\primed\slabels}
\begin{equation}
    \expectation*_{\rlabelmatrix \mid \rinstancematrix}{\taskloss(\rlabelmatrix, \spredmatrix)}
    = \sum_{j=1}^{\numlabels} \sum_{\dummylabels \in \{0,1\}^{\numinstances}} \Bigl(\prod_{i=1}^{\numinstances} \marginals[j](\sinstance_i) \dummylabels[i] + (1-\marginals[j](\sinstance_i)) (1-\dummylabels[i])  \Bigr) \lossfnc^j(\dummylabels, \predscol[j])\,.
    \label{eq:objective-based-on-eta}
\end{equation}
This equation lays out a daunting optimization task, 
as is requires summing over $2^{\numinstances}$ summands $\dummylabels$. 
In case of binary classification, 
there exist methods to solve the problem exactly in $\mathcal{O}(\numinstances^3)$, or in $\mathcal{O}(\numinstances^2)$ in some special cases~\citep{Natarajan_et_al_2016}.
By using \emph{semi-empirical} quantities (defined below),
\citep{Dembczynski_et_al_2017} provides an approximate algorithm that runs in $O(\numinstances)$.
Following this approach, we construct a semi-empirical ETU approximation. 
If this approximation results in a \emph{linear} function of the predictions, 
the problem decomposes over instances and can be solved easily.
Otherwise, we use an algorithm that leads to locally optimal predictions.
A minor modification of this algorithm can be used for coverage.

\subsection{Semi-empirical ETU approximation}
\label{subsec:ETU_approximation}

Since the entries of the confusion matrix are linearly dependent, 
it suffices to use three independent combinations.
More precisely, we parameterize the confusion matrix by the
true positives $\truepos = \confmatentry_{11}$,
predicted positives $\predpos = \confmatentry_{11} + \confmatentry_{01}$, 
and ground-truth positives $\condpos = \confmatentry_{11} + \confmatentry_{10}$, and
use $\trueposvec = (\truepos_1, \ldots, \truepos_m)$, 
$\predposvec = (\predpos_1, \ldots, \predpos_m)$, 
and $\condposvec = (\condpos_1, \ldots, \condpos_m)$
to reformulate the ETU objective:
\begin{equation}
    \eturisk(\predicted\slabelmatrix) 
    = \expectation*_{\rlabelmatrix \mid \rinstancematrix}{\taskloss(\confmatvec(\slabelmatrix, \spredmatrix))} 
    = \expectation*_{\rlabelmatrix \mid \rinstancematrix}{\taskloss(\trueposvec, \predposvec, \condposvec)} 
    = \expectation*_{\rlabelmatrix \mid \rinstancematrix}{\smash{\sum_{j=1}^{\numlabels} \lossfnc^j(\truepos_j, \predpos_j, \condpos_j )}\vphantom{\sum}}\,.
    \label{eq:etu-reparametrized}
\end{equation}

In order to compute $\eturisk$, one needs to take into account every possible combination of confusion-matrix values, and calculate the corresponding value of $\taskloss$, which is then averaged according to the respective probabilities.
A computationally easier approach is to take the expectation over the labels first,
leading to \emph{semi-empirical} quantities:
\begin{equation}
    \semiemp \trueposvec \coloneqq \expectation*_{\rlabelmatrix \mid \rinstancematrix}{\trueposvec}\,,  \quad 
    \semiemp \predposvec \coloneqq \expectation*_{\rlabelmatrix \mid \rinstancematrix}{\predposvec} = \predposvec\,, \quad
    \semiemp \condposvec \coloneqq \expectation*_{\rlabelmatrix \mid \rinstancematrix}{\condposvec} \,,
    \label{eq:semi-empirical}
\end{equation}
where $\semiemp \predposvec = \predposvec$ follows because the number of predicted positives
depends only on the predictions $\predicted\slabelmatrix$. This allows us to define
the semi-empirical ETU risk
\begin{equation}
\semieturisk(\predicted\slabelmatrix) \coloneqq \taskloss(\semiemp \trueposvec, \predposvec, \semiemp \condposvec)
\approx 
\expectation*_{\rlabelmatrix \mid \rinstancematrix}{\taskloss(\trueposvec, \predposvec, \condposvec)} = \eturisk(\predicted\slabelmatrix)
\,.
\label{eq:semi-etu}
\end{equation}
In particular, the third argument to $\taskloss$, $\semiemp \condposvec$,
is a constant that does not depend on predictions.

Note that, if $\taskloss$ is \emph{linear} in all arguments 
\emph{depending on the random variable} $\rlabelmatrix$, then the approximation is exact, due to the
linearity of expectations. Aside from instance-wise measures, which we showed to
be linear above, the approximation is also exact for the more general class of functions of the form
\begin{equation}
    \lossfnc(\truepos, \predpos, \condpos) = \ftruepos(\predpos) \cdot \truepos + \fpredpos(\predpos) + \fcondpos(\predpos) \cdot \condpos \,.
    \label{eq:general-exact-semietu-approx}
\end{equation}
An important example is macro-precision, with $\ftruepos(\predpos) = \predpos^{-1}$ and $\fpredpos = \fcondpos = 0$.
In the general case, $\semieturisk$ as a surrogate for $\eturisk$ leads only to $\mathcal{O}
(1/\sqrt{\numinstances})$ error as will be shown in \autoref{thm:etu_approximation}, while
substantially simplifying the optimization process.

\subsection{Linear confusion-matrix measures}
\label{subsec:linear-cmms}
We start the discussion on optimization of \eqref{eq:semi-etu} with a special case 
in which $\semieturisk$ is \emph{linear in the prediction-dependent arguments $\truepos, \predpos$}, 
that is, if
\begin{equation}
    \lossfnc(\truepos, \predpos, \condpos) = \ftruepos(\condpos) \cdot \truepos + \fpredpos(\condpos) \cdot \predpos \,,
    \label{eq:general-linear-cm}
\end{equation}
i.e., both $\ftruepos(\condpos)$ and $\fpredpos(\condpos)$ depend on $\condpos$ only, %
and $\fcondpos(\condpos)\cdot \condpos$ can be dropped as it is a constant.

Aside from instance-wise weighted utilities (cf.~\autoref{app:instance-wise-to-cm}), which are linear in all arguments,
this form also holds for weights dependent on the (empirical) label priors,
e.g., power law weights of the form 
$\ftruepos(\condpos) = \condpos^{-\alpha}$ and $\fpredpos = 0$, 
which reduce to macro-recall for $\alpha=1$. 
If one defines the weights with respect to externally determined label priors, 
i.e., approximations to $\expectation{\rlabels[j]}$, 
which are fixed and thus independent of the test sample $\sinstancematrix$, 
then the power-law metrics turn into instance-wise weighted utilities. \looseness=-1

From \eqref{eq:instance-wise-as-linear} we know that
we can reformulate the optimization problem using an instance-wise
weighted utility $\weightedloss$ with weights:
\begin{equation}
    \poslossweight_{11}^j = \ftruepos(\condpos_j) + \fpredpos(\condpos_j) \,, \quad
    \poslossweight_{01}^j = \fpredpos(\condpos_j) \,, \quad \poslossweight_{10}^j = 0  \,, \quad \poslossweight_{00}^j = 0 \,.
\end{equation}
Hence, the optimal predictions can be derived for each instance
$\sinstance \in \instancespace$ separately, leading to
\begin{equation}
    \optimal\spreds = {\textstyle\argmax_{\spreds \in \labelspace_k}} \expectation*_{\rlabels \mid \rinstance}{\weightedloss(\rlabels, \spreds)}. 
\end{equation}
Plugging in the definition of $\weightedloss$ from \eqref{eq:general-weighted-instance-wise}, 
and collecting terms, the expected loss is of the form
\begin{gather}
    \expectation*_{\rlabels \mid \rinstance}{\weightedloss(\rlabels, \spreds)}
     = \sum_{j = 1}^{\numlabels} \expectation*_{\rlabels \mid \rinstance}{ \poslossweight_{11}^j \rlabels[j] \spreds[j]  
     + \poslossweight_{01}^j(1 - \rlabels[j]) \spreds[j] }   %
     = \sum_{j = 1}^{\numlabels} \spreds[j] \left( \marginals[j](\sinstance) \ftruepos(\condpos_j) + \fpredpos(\condpos_j)\right) \,,
\end{gather}
where we call $\gains[j](\sinstance) \coloneqq \marginals[j](\sinstance) \ftruepos(\condpos_j) + \fpredpos(\condpos_j)$
the \emph{gain} of predicting label $j$ for a given instance $\sinstance$.
The optimal prediction is to select the $k$ labels with the largest values $\gains[j](\sinstance)$,
\begin{equation}
    \optimal\spreds 
    = \argmax_{\spreds \in \predictspace[k]} \sum_{j = 1}^{\numlabels} \, \gains[j](\sinstance) \spreds[j]
    = \selecttopk(\gains(\sinstance)) \,.
    \label{eq:linear-metrics-optimal-decision}
\end{equation}
Here, $\selecttopk(\gains)$ denotes an operation that maps a vector of scores to a binary vector that contains $1\mathrm{s}$ only at
the positions of the $k$ largest elements of $\gains$.

\subsection{General non-decomposable macro measures}
\label{sec:bca-algorithm}
\begin{wrapfigure}[17]{R}{0.5\textwidth}
\vspace{-24pt}
\begin{minipage}{0.5\textwidth}
\input{algorithms/bca-algorithm}
\end{minipage}
\end{wrapfigure}

Finally, we turn to the general problem for budgeted-at-$k$ macro-measures
based on non-linear $\taskloss$. 
As this can be a very hard discrete optimization problem in general, 
we use an iterative approach based on \emph{block-coordinate ascent} 
that constructs a sequence of predictions, $\spredmatrix{}^0, \spredmatrix{}^1, \ldots$, with non-decreasing
utility, so that we end up with a solution that is locally optimal.

Assume the predictions are fixed for all instances except $\sinstance_s$, where they
are given by $\ssolutionvec$. In that
case, we can  write the semi-empirical quantities from \eqref{eq:semi-empirical} as
\begin{equation}
    \semiemp \trueposvec = \tfrac{1}{n} \Bigl( \marginals[j](\sinstance_s) \ssolutionvec[j] + \;\sum_{\mathclap{i \in \intrange{\numinstances} \setminus \set{s}}}\; \marginals[j](\sinstance_s) \spreds[ij] \Bigr) \,,
\end{equation}
with analog expansions for $\predposvec$ and $\semiemp \condposvec$.
Plugging into \eqref{eq:semi-etu} leads to the following
optimization:
\begin{equation}
    \max_{\ssolutionvec \in \labelspace_k} \sum_{j=1}^{\numlabels} \confmatutility^j 
    \Bigl ( {\tfrac{1}{n}} \marginals[j](\sinstance_s) \ssolutionvec[j] + {\tfrac{1}{n}} \;\sum_{\mathclap{i \in \intrange{\numinstances} \setminus \set{s}}}\; \marginals[j](\sinstance_i) \spreds[ij], \,
    {\tfrac{1}{n}} \ssolutionvec[j] + {\tfrac{1}{n}} \;\sum_{\mathclap{i \in \intrange{\numinstances} \setminus \set{s}}}\; \spreds[ij], \,
    {\tfrac{1}{n}} \sum_{i = 1}^{\numinstances} \marginals[j](\sinstance_i)
     \Bigr) \,.
    \label{eq:inner-problem}
\end{equation}
As everything except $\ssolutionvec[j] \in \{0,1\}$ is given, 
we can interpret $\confmatutility^j$ as a linear function of $\ssolutionvec[j]$, 
and define a gain vector with elements $\gains[j] = \confmatutility^j(1) - \confmatutility^j(0)$.
The optimal prediction $\optimal \ssolutionvec$ is then given by
$
\optimal \ssolutionvec = \selecttopk(\gains)\,,
$
in a similar form as in case of linear confusion-matrix measures.
We get $\spredmatrix{}^{t+1}$ by replacing the $s$\textsuperscript{th} row of
$\spredmatrix{}^{t}$ with $\optimal \ssolutionvec$, and know that
${\semieturisk}(\spredmatrix{}^{t+1}) \geq {\semieturisk}(\spredmatrix{}^{t})$. 
Then we switch to the next instance $s \gets s+1$, 
and repeat this process until no more progress is made.

The algorithm starts by predicting $k$ random labels for each instance. 
To speed up the computations we cache two quantities, $\semiemp\truepos^{t}_j$ and $\predpos^{t}_j$, for each label:
\begin{equation}
\begin{aligned}
\semiemp \truepos^0_j & \textstyle \coloneqq \frac{1}{n} \sum_{i=1}^n \marginals[j](\sinstance_i) \spreds[ij]^0 \,, \quad \quad & 
\semiemp \truepos^{t+1}_j &\textstyle \coloneqq \semiemp \truepos^t_j +  \frac{1}{n}\marginals[j](\sinstance_s) (\spreds[sj]^{t+1} -\spreds[sj]^t) \,, \\[2pt]
\predpos^0_j &\textstyle \coloneqq \frac{1}{n} \sum_{i=1}^n \spreds[ij]^0 \,, \quad \quad & 
\predpos^{t+1}_j &\textstyle \coloneqq \predpos^t_j +  \frac{1}{n} \left( \spreds[sj]^{t+1} -\spreds[sj]^t \right) \,. 
\end{aligned}
\label{eq:update-of-confmat}
\end{equation}
With this, we can compute \eqref{eq:inner-problem} in $\mathcal{O}(\numlabels)$ time using the following formulas:
\begin{equation}
\begin{array}{l}
    \confmatutility^j(1) \coloneqq \confmatutility^j \left(\semiemp\truepos^t_j +  {\textstyle \frac{1}{n}}\marginals[j](\sinstance_s) (1 - \spreds[sj]^t),\, \predpos^t_j + {\textstyle \frac{1}{n}}(1 - \spreds[sj]^t),\, \semiemp \condpos_j \right), \, \\[2pt]
    \confmatutility^j(0) \coloneqq \confmatutility^j \left(\semiemp\truepos^t_j -  {\textstyle \frac{1}{n}}\marginals[j](\sinstance_s) \spreds[sj]^t,\, \predpos^t_j - {\textstyle \frac{1}{n}}\spreds[sj]^t,\, \semiemp \condpos_j \right) .
\end{array}
\end{equation}

The block coordinate ascent (BCA) procedure is shown in more detail in \autoref{alg:bca}. 
Obviously, the actual implementation cannot use the unknown values $\marginals$, 
but instead has to rely on the LPE estimates
$\empirical \marginals(\sinstance_i)$.
The stopping criterion for the algorithm is whether, after going over a whole set of instances, the improvement in the objective value over the previous value is lower than $\epsilon$,
which ensures that the algorithm terminates for every bounded utility in $\mathcal{O}(1/\epsilon)$.
In practice, even with small $\epsilon$, the algorithm usually terminates after a few iterations.
The time and space complexity of the single iteration are both $\mathcal{O}(\numinstances\numlabels)$.
If $\confmatutility$ is a linear function, 
corresponding to an instance-wise weighted utility~\eqref{eq:instance-wise-as-linear} such as macro-recall,
the algorithm recovers the optimal solution in the first iteration, stopping after the second.\looseness=-1

In general, \autoref{alg:bca} requires multiple iterations. 
This can be computationally expensive and requires all of the data to be available at once.
We thus propose a greedy algorithm that takes into account only statistics of \emph{previously seen} instances
while performing a \emph{single pass} over the dataset, which allows for semi-online optimization. The greedy algorithm is outlined in \autoref{sec:greedy-algortithm}.

\subsection{Optimization of coverage}
\label{sec:etu-coverage-short}

One measure for which the ETU approximation is not exact is coverage
as $\confmatutility_{\text{cov}}(\truepos, \predpos, \condpos) \coloneqq \indicator{\truepos > 0}$ is \emph{nonlinear}.
In this case, we can do better than \autoref{alg:bca}, by reformulating 
$\eturisk$ for coverage as
\begin{equation}
    \label{eq:etu-cov}
    \eturisk(\predicted\slabelmatrix) = 
    \expectation*_{\rlabelmatrix \mid \rinstancematrix}{
        \numlabels^{-1} \smashsum_{j=1}^{\numlabels} \indicator{\truepos_j > 0}
    } = 1 - \numlabels^{-1} \sum_{j=1}^{\numlabels} \prod_{i=1}^{\numinstances} \left(1 - \marginals[j](\sinstance_i) \spreds[ij] \right) \,,
\end{equation}
and performing block coordinate ascent directly on this expression, as detailed in \autoref{sec:etu-coverage}.

\section{Regret bounds}

Computing optimal predictions relies on an access to the conditional marginal probabilities $\marginals(\sinstance)$.
In practice, however, $\marginals(\sinstance)$ are unknown, and are replaced by the LPE $\predmarginals(\sinstance)$
to do inference (\emph{plug-in} approach). Furthermore, we replaced
the ETU objective $\eturisk$ with an approximation $\semieturisk$.
As generally $\predmarginals(\sinstance) \neq \marginals(\sinstance)$ and $\eturisk \neq \semieturisk$, this
procedure may result in sub-optimal predictions, the errors of which we would like to control.

If we are able to control the change of the utility under small changes in its arguments, 
we can show that the suboptimality of the plug-in predictor relative to the optimal predictor (called \emph{$\taskloss$-regret})
decreases with increasing test size $\numinstances$ and decreasing probability estimation error. 

To this end we assume that each $\lossfnc^j$ is a \emph{$\condpos$-Lipschitz} function, which allows us to bound the approximation error.
\begin{restatable}[$\condpos$-Lipschitz \citep{Dembczynski_etal_ICML2017}]{definition}{plippschitz}
    A binary classification metric $\lossfnc(\truepos,\predpos,\condpos)$ is said to be $p$-Lipschitz if
    \begin{equation}
        |\lossfnc(\truepos,\predpos,\condpos)-\lossfnc(\truepos',\predpos',\condpos')| \leq \Tconst(\condpos) |\truepos - \truepos'| + \Qconst(\condpos)|\predpos - \predpos'| + \Pconst(\condpos) |\condpos - \condpos'|,
    \end{equation}
    for any $\predpos,\predpos' \in \closedinterval{0}{1}$, $\condpos,\condpos' \in \openinterval{0}{1}$, $0 \leq \truepos \leq \min(\condpos, \predpos)$, and $0 \leq \truepos' \leq \min(\condpos', \predpos')$. The constants $\Tconst(\condpos), \Qconst(\condpos), \Pconst(\condpos)$ are allowed to depend on
    $\condpos$, in contrast to the standard Lipschitz functions.
\end{restatable}
As shown in~\autoref{app:regret_bound}, most of metrics
of interest satisfy $\condpos$-Lipschitz assumption, including the linear confusion-matrix measures \eqref{eq:instance-wise-as-linear} with fixed weights (e.g., Hamming utility, precision), macro-recall, macro-F-measure, etc., with macro-precision and coverage being notable exceptions.

\begin{restatable}{theorem}{etuthm}
\label{thm:etu_approximation}
Let each $\lossfnc^j$ be $p$-Lipschitz with constants $\Tconst^j(\condpos),\Qconst^j(\condpos),\Pconst^j(\condpos)$. For any $\spredmatrix$
it holds:
\begin{equation}
\left|\eturisk(\predicted\slabelmatrix; \sinstancematrix)-
\semieturisk(\predicted\slabelmatrix; \sinstancematrix)\right|
\le \frac{1}{2\sqrt{\numinstances}} \left( \smashsum_{j=1}^{\numlabels} (\Tconst^j(\semiemp\condpos_j) + \Pconst^j(\semiemp\condpos_j))\right) \,.
\end{equation}
\end{restatable}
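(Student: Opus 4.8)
The plan is to bound the difference label-by-label, apply the $\condpos$-Lipschitz property within each label to reduce to expected deviations of the true and ground-truth positives, and finally control those deviations by an elementary variance computation.

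First I would use that both risks decompose additively over the $\numlabels$ labels: $\eturisk(\spredmatrix) = \sum_{j=1}^{\numlabels} \expectation*_{\rlabelmatrix \mid \rinstancematrix}{\lossfnc^j(\truepos_j,\predpos_j,\condpos_j)}$, whereas $\semieturisk(\spredmatrix) = \sum_{j=1}^{\numlabels} \lossfnc^j(\semiemp\truepos_j,\predpos_j,\semiemp\condpos_j)$, where I have used $\semiemp\predpos_j = \predpos_j$ from \eqref{eq:semi-empirical}. By the triangle inequality it then suffices to bound each term $|\expectation*_{\rlabelmatrix \mid \rinstancematrix}{\lossfnc^j(\truepos_j,\predpos_j,\condpos_j)} - \lossfnc^j(\semiemp\truepos_j,\predpos_j,\semiemp\condpos_j)|$ separately and sum.

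Next, for a fixed label $j$, I would move the absolute value inside the expectation (Jensen, as $\lossfnc^j(\semiemp\truepos_j,\predpos_j,\semiemp\condpos_j)$ is deterministic) and apply the $\condpos$-Lipschitz inequality. The key bookkeeping point is the \emph{orientation}: I place the deterministic triple $(\semiemp\truepos_j,\predpos_j,\semiemp\condpos_j)$ as the first argument of the definition, so that the constants are evaluated at the fixed value $\semiemp\condpos_j$ and therefore pull out of the expectation. Since $\predpos_j = \frac{1}{n}\sum_i \spreds[ij]$ is deterministic given $\spredmatrix$, the middle arguments coincide and the $\Qconst^j$ term vanishes, leaving the per-label bound $\Tconst^j(\semiemp\condpos_j)\,\expectation*_{\rlabelmatrix \mid \rinstancematrix}{|\truepos_j - \semiemp\truepos_j|} + \Pconst^j(\semiemp\condpos_j)\,\expectation*_{\rlabelmatrix \mid \rinstancematrix}{|\condpos_j - \semiemp\condpos_j|}$. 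Here I should also check the domain hypotheses of the definition, namely $0 \le \truepos_j \le \min(\condpos_j,\predpos_j)$ and its semi-empirical analogue, which hold because true positives never exceed either ground-truth or predicted positives.

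Finally I would bound each expected deviation by a standard deviation via $\expectation{|Z|} \le \sqrt{\expectation{Z^2}}$, applied to the centered variables $\condpos_j - \semiemp\condpos_j = \frac{1}{n}\sum_i(\rlabels[ij] - \marginals[j](\sinstance_i))$ and $\truepos_j - \semiemp\truepos_j = \frac{1}{n}\sum_i \spreds[ij](\rlabels[ij] - \marginals[j](\sinstance_i))$. By the product assumption $\datadistribution(\slabelmatrix \mid \sinstancematrix) = \prod_{i=1}^{\numinstances} \datadistribution(\slabels_i \mid \sinstance_i)$ the per-instance terms are independent, so the two variances are $\frac{1}{n^2}\sum_i \marginals[j](\sinstance_i)(1-\marginals[j](\sinstance_i))$ and $\frac{1}{n^2}\sum_i \spreds[ij]^2\,\marginals[j](\sinstance_i)(1-\marginals[j](\sinstance_i))$, respectively. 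Using that any Bernoulli variance is at most $\tfrac14$, together with $\spreds[ij]^2 = \spreds[ij] \le 1$, both variances are at most $1/(4\numinstances)$, hence both expected deviations are at most $1/(2\sqrt{\numinstances})$. Summing the per-label bounds then yields exactly $\frac{1}{2\sqrt{\numinstances}}\sum_{j=1}^{\numlabels}(\Tconst^j(\semiemp\condpos_j) + \Pconst^j(\semiemp\condpos_j))$. The main obstacle is the orientation of the Lipschitz step: because the constants depend on their first $\condpos$-argument, applying the inequality the other way would evaluate them at the random $\condpos_j$, so they would not factor out of the expectation and the closed form would be lost; everything else is the routine independent-Bernoulli variance estimate, with the factor $1/2$ tracing directly to the worst-case variance $1/4$.
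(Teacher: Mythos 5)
Your proposal is correct and follows essentially the same route as the paper's proof: decompose over labels, apply Jensen to move the expectation inside the absolute value, invoke $\condpos$-Lipschitzness with constants anchored at the deterministic $\semiemp\condpos_j$ so the $\Qconst^j$ term drops, and bound $\expectation{|\truepos_j-\semiemp\truepos_j|}$ and $\expectation{|\condpos_j-\semiemp\condpos_j|}$ by $1/(2\sqrt{\numinstances})$ via the independent-Bernoulli variance estimate. Your explicit attention to the orientation of the Lipschitz inequality and to the fact that the summands are independent but not identically distributed is slightly more careful than the paper's wording, but it is the same argument.
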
Thus, using
$\semieturisk$ as a surrogate for $\eturisk$ leads only to $\mathcal{O}
(1/\sqrt{\numinstances})$ error, diminishing with the test size, while
substantially simplifying the optimization process.

\label{sec:model_misspecification}
Given a decomposable metric of the form \eqref{eq:linearly_decomposed_invariant},
let $\approxpredmatrix$ be the plug-in prediction matrix optimizing the
semi-empirical ETU with plugged-in probability estimates,  
$
 \taskloss(\expectation*_{\rlabels \sim \empirical\marginals(\sinstancematrix)}{\trueposvec}, \predposvec, \expectation*_{\rlabels \sim \empirical\marginals(\sinstancematrix)}{\condposvec})$.

\begin{restatable}{theorem}{thmregbound}
\label{thm:regret_bound}
Let $\approxpredmatrix$ be defined as above. Under the assumptions
of \autoref{thm:etu_approximation}:
\begin{equation}
    \eturisk(\optpredmatrix; \sinstancematrix)
- \eturisk(\approxpredmatrix; \sinstancematrix)
\le 
 \frac{\numlabels}{\sqrt{\numinstances}} B
+
2\frac{\sqrt{\numlabels}}{\numinstances} B \sum_{i=1}^n
\|\marginals(\sinstance_i) - \empirical\marginals(\sinstance_i)\|_2,
\end{equation}
where $B \coloneqq \sqrt{\numlabels^{-1} \sum_{j=1}^{\numlabels} (\Tconst^j(\semiemp\condpos_j) + \Pconst^j(\condpos_j))^2}$
is the quadratic mean of the Lipschitz constants.
\end{restatable}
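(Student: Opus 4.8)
The plan is to split the regret into an \emph{approximation-error} part, handled by \autoref{thm:etu_approximation}, and a \emph{probability-estimation-error} part, handled by the $p$-Lipschitz assumption. Write $\semieturisk$ for the semi-empirical risk built from the true marginals $\marginals$, and let $\widehat{\semieturisk}$ denote the same functional with $\marginals$ replaced by the estimates $\empirical\marginals$; by construction $\approxpredmatrix$ maximizes $\widehat{\semieturisk}$, while $\optpredmatrix$ maximizes $\eturisk$. Suppressing the shared argument $\sinstancematrix$, I would telescope through both intermediate functionals:
\begin{align}
\eturisk(\optpredmatrix) - \eturisk(\approxpredmatrix)
&= \bigl[\eturisk(\optpredmatrix) - \semieturisk(\optpredmatrix)\bigr]
 + \bigl[\semieturisk(\optpredmatrix) - \widehat{\semieturisk}(\optpredmatrix)\bigr]
 + \bigl[\widehat{\semieturisk}(\optpredmatrix) - \widehat{\semieturisk}(\approxpredmatrix)\bigr] \notag \\
&\quad + \bigl[\widehat{\semieturisk}(\approxpredmatrix) - \semieturisk(\approxpredmatrix)\bigr]
 + \bigl[\semieturisk(\approxpredmatrix) - \eturisk(\approxpredmatrix)\bigr],
\end{align}
and then bound each bracket. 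The middle bracket is nonpositive because $\approxpredmatrix$ is the maximizer of $\widehat{\semieturisk}$; every other bracket I bound by its absolute value.

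The first and last brackets are exactly the quantity controlled by \autoref{thm:etu_approximation}, each at most $\tfrac{1}{2\sqrt{\numinstances}}\sum_{j}(\Tconst^j(\semiemp\condpos_j)+\Pconst^j(\semiemp\condpos_j))$. Summing the two and applying Cauchy--Schwarz across the $\numlabels$ labels, $\sum_j(\Tconst^j+\Pconst^j) \le \sqrt{\numlabels}\,\bigl(\sum_j(\Tconst^j+\Pconst^j)^2\bigr)^{1/2} = \numlabels B$, which produces the first term $\numlabels B/\sqrt{\numinstances}$. It remains to bound the two misspecification brackets, i.e.\ $|\semieturisk(\spredmatrix) - \widehat{\semieturisk}(\spredmatrix)|$ for a fixed $\spredmatrix$. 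Decomposing over labels and invoking the $p$-Lipschitz property of each $\lossfnc^j$ (the $\predpos$-argument $\predpos_j = \tfrac1n\sum_i \spreds[ij]$ is identical on both sides, so only the $\Tconst^j$ and $\Pconst^j$ terms survive), the per-label gap is at most $\Tconst^j\,|\semiemp\truepos_j - \tfrac1n\sum_i\empirical\marginals[j](\sinstance_i)\spreds[ij]| + \Pconst^j\,|\semiemp\condpos_j - \tfrac1n\sum_i\empirical\marginals[j](\sinstance_i)|$. Each of these two differences is bounded, via the triangle inequality and $\spreds[ij]\in\{0,1\}$, by $\tfrac1n\sum_i|\marginals[j](\sinstance_i)-\empirical\marginals[j](\sinstance_i)|$.

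Collecting labels and swapping the order of summation then gives $\tfrac1n\sum_i\sum_j(\Tconst^j+\Pconst^j)|\marginals[j](\sinstance_i)-\empirical\marginals[j](\sinstance_i)|$, and a \emph{per-instance} Cauchy--Schwarz over $j$ turns the inner weighted $\ell_1$ sum into $\sqrt{\sum_j(\Tconst^j+\Pconst^j)^2}\;\|\marginals(\sinstance_i)-\empirical\marginals(\sinstance_i)\|_2 = \sqrt{\numlabels}\,B\,\|\marginals(\sinstance_i)-\empirical\marginals(\sinstance_i)\|_2$. Each misspecification bracket is thus at most $\tfrac{\sqrt{\numlabels}B}{\numinstances}\sum_i\|\marginals(\sinstance_i)-\empirical\marginals(\sinstance_i)\|_2$, and the two of them supply the factor $2$ in the second term. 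Summing all five contributions yields the claimed bound. The main obstacle is bookkeeping rather than a single hard estimate: I must run \emph{two} distinct Cauchy--Schwarz applications (one across labels to convert $\sum_j(\Tconst^j+\Pconst^j)$ into $\numlabels B$, one per instance to produce the $\ell_2$ estimation error weighted by $\sqrt{\numlabels}B$), and I must verify the domain hypotheses of the $p$-Lipschitz definition—namely that all semi-empirical triples satisfy $0\le\truepos\le\min(\condpos,\predpos)$ with $\condpos\in(0,1)$, which holds since $\marginals[j],\empirical\marginals[j]\in[0,1]$—while keeping careful track of which $\condpos$-value each Lipschitz constant is evaluated at, matching the convention in the definition of $B$.
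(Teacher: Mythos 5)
Your proposal is correct and follows essentially the same route as the paper: the five-term telescope (with the middle bracket killed by the optimality of $\approxpredmatrix$ for $\semieturisk(\cdot;\empirical\marginals)$) is exactly the paper's bound by $2\sup_{\spredmatrix}|\eturisk(\spredmatrix;\marginals)-\semieturisk(\spredmatrix;\empirical\marginals)|$ split into the approximation part (\autoref{thm:etu_approximation}) and the misspecification part (the paper's \autoref{thm:etu-misspecification}), followed by the same two Cauchy--Schwarz applications. The only cosmetic difference is that the paper packages the intermediate result as a separate lemma (\autoref{thm:regret_bound_full_approx}) before simplifying to the $B$-form.
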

A similar statement, presented in \autoref{app:unapprox-etu-regret}, can be made for the unapproximated ETU case.

Thus, the methods described in ~\autoref{sec:optimal_predictions} can be used with probability estimates replacing the true marginals, and as long as the estimator is reliable,
the resulting predictions will have small $\taskloss$-regret. This also justifies the plug-in approach used in the experiments in~\autoref{sec:experiments}.

\section{Efficient inference}
\label{sec:efficient-inference}

The optimization algorithms introduced so far need to obtain $\predicted\marginals[j](\sinstance_i)$ for \emph{all} labels and instances first.
Then, the BCA inference is of $\mathcal{O}(\numinstances\numlabels)$ time 
and $\mathcal{O}(\numinstances\numlabels)$ space complexity for a single iteration.
This is problematic in the setting of XMLC, 
where many methods aim to predict probabilities only for top labels in time sublinear in $\numlabels$. 
Fortunately, this characteristic of XMLC algorithms can be combined 
with the introduced algorithms to efficiently obtain an approximate solution. 
Instead of predicting all $\marginals$, 
we can predict probabilities only for top-$k'$ labels with highest $\predicted\marginals[j]$, 
where $k \ll k' \ll \numlabels$.
For all other labels we then assume $\predicted\marginals[j] = 0$. 
Under the natural assumption that $\lossfnc$ is \emph{non-decreasing} %
in true positives 
and \emph{non-increasing} in predicted positives, 
we can leverage the sparsity and consider labels with non-zero $\predicted\marginals[j]$ only (using sparse vectors to represent $\predicted\marginals(\sinstance_i)$)
to reduce the time and space complexity to $\mathcal{O}(\numinstances(k' + k \log k))$ and $\mathcal{O}(\numinstances(k' + k))$, respectively. 
As in real-world datasets the number of relevant labels $\|\slabels\|_1$ is much lower than $\numlabels$, and most $\marginals[j]({\sinstance}_i)$ are close to 0, 
with reasonably selected $k'$, according to~\autoref{thm:regret_bound}, we should only slightly increase the regret. %
A pseudocode for the sparse variant of \autoref{alg:bca} can be found in~\autoref{app:sparse-algorithms}.

Alternatively, we can leverage probabilistic label trees (PLTs)~\citep{Jasinska_et_al_2016}, 
a popular approach in XMLC \citep{Prabhu_et_al_2018,Wydmuch_et_al_2018,khandagale2020bonsai,You_et_al_2019,Chang_et_al_2020},
to efficiently search for ``interesting'' labels.
Originally, PLTs find top labels with the highest $\predicted\marginals[j]$.
In \citep{Wydmuch_et_al_2021}, an $A^*$-search algorithm has been introduced 
for finding $k$ labels with highest value of $g_j \marginals[j](\sinstance)$, 
where $g_j \in \rightopeninterval{0}{\infty}$ is a gain assigned to label $j$. 
In~\autoref{app:efficient-inference-plt}, we present a more general version of this procedure that can be used to efficiently obtain an exact solution of presented BCA or Greedy algorithms for some of the metrics considered in this work.

\section{Experiments}
\label{sec:experiments}

\begin{table}%
\caption{Results of different inference strategies on $@k$ measures with $k \in \{3,5,10\}$. %
Notation: P---precision, R---recall, F1---F1-measure, Cov---Coverage. 
The \smash{\colorbox{Green!20!white}{green color}} indicates cells in which the strategy matches the metric. 
The best results are in \textbf{bold} and the second best are in \textit{italic}.}
\small
\centering

\pgfplotstableset{
greencell/.style={postproc cell content/.append style={/pgfplots/table/@cell content/.add={\cellcolor{Green!20!white}}{},}}}

\resizebox{\linewidth}{!}{
\setlength\tabcolsep{4 pt}
\pgfplotstabletypeset[
    every row no 0/.append style={before row={
     & \multicolumn{18}{c}{\eurlex, $k' = 100$} \\ \midrule
    }},
    every row no 8/.append style={before row={ \midrule
     & \multicolumn{18}{c}{\amazoncatsmall, $k' = 100$} \\ \midrule
    }},
    every row no 16/.append style={before row={ \midrule
     & \multicolumn{18}{c}{\wiki, $k' = 100$} \\ \midrule
    }},
    every row no 24/.append style={before row={ \midrule
     & \multicolumn{18}{c}{\wikipedia, $k' = 1000$} \\ \midrule
    }},
    every row no 32/.append style={before row={ \midrule
     & \multicolumn{18}{c}{\amazon, $k' = 1000$} \\ \midrule
    }},
    every row no 4/.append style={before row={\midrule}},
    every row no 12/.append style={before row={\midrule}},
    every row no 20/.append style={before row={\midrule}},
    every row no 28/.append style={before row={\midrule}},
    every row no 36/.append style={before row={\midrule}},
    every row no 0 column no 1/.style={greencell},
    every row no 0 column no 7/.style={greencell},
    every row no 0 column no 13/.style={greencell},
    every row no 8 column no 1/.style={greencell},
    every row no 8 column no 7/.style={greencell},
    every row no 8 column no 13/.style={greencell},
    every row no 16 column no 1/.style={greencell},
    every row no 16 column no 7/.style={greencell},
    every row no 16 column no 13/.style={greencell},
    every row no 24 column no 1/.style={greencell},
    every row no 24 column no 7/.style={greencell},
    every row no 24 column no 13/.style={greencell},
    every row no 32 column no 1/.style={greencell},
    every row no 32 column no 7/.style={greencell},
    every row no 32 column no 13/.style={greencell},
    every row no 4 column no 3/.style={greencell},
    every row no 4 column no 9/.style={greencell},
    every row no 4 column no 15/.style={greencell},
    every row no 12 column no 3/.style={greencell},
    every row no 12 column no 9/.style={greencell},
    every row no 12 column no 15/.style={greencell},
    every row no 20 column no 3/.style={greencell},
    every row no 20 column no 9/.style={greencell},
    every row no 20 column no 15/.style={greencell},
    every row no 28 column no 3/.style={greencell},
    every row no 28 column no 9/.style={greencell},
    every row no 28 column no 15/.style={greencell},
    every row no 36 column no 3/.style={greencell},
    every row no 36 column no 9/.style={greencell},
    every row no 36 column no 15/.style={greencell},
    every row no 5 column no 4/.style={greencell},
    every row no 5 column no 10/.style={greencell},
    every row no 5 column no 16/.style={greencell},
    every row no 13 column no 4/.style={greencell},
    every row no 13 column no 10/.style={greencell},
    every row no 13 column no 16/.style={greencell},
    every row no 21 column no 4/.style={greencell},
    every row no 21 column no 10/.style={greencell},
    every row no 21 column no 16/.style={greencell},
    every row no 29 column no 4/.style={greencell},
    every row no 29 column no 10/.style={greencell},
    every row no 29 column no 16/.style={greencell},
    every row no 37 column no 4/.style={greencell},
    every row no 37 column no 10/.style={greencell},
    every row no 37 column no 16/.style={greencell},
    every row no 6 column no 5/.style={greencell},
    every row no 6 column no 11/.style={greencell},
    every row no 6 column no 17/.style={greencell},
    every row no 14 column no 5/.style={greencell},
    every row no 14 column no 11/.style={greencell},
    every row no 14 column no 17/.style={greencell},
    every row no 22 column no 5/.style={greencell},
    every row no 22 column no 11/.style={greencell},
    every row no 22 column no 17/.style={greencell},
    every row no 30 column no 5/.style={greencell},
    every row no 30 column no 11/.style={greencell},
    every row no 30 column no 17/.style={greencell},
    every row no 38 column no 5/.style={greencell},
    every row no 38 column no 11/.style={greencell},
    every row no 38 column no 17/.style={greencell},
    every row no 7 column no 6/.style={greencell},
    every row no 7 column no 12/.style={greencell},
    every row no 7 column no 18/.style={greencell},
    every row no 15 column no 6/.style={greencell},
    every row no 15 column no 12/.style={greencell},
    every row no 15 column no 18/.style={greencell},
    every row no 23 column no 6/.style={greencell},
    every row no 23 column no 12/.style={greencell},
    every row no 23 column no 18/.style={greencell},
    every row no 31 column no 6/.style={greencell},
    every row no 31 column no 12/.style={greencell},
    every row no 31 column no 18/.style={greencell},
    every row no 39 column no 6/.style={greencell},
    every row no 39 column no 12/.style={greencell},
    every row no 39 column no 18/.style={greencell},
    every head row/.style={
    before row={\toprule Inference & \multicolumn{2}{c|}{Instance @3}& \multicolumn{4}{c|}{Macro @3} & \multicolumn{2}{c|}{Instance @5}& \multicolumn{4}{c|}{Macro @5} & \multicolumn{2}{c|}{Instance @10}& \multicolumn{4}{c}{Macro @10} \\},
    after row={\midrule}},
    every last row/.style={after row=\bottomrule},
    columns={method,iP@3,iR@3,mP@3,mR@3,mF@3,mC@3,iP@5,iR@5,mP@5,mR@5,mF@5,mC@5,iP@10,iR@10,mP@10,mR@10,mF@10,mC@10},
    columns/method/.style={string type, column type={l|}, column name={strategy}},
    columns/iP@3/.style={string type,column name={\multicolumn{1}{c}{P}}, column type={r}},
    columns/mF@3/.style={string type,column name={\multicolumn{1}{c}{F1}}, column type={r}},
    columns/iR@3/.style={string type,column name={\multicolumn{1}{c|}{R}}, column type={r|}},
    columns/mP@3/.style={string type,column name={\multicolumn{1}{c}{P}}, column type={r}},
    columns/mC@3/.style={string type,column name={\multicolumn{1}{c|}{Cov}}, column type={r|}},
    columns/mR@3/.style={string type,column name={\multicolumn{1}{c}{R}}, column type={r}},
    columns/iP@5/.style={string type,column name={\multicolumn{1}{c}{P}}, column type={r}},
    columns/mF@5/.style={string type,column name={\multicolumn{1}{c}{F1}}, column type={r}},
    columns/iR@5/.style={string type,column name={\multicolumn{1}{c|}{R}}, column type={r|}},
    columns/mP@5/.style={string type,column name={\multicolumn{1}{c}{P}}, column type={r}},
    columns/mC@5/.style={string type,column name={\multicolumn{1}{c|}{Cov}}, column type={r|}},
    columns/mR@5/.style={string type,column name={\multicolumn{1}{c}{R}}, column type={r}},
    columns/iP@10/.style={string type,column name={\multicolumn{1}{c}{P}}, column type={r}},
    columns/mF@10/.style={string type,column name={\multicolumn{1}{c}{F1}}, column type={r}, column type={r}},
    columns/iR@10/.style={string type,column name={\multicolumn{1}{c|}{R}}, column type={r|}, column type={r|}},
    columns/mP@10/.style={string type,column name={\multicolumn{1}{c}{P}}, column type={r}},
    columns/mC@10/.style={string type,column name={\multicolumn{1}{c}{Cov}}, column type={r}},
    columns/mR@10/.style={string type,column name={\multicolumn{1}{c}{R}}, column type={r}},
]{tables/data/lightxml_main_str.txt}
}
\label{tab:main-results-lightxml}
\end{table}

To empirically test the introduced framework, 
we use popular benchmarks from the XMLC repository~\citep{Bhatia_et_al_2016}. 
We train the \lightxml~\citep{jiang_lightxml_2021} model (with suggested default hyper-parameters) on provided training sets 
to obtain $\predicted\marginals$ for all test instances. 
We then plug these estimates into different inference strategies 
and report the results across the discussed measures. 
To run the optimization algorithm efficiently, 
we use $k' = 100$ or $k' = 1000$ to pre-select for each instance the top $k'$ labels with the highest $\predicted\marginals[j]$ as described in~\autoref{sec:efficient-inference}.%
\footnote{
A code to reproduce all the experiments: \url{https://github.com/mwydmuch/xCOLUMNs}
}

We use the following inference strategies:
\begin{itemize}[noitemsep,topsep=0pt]
    \item \inftopk -- the optimal strategy for precision$@k$: selection of $k$ labels with the highest $\marginals[j]$ 
    (default prediction strategy in many XMLC methods).
    \item \infpstopk -- the optimal strategy for propensity-scored precision$@k$:  selection of $k$ labels with the highest $\poslossweight_{11}^{j, \text{prop}}\marginals[j]$, with $\poslossweight_{11}^{j, \text{prop}}$ given by the empirical model of \citet{Jain_et_al_2016} (\autoref{eq:jpv-prop-model}) with values $a$ and $b$ recommended by the authors. %
    \item \infpower, \inflog -- the optimal strategy for power-law and log weighted instance-wise utilities: selection of $k$ labels with the highest $\poslossweight_{11}^{j, \text{pl}}\marginals[j]$ or $\poslossweight_{11}^{j, \text{log}}\marginals[j]$. For power-law, we use $\beta = 0.5$. 
    \item \infbcmacp, \infbcmacr, \infbcmacf, \infbccov -- the block coordinate ascent (\autoref{alg:bca}) for optimizing macro-precision, -recall, -F1, and coverage,
\end{itemize}

\begin{wrapfigure}[14]{R}{0.5\textwidth}

\centering
\vspace{-15pt}
\includegraphics[width = 0.5\textwidth]{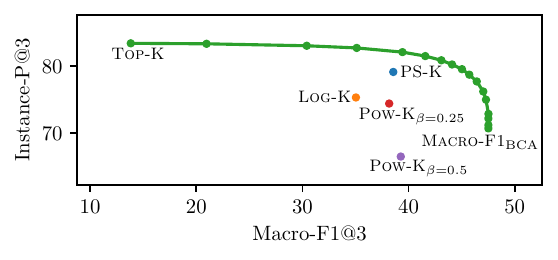}
\label{fig:mixed-utility}
\vspace{-17pt}
\caption{Results of an inference strategy with a mixed utility on \amazoncatsmall and $k=3$. The {\color{Green}green line} shows the results for different interpolations between two measures.}
\end{wrapfigure}

We expect a strategy suited for a given metric to obtain the best results on this metric. 
Nevertheless, this might not always be the case, as in the derivation of our algorithms, we needed to apply different types of approximation
to scale them to XMLC problems.
We are mainly interested in the performance on the general non-decomposable macro measures 
since they seem to be well-tailored to long tails, and their optimization is the most challenging. 
The results are presented in \autoref{tab:main-results-lightxml}. 
Notice that in almost all cases, the specialized inference strategies are indeed the best on the measure they aim to optimize
and achieve substantial gains on corresponding metrics compared to the basic \inftopk inference.
The other weighted strategies, \infpstopk, \infpower, \inflog,
usually provide a much smaller improvement over \inftopk and never beat strategies designed for specific macro measures. 
As the reported performance depends on three things: the inherent difficulty of the data,
the success of the inference algorithm, and the quality of the provided marginal probabilities,
the results might diverge from expectations in some cases. 
In~\autoref{app:extended-results}, we provide more details and conduct further experiments to investigate the impact of randomness, stopping conditions, quality of probability estimates, and shortlisting on the results. %
We also present similar experiments with probabilistic label trees used as an LPE. %

Unfortunately, our results show that optimization of macro-measures comes with the cost of a significant drop in performance on instance-wise measures.
Ideally, we would like to improve the performance on tail labels without sacrificing too much of general performance.
To achieve such a trade-off, we can use straight-forward interpolation of instance-wise precision-at-k, as it is covered by our framework, and a selected macro-measure,
since the considered class of utility functions is closed under linear combinations.
Such an objective can be optimized by the proposed block-coordinate algorithm without any modification.
As an example, we plot in~\autoref{fig:mixed-utility} the results of optimizing a linear combination of instance-wise precision and the macro-F1 measure:
$\taskloss(\slabelmatrix, \spredmatrix) = (1 - \alpha) \taskloss_{\text{Instance-P}}(\slabelmatrix, \spredmatrix) + \alpha \taskloss_{\text{Macro-F1}}(\slabelmatrix, \spredmatrix)$, using different values of $\alpha \in [0, 1]$.
In~\autoref{app:mixed-utilities}, we provide more plots for different utilities, datasets, and values of $k$.
All the plots show that the instance-vs-macro curve has a nice concave shape that dominates simple baselines. 
In particular, the initial significant improvement on macro-measures comes with a minor drop in instance-measures, 
and only if one wants to optimize more strongly for macro-measures, the drop on instance-wise measures becomes more severe.

\section{Discussion}

The advantage of the ETU framework is that one can use multiple inference strategies 
without re-training a model. 
This is especially useful in cases where it is not a-priori clear which performance measure 
should be optimized, or predictions for different purposes are needed at the same time.
On the other hand, as this framework optimizes the performance directly on a given test set, 
it is not designed to make predictions for single instances independently.
The ETU framework has mainly been studied in the case of binary classification problems. 
We are not aware of any work that focuses on optimizing the complex performance metrics
in the ETU framework for multi-label classification. 
One could try to generalize the results from binary classification, 
but the existing algorithms might not scale well to the extreme number of labels 
and they do not take the budget of $k$ labels into account. 

Our paper gives novel and non-trivial results regarding this challenging optimization problem.
We have thoroughly analyzed the ETU framework for a wide class of performance metrics, 
derived optimal prediction rules and constructed their computationally efficient approximations 
with provable regret guarantees and being robust against model misspecification.
Our algorithm, based on block coordinate descent, scales effortlessly to XMLC problems 
and obtains promising empirical results.%

\begin{wraptable}[9]{R}{0.5\textwidth}
\vspace{-13pt}
\caption{Four different classes of utilities}
\vspace{8pt}
\centering
\begin{tabular}{lll}
    \toprule
    Linear in & Approx. & Algorithm \\ \midrule
    $\truepos$, $\predpos$, $\condpos$ & No & Instance-wise \\
    $\truepos$, $\predpos$ & Yes & Instance-wise \\
    $\truepos$, $\condpos$ & No & Block-Coordinate \\
    --- & Yes & Block-Coordinate \\ \bottomrule
    \label{tbl:summary}
\end{tabular}
\end{wraptable}
Overall, we identified four categories of utilities, that differ in the complexity 
of the optimization algorithm---whether to use instance-wise optimization as in \autoref{subsec:linear-cmms}, 
or the block coordinate-ascent (\autoref{alg:bca})---
and the guarantees for the result---whether semi-empirical quantities (\autoref{subsec:ETU_approximation}) lead to an optimal solution, or a suboptimal with error bounded by \autoref{thm:etu_approximation}. 
These are given as follows and summarized in \autoref{tbl:summary}:
\begin{itemize}
\item \textbf{Fully linear}: Optimal predictions for metrics that are linear in all entries of the confusion matrix, as~\eqref{eq:instance-wise-as-linear},
    can be solved \emph{exactly} in an \emph{instance-wise} manner. Examples are classical metrics such as
    instance-wise precision$@k$, or propensity-scored precision$@k$.
    
\item \textbf{Linear in predictions}: \emph{Approximately optimal} predictions for metrics that are linear in the predictions 
as given in~\eqref{eq:general-linear-cm} can be
    obtained using \emph{instance-wise} optimization, 
    by switching from $\eturisk$ to $\semieturisk$. An example is macro recall$@k$.
    
\item \textbf{Linear in labels}:  If a metric is linear in the label variables as
    given in~\eqref{eq:general-exact-semietu-approx}, then
    $\semieturisk \equiv \eturisk$. However, the resulting combinatorial
    optimization problem for $\semieturisk$ is still complex enough, and we can
    solve it \emph{only locally}. An example is macro precision$@k$.
    
\item  \textbf{Nonlinear metrics}: If none of the above apply, we have $\semieturisk \neq \eturisk$, 
    and have to solve it \emph{locally} using \emph{block-coordinate ascent}.
    This is the case of macro $F$-measure$@k$, or coverage$@k$. 
\end{itemize}

The macro-averaged metrics budgeted at $k$ are attractive 
for measuring the long-tail performance. 
Macro-averaging treats all the labels as equally important, 
while the budget of $k$ predictions requires the prediction algorithm to choose the labels ``wisely." 
We believe that this approach is substantially better than the one based on propensity-scored metrics.
It is important to make the distinction between a metric used as a surrogate
for training or inference, and its use as a performance metric in itself. 
While the former might be well justified for many metrics,
the latter not necessarily as a metric might not have a clear interpretation of the calculated
numbers.

Finally, the proposed framework is a plug-in approach that works on estimates of marginal probabilities and can be seamlessly applied to many existing state-of-the-art XMLC algorithms that are able to output such predictions, including XR-Transformer~\citep{zhang2021fast}, CascadeXML~\citep{kharbanda2022cascadexml}, and algorithmic approaches for dealing with missing labels ~\citep{Saito_et_al_2020,Qaraei_et_al_2021}.
It can also be combined with recently proposed methods that try to improve predictive performance on the tail labels by, e.g., leveraging labels features to estimate labels-correlations~\citep{mittal2021eclare,saini2021galaxc,dahiya2021siamesexml,zhang2022long}
or to do data augmentation and generate new training points for tail labels~\citep{wei2021towards, kharbanda2022gandalf, chien2023pina}.
\looseness=-1

\section*{Acknowledgments}

A part of computational experiments for this paper had been performed in Poznan Supercomputing and Networking Center.
We want to acknowledge the support of Academy of Finland via grants 347707 and 348215, and also thank Mohammadreza Qaraei for providing pre-generated LPEs, $\empirical \marginals$, obtained by LightXML using computational resources of CSC – IT Center for Science, Finland.

\bibliographystyle{plainnat}
\bibliography{lit}

\newpage
\pagebreak
\appendix

\section{Order-invariant linearly decomposable task metrics}
In this section, we provide a closer look at the family $\oilosses$ of order-invariant
task utilities that can be decomposed into a sum of label-specific functions
as laid out in \autoref{sec:setup} in the main paper.

We first formalize the notion of instance-order invariance, and then show that
this implies the possibility of reformulating the utility function based on
the confusion matrix. We further prove that it is sufficient to know the
\emph{marginal} label probabilities $\marginals(\sinstance)$ for each instance
$\sinstance$ in order to evaluate any utility $\lossfnc \in \oilosses$ 
in the ETU-framework.

\subsection{Order-invariant task utilities as confusion-matrix metrics}
\label{sec:cm-metrics}

We show in \autoref{thm:loss-equivalence} that any utility function
$\taskloss(\slabelmatrix, \spredmatrix)$ that is invariant under instance
reordering can be defined in terms of confusion matrices. In this way, we 
justify our choice in the main paper to focus on losses of the form given in
\eqref{eq:linearly_decomposed_invariant}.

More precisely, let $\sigma \in \permutationgroup(\numinstances)$ be a permutation of rows,
that is, for $\slabelmatrix = [\slabels_1, \ldots, \slabels_{\numinstances}]^\transposesup$, 
we define $\sigma \slabelmatrix = [\slabels_{\sigma(1)}, \ldots, \slabels_{\sigma(\numinstances)}]^\transposesup$.
Then we can make the following definition:
\begin{definition}[Invariant under instance reordering]
Let $\numinstances, \numlabels \in \naturals$ and 
$\defmap{\taskloss}{\set{0,1}^{\numinstances \times \numlabels} \times \smash{\set{0,1}^{\numinstances \times \numlabels}}}{\reals}$
be a function of discrete labels and predictions. If $\taskloss$ remains unchanged for every permutation of rows 
$\sigma \in \permutationgroup(\numinstances)$, i.e., 
\begin{equation}
    \taskloss(\slabelmatrix, \spredmatrix) = 
    \taskloss(\sigma \slabelmatrix, \sigma \spredmatrix) \,,
\end{equation}
then we call $\taskloss$ a function that is \emph{invariant under instance reordering}.
We denote the set of instance-order-invariant losses with $\numlabels$ labels as
\begin{equation}
    \mathcal{I}_{\numlabels} \coloneqq \set{\defmap{\taskloss}{\set{0,1}^{\numinstances \times \numlabels} \times \set{0,1}^{\numinstances \times \numlabels}}{\reals} \colon \text{
        $\taskloss$ is invariant under instance reordering}} \,.
\end{equation}
\end{definition}

We further define the set of all possible confusion matrices with
$\numinstances$ instances as 
\begin{equation}
\confmatrices(\numinstances) \coloneqq
\set{\confmat \colon \numinstances \confmat \in \naturals^{2 \times 2},  
\|\confmat\|_{1,1} = 1} \,,
\end{equation}
where $\naturals$ is the set of natural numbers including 0.
Now we are ready to provide a lemma for the binary case:
\begin{lemma}
    \label{lemma:reorder}
    Let $\defmap{\psi}{\set{0,1}^{\numinstances} \times
    \set{0,1}^{\numinstances}}{\reals}$ be a binary loss function that is
    invariant under instances reordering. Then there exists a function
    $\defmap{\phi}{\confmatrices(\numinstances)}{\reals}$ such that $\psi =
    \phi \circ \confmat$.
\end{lemma}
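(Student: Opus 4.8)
The plan is to show that $\psi$ descends to a function on confusion matrices by verifying that it is constant on each fiber of the map $(\slabels,\spreds) \mapsto \confmat(\slabels,\spreds)$, and then to define $\phi$ by transporting values along this map. First I would record the elementary correspondence between a confusion matrix and a partition of the index set: for each pair of bits $(a,b) \in \set{0,1}^2$, the number of indices $i \in \intrange{\numinstances}$ with $(\slabels[i], \spreds[i]) = (a,b)$ equals $\numinstances$ times the matching entry of $\confmat(\slabels, \spreds)$. Thus the confusion matrix records exactly the cardinalities of the four index classes and nothing about their positions.

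Next I would establish the key combinatorial claim: two pairs $(\slabels, \spreds)$ and $(\slabels', \spreds')$ satisfy $\confmat(\slabels,\spreds) = \confmat(\slabels', \spreds')$ if and only if there is a permutation $\sigma \in \permutationgroup(\numinstances)$ with $\sigma\slabels = \slabels'$ and $\sigma\spreds = \spreds'$. The ``if'' direction is immediate, since reordering entries leaves the counts unchanged. For the ``only if'' direction, equality of confusion matrices means the four index classes have matching cardinalities across the two pairs, so I can choose, for each $(a,b)$, any bijection sending the $(a,b)$-class of the first pair onto the $(a,b)$-class of the second; gluing these four bijections yields the required $\sigma$, which by construction moves the entries of $(\slabels,\spreds)$ onto those of $(\slabels',\spreds')$ simultaneously.

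With this in hand I would define $\phi$ on $\confmatrices(\numinstances)$ as follows. Surjectivity of $\confmat$ onto $\confmatrices(\numinstances)$ is clear: given any $\confmat$ with $\numinstances\confmat \in \naturals^{2\times 2}$ and unit total mass, I can realize the prescribed class sizes by an explicit pair. For each $\confmat \in \confmatrices(\numinstances)$ I pick one preimage $(\slabels, \spreds)$ and set $\phi(\confmat) \coloneqq \psi(\slabels, \spreds)$. Well-definedness is exactly the content of the combinatorial claim combined with the invariance hypothesis: any two preimages differ by a permutation $\sigma$, and $\psi(\slabels,\spreds) = \psi(\sigma\slabels, \sigma\spreds)$ by invariance under instance reordering, so the assigned value does not depend on the chosen representative. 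Finally, $\psi = \phi \circ \confmat$ holds by construction.

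I expect the main obstacle to be stating the combinatorial claim cleanly rather than any genuine difficulty: the only real content is that equal confusion matrices force equal class sizes, which in turn guarantees a position-matching permutation. Everything else is bookkeeping, so the argument reduces to making the fiber-wise constancy precise.
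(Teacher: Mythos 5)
Your proposal is correct and follows essentially the same route as the paper's proof: both define $\phi$ by evaluating $\psi$ on a representative of each fiber of the confusion-matrix map and rely on the fact that any two label--prediction pairs with the same confusion matrix differ by a permutation of indices. The only cosmetic difference is that the paper fixes an explicit sorted canonical representative for each confusion matrix, whereas you pick an arbitrary preimage and argue well-definedness, spelling out the gluing of the four class bijections that the paper leaves implicit.
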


\begin{proof}
\DeclareSet{\domain}{S}
\DeclareVec{\example}{s}
    We provide an explicit construction of $\psi$. To that end, 
    let $\withtilde\confmat \in
    \confmatrices(\numinstances)$ be one such confusion matrix, then there
    exists $(\binlabels, \binpreds) \in \set{0,1}^{\numinstances} \times
    \set{0,1}^{\numinstances}$ given by
    \begin{equation}
        (\binlabels, \binpreds) = [
            \underbrace{(1, 1), \ldots, (1, 1)}_{\times \numinstances \cdot \withtilde\confmatentry_{11}}, 
            \underbrace{(0, 1), \ldots, (0, 1)}_{\times \numinstances \cdot \withtilde\confmatentry_{01}},
            \underbrace{(1, 0), \ldots, (1, 0)}_{\times \numinstances \cdot \withtilde\confmatentry_{10}},
            \underbrace{(0, 0), \ldots, (0, 0)}_{\times \numinstances \cdot \withtilde\confmatentry_{00}}
            ]^\transposesup \,.
    \end{equation}
    Define $\phi$ such that $\phi(\withtilde\confmat) = \psi(\binlabels, \binpreds)$.

    Now, let $(\primed\binlabels, \primed\binpreds) \in \set{0,1}^{\numinstances}
    \times \set{0,1}^{\numinstances}$ be an arbitrary label-prediction
    combination, with $\confmat(\primed\binlabels, \primed\binpreds) =
    \withtilde\confmat$. Then there exists a permutation
    $\sigma$ such that $\sigma \primed\binlabels = \binlabels$ and $\sigma
    \primed\binpreds = \binpreds$. By the invariance assumption, it holds that 
    \begin{equation}
        \psi(\primed\binlabels, \primed\binpreds) = \psi(\sigma \primed\binlabels, \sigma \primed\binpreds)
        = \psi(\binlabels, \binpreds) = \phi(\withtilde\confmat) = \phi(\confmat(\primed\binlabels, \primed\binpreds)).
    \end{equation}
    As the original $\withtilde\confmat\in\confmatrices(\numinstances)$ was arbitrary, 
    the statement is shown.
\end{proof}

We can now extend this lemma to show the equivalence of two definitions
of the task losses considered in this paper:
\begin{theorem}[Equivalence of order-invariance and confusion-matrix losses]
    \label{thm:loss-equivalence}
    Let $\numinstances,\numlabels \in \naturals$, and $\labelspace = \set{0, 1}^{\numlabels}$.
    Define the set of instance-order invariant, label-averaged losses as
    \begin{align}
        \oilosses &\coloneqq \set{\taskloss \in \mathcal{I}_{\numlabels}: (\slabelmatrix, \spredmatrix) \mapsto \smashsum_{j=1}^{\numlabels} \lossfnc^j(\labelscol[j], \predscol[j])} \vphantom{\sum_j}\,,
    \shortintertext{and the set of confusion-matrix based, label-averaged losses as}
        \mathcal{L}_{\mathrm{CM}} &\coloneqq \set{\defmap{\Phi}{\confmatrices(\numinstances)^{\numlabels}}{\reals} \;\text{s.t.}\; \confmatvec \mapsto  \smashsum_{j=1}^{\numlabels} \phi^j(\confmat^j)} \,.
    \end{align}
    Then these two descriptions are equivalent, in the sense that 
    \begin{equation}
        \oilosses = \set{\Phi \circ \confmatvec \colon \Phi \in \mathcal{L}_{\mathrm{CM}}} \,,
    \end{equation}
    that is, every instance-order invariant loss can be written as a confusion-matrix loss,
    and every confusion-matrix loss leads to an instance-order invariant loss.
\end{theorem}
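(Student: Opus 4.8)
The plan is to prove the set equality by establishing the two inclusions separately, in each case using \autoref{lemma:reorder} to convert a label-wise term into a function of a single confusion matrix. First I would dispatch the easy inclusion $\set{\Phi \circ \confmatvec \colon \Phi \in \mathcal{L}_{\mathrm{CM}}} \subseteq \oilosses$. Given $\Phi = \sum_{j=1}^{\numlabels} \phi^j \circ \confmat^j$, the composition $\Phi \circ \confmatvec$ sends $(\slabelmatrix, \spredmatrix)$ to $\sum_j \phi^j(\confmat(\labelscol[j], \predscol[j]))$, which is already of the decomposable shape with $\lossfnc^j \coloneqq \phi^j \circ \confmat$. It remains to check instance-order invariance: since $\confmat(\labelscol[j], \predscol[j])$ only counts co-occurrences of label/prediction values, it is unchanged when the same permutation $\sigma$ is applied to the entries of $\labelscol[j]$ and $\predscol[j]$; hence each summand is invariant, and so is the sum. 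Thus $\Phi \circ \confmatvec \in \mathcal{I}_{\numlabels}$ and lies in $\oilosses$.

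The substantive inclusion is $\oilosses \subseteq \set{\Phi \circ \confmatvec}$, and here the main obstacle is that the decomposition $\taskloss = \sum_j \lossfnc^j(\labelscol[j], \predscol[j])$ is not unique, so invariance of the whole sum does \emph{not} a priori transfer to the individual terms $\lossfnc^j$ (cancellation across labels is possible). To isolate a single label I would evaluate $\taskloss$ on matrices supported on one column: fixing $j$, for $(\binlabels, \binpreds) \in \set{0,1}^{\numinstances} \times \set{0,1}^{\numinstances}$ let $\slabelmatrix^{(j)}$ and $\spredmatrix^{(j)}$ be the matrices whose $j$-th columns equal $\binlabels$ and $\binpreds$ and whose remaining columns are identically zero. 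Then I would define $g^j(\binlabels, \binpreds) \coloneqq \taskloss(\slabelmatrix^{(j)}, \spredmatrix^{(j)}) = \lossfnc^j(\binlabels, \binpreds) + c_j$, where $c_j \coloneqq \sum_{j' \neq j} \lossfnc^{j'}(\mathbf{0}, \mathbf{0})$ is a constant independent of $(\binlabels, \binpreds)$.

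The key observation is that the all-zero baseline is permutation-invariant: for any $\sigma \in \permutationgroup(\numinstances)$, applying $\sigma$ to the rows of $\slabelmatrix^{(j)}$ and $\spredmatrix^{(j)}$ leaves the off-$j$ columns at zero and replaces column $j$ by $(\sigma \binlabels, \sigma \binpreds)$, so $\sigma$ maps the matrix pair defining $g^j(\binlabels, \binpreds)$ exactly onto the one defining $g^j(\sigma \binlabels, \sigma \binpreds)$. Invariance of $\taskloss$ then yields $g^j(\binlabels, \binpreds) = g^j(\sigma \binlabels, \sigma \binpreds)$, so each $g^j$ is a binary loss that is invariant under instance reordering. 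Applying \autoref{lemma:reorder} gives $\tilde\phi^j$ with $g^j = \tilde\phi^j \circ \confmat$, and setting $\phi^j \coloneqq \tilde\phi^j - c_j$ absorbs the constant so that $\lossfnc^j = \phi^j \circ \confmat$.

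Summing over $j$ then produces $\taskloss = \sum_{j=1}^{\numlabels} \phi^j \circ \confmat^j = \Phi \circ \confmatvec$ with $\Phi \in \mathcal{L}_{\mathrm{CM}}$, closing the inclusion and hence the equality. I expect the only delicate point to be the constant-subtraction step that neutralizes the non-uniqueness of the decomposition; once the $g^j$ are shown to be genuinely order-invariant, everything else follows mechanically from \autoref{lemma:reorder} and the linearity of the sum over labels.
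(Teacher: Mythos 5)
Your proof is correct and follows the same overall route as the paper's: both directions reduce to \autoref{lemma:reorder} applied label by label, with the easy inclusion following from the order-invariance of the confusion-matrix computation itself. The one place where you go beyond the paper is the single-column evaluation argument: the paper simply invokes \autoref{lemma:reorder} on each $\lossfnc^j$, which tacitly presumes that every component of the decomposition is individually invariant under instance reordering, whereas the definition of $\oilosses$ only requires invariance of the sum. Your construction of $g^j(\binlabels,\binpreds) = \taskloss(\slabelmatrix^{(j)},\spredmatrix^{(j)}) = \lossfnc^j(\binlabels,\binpreds) + c_j$ shows that each $\lossfnc^j$ is itself order-invariant (up to the harmless additive constant you then absorb into $\phi^j$), so it legitimately closes a small gap that the paper's two-line argument leaves implicit. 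Note that once you have shown $g^j$ invariant, you could equally well observe that $\lossfnc^j = g^j - c_j$ is invariant and apply \autoref{lemma:reorder} to it directly, making the constant-subtraction step at the end unnecessary; but as written your argument is sound.
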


\begin{proof}
    As calculating the confusion matrix is in itself an operation that is invarant
    under instance reordering, each $\Phi \circ \confmatvec$ clearly is instance-order
    invariant.

    On the other hand, let $\taskloss(\slabelmatrix, \spredmatrix) = \sum_{j=1}^{\numlabels} \lossfnc^j(\labelscol[j], \predscol[j])$,
    then by \autoref{lemma:reorder} there exist $\phi^1, \ldots, \phi^{\numlabels}$ such that
    \begin{equation}
        \lossfnc^j(\labelscol[j], \predscol[j]) = \phi^j(\confmat(\labelscol[j], \predscol[j])).
    \end{equation}
    Summing up the individual $\phi^j$ gives $\Phi$ of the correct form.
\end{proof}

\subsection{Sufficiency of label probability estimates}
\label{app:eta-sufficient}

If $\taskloss \in \oilosses$ is an instance-order invariant
decomposable task utility, then its ETU risk is a function of the predictions
$\spredmatrix$ and the \emph{marginal} label probabilities
$\marginals(\sinstance)$ as given by \eqref{eq:objective-based-on-eta}. 
This means that it is not required to estimate the
$\numlabels^2-1$ values of the full joint probability distribution
$\probability{\rlabelmatrix \given \sinstance}$. 

\begin{lemma}
    Let $\taskloss \in \oilosses$ be an instance-order invariant
decomposable task utility, and assume that the labels of \emph{different} instances
are independent, i.e., that $\datadistribution(\slabelmatrix | \sinstancematrix) =
\prod_{i=1}^{\numinstances}\datadistribution(\slabels_i|\sinstance_i)$. Then we
have
\begin{equation}
    \expectation*_{\rlabelmatrix \mid \rinstancematrix}{\taskloss(\rlabelmatrix, \spredmatrix)}
    = \sum_{j=1}^{\numlabels} \sum_{\dummylabels \in \{0,1\}^{\numinstances}} \Bigl(\prod_{i=1}^{\numinstances} \marginals[j](\sinstance_i) \dummylabels[i] + (1-\marginals[j](\sinstance_i)) (1-\dummylabels[i])  \Bigr) \lossfnc^j(\dummylabels, \predscol[j])\,.
\end{equation}
\end{lemma}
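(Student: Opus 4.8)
The plan is to unwind the expectation in three moves: split the sum over labels out of the expectation, reduce each summand to an expectation over a single column, and then factorize that column's law using the row-independence hypothesis. Since $\taskloss \in \oilosses$, by definition it decomposes as $\taskloss(\slabelmatrix, \spredmatrix) = \sum_{j=1}^{\numlabels} \lossfnc^j(\labelscol[j], \predscol[j])$, so linearity of expectation immediately yields
\[
\expectation*_{\rlabelmatrix \mid \rinstancematrix}{\taskloss(\rlabelmatrix, \spredmatrix)}
= \sum_{j=1}^{\numlabels} \expectation*_{\rlabelmatrix \mid \rinstancematrix}{\lossfnc^j(\labelscol[j], \predscol[j])} \,.
\]

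Next I would observe that the $j$-th summand depends on $\rlabelmatrix$ only through its $j$-th column $\labelscol[j]$, so the expectation over the whole label matrix collapses to one over the marginal law of that column. Writing this as a finite sum over the $2^{\numinstances}$ possible realizations $\dummylabels \in \{0,1\}^{\numinstances}$, each weighted by $\probability{\labelscol[j] = \dummylabels \given \sinstancematrix}$, turns the $j$-th term into $\sum_{\dummylabels} \probability{\labelscol[j] = \dummylabels \given \sinstancematrix}\, \lossfnc^j(\dummylabels, \predscol[j])$.

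The crux is evaluating $\probability{\labelscol[j] = \dummylabels \given \sinstancematrix}$, and this is exactly where the hypothesis $\datadistribution(\slabelmatrix | \sinstancematrix) = \prod_{i=1}^{\numinstances}\datadistribution(\slabels_i|\sinstance_i)$ enters: independence of the rows forces the entries of a single column to be independent across instances, so the joint probability factorizes into $\prod_{i=1}^{\numinstances}$ of the per-instance terms. By definition of the marginals, the probability that instance $i$ carries label $j$ is $\marginals[j](\sinstance_i)$, hence the $i$-th factor equals $\marginals[j](\sinstance_i)$ when $\dummylabels[i] = 1$ and $1 - \marginals[j](\sinstance_i)$ when $\dummylabels[i] = 0$; since $\dummylabels[i] \in \{0,1\}$, both cases are captured by the single expression $\marginals[j](\sinstance_i)\dummylabels[i] + (1 - \marginals[j](\sinstance_i))(1 - \dummylabels[i])$. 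Substituting this product back into the previous display gives precisely the claimed formula.

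The step that deserves the most care---more a subtlety than a genuine obstacle---is the reduction in the second move together with the factorization in the third: one must notice that although independence is assumed only across instances (rows) and nothing is assumed about dependence among labels within a row, this is already enough, because each summand $\lossfnc^j$ involves only the single column $j$, so cross-label dependence never appears. The remaining work is the elementary case-check identity and routine bookkeeping.
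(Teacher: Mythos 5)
Your proof is correct and follows exactly the same route as the paper's: linearity of expectation over the label decomposition, rewriting each column's expectation as a sum over the $2^{\numinstances}$ realizations, and factorizing the column law via row independence into the per-instance expression $\marginals[j](\sinstance_i)\dummylabels[i] + (1-\marginals[j](\sinstance_i))(1-\dummylabels[i])$. Your closing remark---that only cross-instance independence is needed because each $\lossfnc^j$ touches a single column---is a correct and worthwhile clarification that the paper leaves implicit.
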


\begin{proof}
\begin{align}
    \expectation*_{\rlabelmatrix \mid \rinstancematrix}{\taskloss(\rlabelmatrix, \spredmatrix)} &= 
    \sum_{j=1}^{\numlabels} \expectation*_{\rlabelmatrix \mid \rinstancematrix}{\lossfnc^j(\labelscol[j], \predscol[j])} = 
    \sum_{j=1}^{\numlabels} \sum_{\dummylabels \in \{0,1\}^{\numinstances}} \probability{\labelscol[j] = \dummylabels  \given \sinstancematrix} \lossfnc^j(\dummylabels, \predscol[j]) \nonumber \\
    &= \sum_{j=1}^{\numlabels} \sum_{\dummylabels \in \{0,1\}^{\numinstances}} \left(\smashprod_{i=1}^{\numinstances} \probability{\rlabelmatrix[ij] = \dummylabels[i]  \given \sinstance_i} \right) \lossfnc^j(\dummylabels, \predscol[j]) \nonumber \\
    &= \sum_{j=1}^{\numlabels} \sum_{\dummylabels \in \{0,1\}^{\numinstances}} \left(\smashprod_{i=1}^{\numinstances} \marginals[j](\sinstance_i) \dummylabels[i] + (1-\marginals[j](\sinstance_i)) (1-\dummylabels[i])  \right) \lossfnc^j(\dummylabels, \predscol[j]) \,.
    \label{eq:objective-based-on-eta-long}
\end{align}
\end{proof}

\subsection{Representation change from the instance-wise form to the cost-matrix form}
\label{app:instance-wise-to-cm}
If we start with the linear metric given in the instance-wise form
according to \eqref{eq:general-weighted-instance-wise}, 
the corresponding representation according to \eqref{eq:general-linear-cm} 
may be shifted by a constant value, and can be derived through the following:
\begin{align}
    \confmatentry_{11} \poslossweight_{11} + \confmatentry_{01} \poslossweight_{01} 
    + \confmatentry_{10} \poslossweight_{10} &+ \confmatentry_{00} \poslossweight_{00}
    = \confmatentry_{11} \poslossweight_{11} + \confmatentry_{01} \poslossweight_{01} + 
    (\condpos - \truepos) \poslossweight_{10} + (1 - \condpos - \confmatentry_{01}) \poslossweight_{00}
    \nonumber \\
    &= \truepos \left( \poslossweight_{11} - \poslossweight_{10} \right) + \confmatentry_{01} \left( \poslossweight_{01} - \poslossweight_{00} \right)  
    + \condpos \poslossweight_{10} + (1 - \condpos) \poslossweight_{00} \nonumber \\
    &= \truepos \left( \poslossweight_{11} - \poslossweight_{10} - \poslossweight_{01} + \poslossweight_{00} \right) + 
      \predpos \left( \poslossweight_{01} - \poslossweight_{00} \right) + \text{const} \,.
\end{align}
Thus, we get $\fortp\funf(\condpos) = \poslossweight_{11} -
\poslossweight_{10} - \poslossweight_{01} + \poslossweight_{00}$ 
and $\forpp\funf(\condpos) = \poslossweight_{01} - \poslossweight_{00}$.

\clearpage

\section{Block Coordinate Ascent}
In this section, we provide additional analysis and variations of the
block coordinate ascent algorithm.
We first give an intuition into the gain-based selection of
labels performed by the algorithm in the case of
macro-precision. Then, we will illustrate how, for metrics that are linear in
the prediction argument as per \eqref{eq:general-linear-cm}, the
algorithm will recover the closed-form solution given in
\eqref{eq:linear-metrics-optimal-decision}. Afterwards, we present an alternative
formulation specifically for coverage, and conclude with 
greedy versions of the BCA algorithm.

\subsection{Example: Macro-Precision}
As an example, consider the optimization of macro-precision. In this case, 
$\confmatutility^j(\truepos, \predpos, \condpos) = \truepos / \predpos$.
Plugging into the gain formula gives
\begin{align}
    \gains[j] = \frac{\truepos^j + \empirical \marginals[j](\sinstance_s)}{\predpos^j + 1}
    - \frac{\truepos^j}{\predpos^j}
    &= \frac{p^j(\truepos^j + \empirical \marginals[j](\sinstance_s)) - \truepos^j(\predpos^j + 1)}{\predpos^j(\predpos^j + 1)} = \frac{\predpos^j \empirical \marginals[j](\sinstance_s) - \truepos^j}{\predpos^j(\predpos^j + 1)} \,.
\end{align}
This term is positive only if $\truepos^j / \predpos^j < \empirical
\marginals[j](\sinstance_s)$, i.e., predicting the label has a positive impact on the
overall utility if its chance of being relevant, $\empirical \marginals[j](\sinstance_s)$, is
larger than the current estimate of precision for label $j$. Of
course, the ``at-$k$'' constraint means that, for a given instance, it may not
be possible to select all labels with positive gain, or necessary to select some
labels with negative gain to fulfill the constraint.

\subsection{Global optimality for linear metrics}
If $\confmatutility^j$ is a linear function, i.e., it corresponds to an instance-wise
measure as in \eqref{eq:instance-wise-as-linear}, the gain $\gains$, 
defined through lines 9-11 in \autoref{alg:bca}, becomes
\begin{align}
   \confmatutility^j(\truepos +  \empirical \marginals[j](\sinstance_s), \predpos + 1, \condpos) - \confmatutility^j(\truepos, \predpos, \condpos ) &=
   (\truepos +  \empirical \marginals[j](\sinstance_s) - \truepos) \ftruepos(\condpos) + (\predpos + 1 - \predpos) \fpredpos(\condpos) \nonumber \\
   &=  \empirical \marginals[j](\sinstance_s) \ftruepos(\condpos) + \fpredpos(\condpos) \,,
\end{align}
because the influence of the other instances cancels out. This exactly
reproduces the gain formula from \autoref{subsec:linear-cmms}, which means
that for each instance, the block coordinate-ascent algorithm will select the
globally (approximately) optimal decision on the first pass. If the metric is
fully linear, this will be the Bayes-optimal decision, otherwise, it is
approximately optimal in the sense of \autoref{thm:etu_approximation}. In both
cases, the algorithm will perform a second pass over the entire dataset in order to
check the stopping criterion. This second pass will leave all predictions
unchanged.

\subsection{The ETU approach for coverage}
\label{sec:etu-coverage}

In this section, we derive the ETU risk presented initially in \autoref{sec:etu-coverage-short}. 
Then, we present a block coordinate ascent algorithm for optimizing it.
The step-by-step derivation of~\eqref{eq:etu-cov} is given below:
\begin{align}
    \MoveEqLeft
    \eturisk(\predicted\slabelmatrix; \sinstancematrix) = \expectation*_{\rlabelmatrix \mid \rinstancematrix}{\taskloss_{\text{cov}}(\trueposvec(\rlabelmatrix, \predicted\rlabelmatrix), \predposvec(\rlabelmatrix, \predicted\slabelmatrix), \condposvec(\rlabelmatrix, \predicted\rlabelmatrix))} \nonumber \\
    &= \vphantom{\sum_0^1} \expectation*_{\rlabelmatrix \mid \rinstancematrix}{
        \frac{1}{\numlabels} \smashsum_{j=1}^{\numlabels} \indicator{\truepos_j > 0}
    } \nonumber 
    = \expectation*_{\rlabelmatrix \mid \rinstancematrix}{
        \frac{1}{\numlabels} \smashsum_{j=1}^{\numlabels} \left(1 - \indicator{\truepos_j = 0}\right)
    } \nonumber \\
    &= \vphantom{\sum_0^1} \expectation*_{\rlabelmatrix \mid \rinstancematrix}{
        \frac{1}{\numlabels} \smashsum_{j=1}^{\numlabels} \left(1 - \smashprod_{i=1}^j \left(1 - \slabels[ij]\spreds[ij] \right)\right)
    } =\vphantom{\sum_0^1} \expectation*_{\rlabelmatrix \mid \rinstancematrix}{
        1 - \frac{1}{\numlabels} \smashsum_{j=1}^{\numlabels} \smashprod_{i=1}^j \left(1 - \slabels[ij]\spreds[ij] \right)
    } \,.
\end{align}

Because we assume labels for one instance to be independent on all other instances, i.e., $\datadistribution(\slabelmatrix | \sinstancematrix) =
\prod_{i=1}^{\numinstances}\datadistribution(\slabels_i|\sinstance_i)$, we obtain:
\begin{align}
\eturisk(\predicted\slabelmatrix; \sinstancematrix) &=
    1 - \frac{1}{\numlabels} \sum_{j=1}^{\numlabels} \prod_{i=1}^{\numinstances} \left(1 - \expectation*_{\rlabelmatrix \mid \rinstancematrix}{\slabels[ij]\spreds[ij]} \right) 
    = 1 - \frac{1}{\numlabels} \sum_{j=1}^{\numlabels} \prod_{i=1}^{\numinstances} \left(1 - \probability{\slabels[ij] = 1 | \sinstance_i} \right) \nonumber \\
    &= 1 - \frac{1}{\numlabels} \sum_{j=1}^{\numlabels} \prod_{i=1}^{\numinstances} \left(1 - \marginals[j](\sinstance_i) \spreds[ij] \right) \,.
    \label{eq:etu-approx-cov}
\end{align}

Based on this result we can construct a block coordinate ascent procedure which, for predictions being fixed for all instances except $\sinstance_{s}$, optimizes the following problem:
\begin{equation}
    \max_{\ssolutionvec \in \labelspace_k} \sum_{j=1}^{\numlabels} \confmatutility^j(\ssolutionvec[j])_{\text{cov}} = \sum_{j=1}^{\numlabels} \left(1 - (1 - \marginals[j](\sinstance_s)\ssolutionvec[j]) \smashprod_{\mathclap{i \in \intrange{\numinstances} \setminus \set{s}}} \left(1 - \marginals[j](\sinstance_i) \spreds[ij] \right) \right)\,.
    \label{eq:coverage-inner-problem}
\end{equation}

\begin{wrapfigure}[13]{R}{0.5\textwidth}
\vspace{-24pt}
\begin{minipage}[t]{0.49\textwidth}
\input{algorithms/bca-coverage}
\end{minipage}%
\end{wrapfigure}

Analogously to \autoref{sec:bca-algorithm}, everything except $\ssolutionvec[j] \in \{0, 1\}$ is given 
and we can define a gain vector with elements $\gains[j] = \confmatutility^j(1)_{\text{cov}} - \confmatutility^j(0)_{\text{cov}}$.
Again the optimal prediction $\optimal \ssolutionvec$ is then given by $\optimal \ssolutionvec = \selecttopk(\gains)\,,$
and we get $\spredmatrix{}^{t+1}$ by replacing the $s$\textsuperscript{th} row of
$\spredmatrix{}^{t}$ with $\optimal \ssolutionvec$. 
Then we switch to the next instance $s \gets s+1$, 
and repeat this process until no more progress is made.

Notice that $\prod_{i=1}^{\numinstances} \left(1 - \marginals[j](\sinstance_i) \spreds[ij] \right)$ corresponds to the probability of label $j$ being irrelevant for all instances it was selected for. We denote this value as $\failureprobvec[j]$ and use it to speed up computations in each iteration of the algorithm:
\begin{equation}
\begin{array}{ll}
\failureprobvec[j]^0 \coloneqq \prod_{i=1}^{\numinstances} \left(1 - \marginals[j](\sinstance_i) \spreds[ij]^0 \right) \,, \quad & 
\failureprobvec[j]^{t+1} \coloneqq \failureprobvec[j]^t\frac{1 - \marginals[j](\sinstance_s)\spreds[sj]^{t + 1}}{1 - \marginals[j](\sinstance_s)\spreds[sj]^t} \,. 
\end{array}
\label{eq:coverage-update-of-failure-prob}
\end{equation}

We can then compute the gain vector using the following formula:
\begin{equation}
    \gains[j] = \left(1 - \failureprobvec[j]^{t}\frac{1 - \marginals[j](\sinstance_s)}{1 - \marginals[j](\sinstance_s)\spreds[sj]^t}\right) - \left(1 - \failureprobvec[j]^{t}\frac{1}{1 - \marginals[j](\sinstance_s)\spreds[sj]^t}\right) = \frac{\marginals[j](\sinstance_s)\failureprobvec[j]^{t}}{1 - \marginals[j](\sinstance_s)\spreds[sj]^t} \,.
\end{equation}

This block coordinate ascent procedure for coverage is presented as \autoref{alg:bca-coverage}.

\subsection{Greedy version of block coordinate-ascent}
\label{sec:greedy-algortithm}
The inference strategy presented in \autoref{alg:bca} requires multiple passes over the entire data. 
This might be computationally expensive and requires 
the whole dataset to be available at once.
We, thus, propose a greedy version of the algorithm, 
that performs just a \emph{single pass} over the dataset 
and makes decisions for each instance by taking into account only statistics of \emph{previously seen} instances instead of all other instances in the dataset.
This allows to apply this algorithm to the semi-online setting, where the algorithm observes and predicts for instances one by one, but does not receive immediate feedback for its decisions, in contrast to the usual online learning setting.
The procedure is outlined as~\autoref{alg:greedy}, and its greedy variant as \autoref{alg:greedy-coverage}. 
The results of this algorithm in comparison to the BCA approach are given later in \autoref{app:extended-results}.

\begin{figure}[b]
\begin{minipage}[t]{0.49\linewidth}
\input{algorithms/greedy-algorithm}
\end{minipage}%
\hfill%
\begin{minipage}[t]{0.49\linewidth}
\input{algorithms/greedy-coverage}
\end{minipage}
\end{figure}

\clearpage

\section{Detailed Regret Analysis}

\label{app:expectation-inside}
In this section, we discuss the $\condpos$-Lipschitzness condition of the metrics of interest, 
and prove \autoref{thm:etu_approximation} and \autoref{thm:regret_bound}.

\subsection{$\condpos$-Lipschitz utility functions}
First, recall the definition from the main paper:
\plippschitz*
\begin{wraptable}[18]{R}{0.5\textwidth}
\vspace{-18pt}
\begin{minipage}{0.49\textwidth}
\caption{Examples of $\condpos$-Lipschitz metrics. 
We use $\ctruepos$, $\cfalsepos$, $\cfalseneg$, and $\ctrueneg$ to denote 
true positives, false positives, false negatives, and true negatives.}
\vspace{8pt}
\label{tbl:performance_metrics}
\centering
\begin{tabular}{lll}
    \toprule
    Metric & Definition & $\lossfnc(\truepos,\predpos,\condpos)$ \\
    \midrule
    Accuracy & \scriptsize{$\ctruepos + \ctrueneg$} & \scriptsize{$1+2\truepos-\predpos-\condpos$}\\[2mm]
    Recall & $\frac{\ctruepos}{\ctruepos + \cfalseneg}$ & $\frac{\truepos}{\condpos}$ \\[2mm]
    Bal. Acc.  & $\frac{\ctruepos/2}{\ctruepos+\cfalseneg} + \frac{\ctrueneg/2}{\ctrueneg+\cfalsepos}$ & $\frac{\truepos+\condpos(1-\predpos-\condpos)}{2\condpos(1-\condpos)}$\\[2mm]
    $F_{\beta}$ & $\frac{(1+\beta^2) \ctruepos}{(1+\beta^2)\ctruepos+\beta^2\cfalseneg + \cfalsepos}$
    & $\frac{(1+\beta^2) \truepos}{\beta^2 \condpos + \predpos}$\\[2mm]
    Jaccard & $\frac{\ctruepos}{\ctruepos + \cfalsepos + \cfalseneg}$ & $\frac{\condpos+\predpos-2\truepos}{\condpos+\predpos-\truepos}$\\[2mm]
    G-Mean & $\sqrt{\frac{\ctruepos \cdot \ctrueneg}{(\ctruepos+\cfalseneg)(\ctrueneg+\cfalsepos)}}$ & $\frac{\truepos(1-\predpos-\condpos+\truepos)}{\condpos(1-\condpos)}$ \\[2mm]
    AUC & $\frac{\cfalsepos \cdot \cfalseneg}{(\ctruepos + \cfalseneg)(\cfalsepos + \ctrueneg)}$ & $\frac{(\predpos-\truepos)(\condpos-\truepos)}{\condpos(1-\condpos)}$ \\[2mm]
    \bottomrule
\end{tabular}
\end{minipage}
\end{wraptable}
The rationale behind this definition is that while we need to control
the change in value of the metric under small changes in its arguments, a standard
definition of Lipschitzness (with global constant) would not be satisfied by many popular
metrics. For the same reason, we only require stability for \emph{non-trivial} problems, that is,
in cases where the rate of positives $\condpos$ is neither zero nor one.

Relaxing the definition to allow the constants vary as a function of $\condpos$ suffices to
prove our stability results as well as regret bounds, while it is satisfied by most of the metrics of interest,
as shown below. The notable exception not present in~\autoref{tbl:performance_metrics} is the
\emph{precision} metric, which is not $\condpos$-Lipschitz due to its behavior for
$\predpos \to 0$.

\begin{lemma}
The linear confusion-matrix measures defined by \eqref{eq:instance-wise-as-linear}:
\begin{equation}
\taskloss(\slabelmatrix, \spredmatrix)
    =\sum_{j=1}^{\numlabels} \left ( 
    \poslossweight_{00}^j \confmatentry_{00}^j +
    \poslossweight_{01}^j \confmatentry_{01}^j +
    \poslossweight_{10}^j \confmatentry_{10}^j +
    \poslossweight_{11}^j \confmatentry_{11}^j \right )\!
    \label{eq:instance-wise-as-linear-appendix}
\end{equation}
with fixed coefficient matrices $\{\boldsymbol{W}^j\}_{j=1}^{\numlabels}$
are decomposable functions with $\condpos$-Lipschitz components.
\end{lemma}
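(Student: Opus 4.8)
The plan is to establish the two claims separately, with essentially all the work going into the Lipschitz bound, which reduces to a single change of variables followed by the triangle inequality. Decomposability is immediate by inspection: the right-hand side of \eqref{eq:instance-wise-as-linear-appendix} is already a sum of label-specific terms, $\sum_{j=1}^{\numlabels} \lossfnc^j(\confmat^j)$ with $\lossfnc^j(\confmat) \coloneqq \poslossweight_{00}^j \confmatentry_{00} + \poslossweight_{01}^j \confmatentry_{01} + \poslossweight_{10}^j \confmatentry_{10} + \poslossweight_{11}^j \confmatentry_{11}$, so each label enters only through its own confusion matrix and the decomposition required by \autoref{thm:loss-equivalence} holds.

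For the Lipschitz property I would first rewrite each $\lossfnc^j$ in the $(\truepos,\predpos,\condpos)$ parameterization. Substituting $\confmatentry_{11} = \truepos$, $\confmatentry_{01} = \predpos - \truepos$, $\confmatentry_{10} = \condpos - \truepos$, and $\confmatentry_{00} = 1 - \predpos - \condpos + \truepos$, exactly as carried out in \autoref{app:instance-wise-to-cm}, and collecting terms gives the affine form
\begin{equation}
\lossfnc^j(\truepos,\predpos,\condpos) = (\poslossweight_{11}^j - \poslossweight_{10}^j - \poslossweight_{01}^j + \poslossweight_{00}^j)\,\truepos + (\poslossweight_{01}^j - \poslossweight_{00}^j)\,\predpos + (\poslossweight_{10}^j - \poslossweight_{00}^j)\,\condpos + \poslossweight_{00}^j \,.
\end{equation}
The key observation is that, since the weights $\poslossweight_{\cdot\cdot}^j$ are fixed, all three coefficients are fixed constants.

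The bound then follows by forming the difference of values at $(\truepos,\predpos,\condpos)$ and $(\truepos',\predpos',\condpos')$ and applying the triangle inequality, which yields the $\condpos$-Lipschitz inequality of the definition with coefficients
\begin{align}
\Tconst^j(\condpos) &= |\poslossweight_{11}^j - \poslossweight_{10}^j - \poslossweight_{01}^j + \poslossweight_{00}^j| \,, &
\Qconst^j(\condpos) &= |\poslossweight_{01}^j - \poslossweight_{00}^j| \,, &
\Pconst^j(\condpos) &= |\poslossweight_{10}^j - \poslossweight_{00}^j| \,.
\end{align}
In fact these are independent of $\condpos$, so the relaxation built into the $\condpos$-Lipschitz definition (letting the constants vary with $\condpos$) is not even needed here; the components are Lipschitz in the ordinary sense.

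I expect no genuine obstacle, as the statement is essentially a linear-algebra bookkeeping exercise. The only point requiring a little care is the change of variables: the term discarded as a ``constant'' in the derivation of \autoref{app:instance-wise-to-cm} (where only dependence on $\truepos$ and $\predpos$ mattered) in fact carries a genuine $\condpos$-dependence, namely $\condpos(\poslossweight_{10}^j - \poslossweight_{00}^j) + \poslossweight_{00}^j$, which is precisely what produces the $\Pconst^j$ coefficient above. I would therefore make sure to retain this term rather than absorb it into a constant when reading off the coefficient of $\condpos$.
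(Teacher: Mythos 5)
Your proof is correct and follows essentially the same route as the paper's: both rewrite each component in the $(\truepos,\predpos,\condpos)$-parameterization as an affine function with fixed coefficients and conclude Lipschitzness from linearity. You simply make explicit the coefficients that the paper leaves as "some combinations" of the weights, and your closing remark about retaining the $\condpos$-dependent part of the "constant" term from the instance-wise-to-confusion-matrix change of variables is exactly the right point of care.
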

\begin{proof}
The metric can be rewritten in a decomposable form $\taskloss = \sum_j \lossfnc^{j}$,
where each $\lossfnc^{j}$ in the $(\truepos,\predpos,\condpos)$-parameterization 
has the following form:
\begin{equation}
\lossfnc^j (\truepos, \predpos, \condpos)
= T_j \cdot \truepos + Q_j \cdot \predpos + P_j \cdot \condpos + C_j
\end{equation}
where $T_j, Q_j, P_j, C_j$ are some combinations of the coefficient matrices $\set{\boldsymbol{W}^j}_{j=1}^{\numlabels}$. Being a
linear function of $\truepos, \predpos, \condpos$, $\lossfnc^j$ is Lipschitz.
\end{proof}

\begin{lemma}[$\condpos$-Lipschitzness of common utilities \citep{Dembczynski_etal_ICML2017}]
All metrics in~\autoref{tbl:performance_metrics} are $\condpos$-Lipschitz.
\end{lemma}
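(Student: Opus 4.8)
The plan is to verify the defining inequality separately for each of the seven metrics listed in \autoref{tbl:performance_metrics}, since each is given explicitly as a rational function $\lossfnc(\truepos,\predpos,\condpos)$ that is smooth on the admissible region $0 \le \truepos \le \min(\condpos,\predpos)$, $\predpos \in [0,1]$, $\condpos \in (0,1)$. For each metric I would split the difference $\lossfnc(\truepos,\predpos,\condpos)-\lossfnc(\truepos',\predpos',\condpos')$ by telescoping one coordinate at a time, bound each single-coordinate difference algebraically, and collect the factors multiplying $|\truepos-\truepos'|$, $|\predpos-\predpos'|$, and $|\condpos-\condpos'|$ into the constants $\Tconst(\condpos)$, $\Qconst(\condpos)$, $\Pconst(\condpos)$. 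The case of accuracy is immediate, since it is affine in $(\truepos,\predpos,\condpos)$ and hence globally Lipschitz with constants $2,1,1$.

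The crucial structural observation, which makes every bound depend on $\condpos$ alone, is that the domain constraints force the ``ratio-type'' factors appearing in these metrics to stay in $[0,1]$: from $\truepos \le \condpos$ and the identities $\cfalseneg = \condpos-\truepos$, $\ctrueneg = 1-\predpos-\condpos+\truepos$, $\cfalsepos = \predpos-\truepos$, together with $\cfalsepos+\ctrueneg = 1-\condpos$, one obtains $\truepos/\condpos \le 1$, $\cfalseneg/\condpos \le 1$, $\ctrueneg/(1-\condpos)\le 1$, and $\cfalsepos/(1-\condpos)\le 1$. Consequently, the only genuine blow-up in a denominator is a single factor of $\condpos$ or $1-\condpos$, which is exactly what the $\condpos$-dependent constants are allowed to absorb. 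I would illustrate the mechanism on recall, $\lossfnc=\truepos/\condpos$, where telescoping through $\truepos'/\condpos$ and the bound $\truepos'/\condpos'\le 1$ give
\begin{equation}
\Bigl|\tfrac{\truepos}{\condpos}-\tfrac{\truepos'}{\condpos'}\Bigr|
\le \tfrac{1}{\condpos}|\truepos-\truepos'| + \tfrac{\truepos'}{\condpos'}\tfrac{1}{\condpos}|\condpos-\condpos'|
\le \tfrac{1}{\condpos}|\truepos-\truepos'| + \tfrac{1}{\condpos}|\condpos-\condpos'| \,,
\end{equation}
so that $\Tconst(\condpos)=\Pconst(\condpos)=1/\condpos$ and $\Qconst=0$. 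The remaining metrics follow the same template: for $F_\beta$ the denominator $\beta^2\condpos+\predpos \ge \beta^2\condpos$ controls the $\truepos$- and $\predpos$-derivatives, while $\truepos/(\beta^2\condpos+\predpos)\le 1$ tames the $\condpos$-direction; for Jaccard the denominator $\condpos+\predpos-\truepos\ge\condpos$ plays the same role; and for balanced accuracy, the G-Mean expression, and AUC the two unit-bounded ratios above leave at most one factor $1/\condpos$ or $1/(1-\condpos)$ per coordinate.

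The main obstacle is precisely the $\condpos$-direction, and in particular keeping the bound a function of $\condpos$ rather than of $\condpos'$ or of the intermediate values swept out as $\condpos$ moves to $\condpos'$. The resolution is to order the telescoping so that the $\condpos$-step holds $\truepos',\predpos'$ fixed, and then to use the primed constraints $\truepos'\le\min(\condpos',\predpos')$ (and $\ctrueneg',\cfalsepos'\le 1-\condpos'$) to cancel the spurious $\condpos'$ or $1-\condpos'$ factors, exactly as in the recall computation above. This is also where the requirement $\condpos\in(0,1)$ (non-trivial base rate) is essential, since every $\condpos$-dependent constant is of order $1/\condpos$ or $1/(1-\condpos)$ and must remain finite. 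Finally, the same bookkeeping makes transparent why precision is the notable exception: its denominator $\predpos$ is not controlled by any domain constraint as $\predpos\to 0$, so no finite constant depending on $\condpos$ alone can bound $\partial_\predpos(\truepos/\predpos)=-\truepos/\predpos^2$.
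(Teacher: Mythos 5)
Your proposal is correct and follows essentially the same route as the paper: the paper's own proof handles only recall explicitly (citing Proposition~1 of \citet{Dembczynski_etal_ICML2017} for the remaining metrics), and your recall computation — telescoping through $\truepos'/\condpos$ and bounding $\truepos'/\condpos' \le 1$ — is identical to it. Your plan to verify the other six metrics by the same telescoping-plus-bounded-ratio mechanism is sound and simply makes explicit what the paper delegates to the cited reference.
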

\begin{proof}
Here, we only prove the lemma for recall, which has not been covered by Proposition 1 of
\citet{Dembczynski_etal_ICML2017}.
\begin{align}
|\lossfnc(\truepos,\predpos,\condpos)-\lossfnc(\truepos',\predpos',\condpos')| 
&= \left|\frac{\truepos}{\condpos} - \frac{\truepos'}{\condpos'}\right|
= \left|\frac{\truepos-\truepos'}{\condpos} + \frac{\truepos'}{\condpos'}\frac{\condpos'-\condpos}{\condpos}\right| \nonumber \\
&\le \underbrace{\frac{1}{\condpos}}_{=\Tconst(\condpos)} \left|\truepos-\truepos' \right| + \underbrace{\frac{\truepos'}{\condpos'}}_{\le 1} \cdot
\underbrace{\frac{1}{\condpos}}_{=\Pconst(\condpos)} |\condpos-\condpos'| \,.
\end{align}
\end{proof}

\subsection{Stability of the semi-ETU approximation}
We are now ready to prove that when the metric of interest has $\condpos$-Lipschitz components,
the semi-ETU approximation $\semieturisk$ presented in~\autoref{subsec:ETU_approximation} remains close to the true objective
$\eturisk$. For the sake of convenience, let us recall the definition
of the ETU objective:
\begin{equation}
    \eturisk(\spredmatrix; \sinstancematrix) = \expectation*_{\slabelmatrix|\sinstancematrix}{\taskloss(\slabelmatrix, \spredmatrix)}
    = \sum_{j=1}^{\numlabels} \expectation*_{\slabelmatrix|\sinstancematrix}{\lossfnc^{j}(\truepos(\labelscol[j],\predscol[j]), \predpos(\predscol[j]), \condpos(\labelscol[j]))} \,,
\label{eq:etu_risk_app}
\end{equation}
as well as its approximation:
\begin{equation}
    \semieturisk(\spredmatrix; \sinstancematrix)
    =  
    \sum_{j=1}^{\numlabels} \lossfnc^{j}(\expectation*_{\slabelmatrix|\sinstancematrix}{\truepos(\labelscol[j],\predscol[j])}, \predpos(\predscol[j]), \expectation*_{\slabelmatrix|\sinstancematrix}{\condpos(\labelscol[j])}).
\label{eq:semi_etu_risk_app}
\end{equation}
We prove the following result:
\etuthm*
\begin{proof}
For the sake of the analysis, denote the Lipschitz constants as $\Tconst^j \coloneqq \Tconst^j(\semiemp\condpos_j)$
and $\Pconst^j \coloneqq \Pconst^j(\semiemp\condpos_j)$.
Using definitions \eqref{eq:etu_risk_app} and \eqref{eq:semi_etu_risk_app} and applying Jensen's inequality,
we have
\begin{align}
\left|\eturisk(\predicted\slabelmatrix; \sinstancematrix)-
\semieturisk(\predicted\slabelmatrix; \sinstancematrix)\right|
&= \bigl|
\sum_{j=1}^{\numlabels} \left(\expectation*_{\slabelmatrix|\sinstancematrix}{\lossfnc^{j}(\truepos_j, \predpos_j, \condpos_j)}
- \lossfnc^{j}(\semiemp\truepos_j, \predpos_j, \semiemp\condpos_j)\right) \bigr| \nonumber \\
&\le 
\sum_{j=1}^{\numlabels} \expectation*_{\slabelmatrix|\sinstancematrix}{\left|\lossfnc^{j}(\truepos_j, \predpos_j, \condpos_j)
- \lossfnc^{j}(\semiemp\truepos_j, \predpos_j, \semiemp\condpos_j) \right|}\,,
\label{eq:ETU_proof_bound}
\end{align}
We now bound each term in the sum
by $(\Tconst^j + \Pconst^j)/(2 \sqrt{\numinstances})$,
which will prove the theorem.

For each $j \in [\numlabels]$, using $\condpos$-Lipschitzness of
$\lossfnc_j$ we have:
\begin{align}
\left|\lossfnc^{j}(\truepos_j, \predpos_j, \condpos_j)
- \lossfnc^{j}(\semiemp\truepos_j, \predpos_j, \semiemp\condpos_j) \right| 
&= \left| \lossfnc^{j}(\semiemp\truepos_j, \predpos_j, \semiemp\condpos_j)
- \lossfnc^{j}(\truepos_j, \predpos_j, \condpos_j) \right| \nonumber \\
&\le \Tconst^j |\truepos_j - \semiemp\truepos_j|
+ \Pconst^j |\condpos_j - \semiemp\condpos_j| \,.
\end{align}
Taking expectation on both sides gives:
\begin{align}
\MoveEqLeft
\expectation*_{\slabelmatrix|\sinstancematrix}{\left|\lossfnc^{j}(\truepos_j, \predpos_j, \condpos_j)
- \lossfnc^{j}(\semiemp\truepos_j, \predpos_j, \semiemp\condpos_j) \right|}
 \le \Tconst^j \expectation*_{\slabelmatrix|\sinstancematrix}{|\truepos_j - \semiemp\truepos_j|}
+ \Pconst^j \expectation*_{\slabelmatrix|\sinstancematrix}{| \condpos_j - \semiemp\condpos_j|} \nonumber \\
&= \Tconst^j \expectation*_{\slabelmatrix|\sinstancematrix}{\sqrt{(\truepos_j-\semiemp\truepos_j)^2}}
+ \Pconst^j \expectation*_{\slabelmatrix|\sinstancematrix}{\sqrt{(\condpos_j - \semiemp\condpos_j)^2}} \nonumber \\
&\le \Tconst^j \sqrt{\expectation*_{\slabelmatrix|\sinstancematrix}{(\truepos_j - \semiemp\truepos_j)^2}}
+ \Pconst^j \sqrt{\expectation*_{\slabelmatrix|\sinstancematrix}{(\condpos_j-\semiemp\condpos_j)^2}} \,,
\end{align}
where the last inequality follows from Jensen's inequality
applied to a concave function $x \mapsto \sqrt{x}$. Using
the fact that $\semiemp\truepos_j = \expectation*_{\slabelmatrix|\sinstancematrix}{\truepos_j}$ and $\semiemp\condpos_j = \expectation*_{\slabelmatrix|\sinstancematrix}{\condpos_j}$, we have
\begin{equation}
\expectation*_{\slabelmatrix|\sinstancematrix}{(\truepos_j - \semiemp\truepos_j)^2}
= \mathrm{Var}_{\slabelmatrix|\sinstancematrix}(\truepos_j)
\le \frac{1}{4 \numinstances} \,,
\end{equation}
as $\truepos_j = \numinstances^{-1} \sum_{i=1}^{\numinstances} \slabels[ij] \spreds[ij]$ is an average of $\numinstances$
Bernoulli i.i.d. random variables $\slabels[ij] \spreds[ij]$,
each having variance at most $\frac{1}{4}$; and using the same argument,
$\expectation*_{\slabelmatrix|\sinstancematrix}{(\condpos_j - \semiemp\condpos_j)^2} \le \frac{1}{4 \numinstances}$. This gives
\begin{equation}
\expectation*_{\slabelmatrix|\sinstancematrix}{\left| \lossfnc^{j}(\semiemp\truepos_j, \predpos_j, \semiemp\condpos_j)
- \lossfnc^{j}(\truepos_j, \predpos_j, \condpos_j) \right|}
\le \frac{\Tconst^j + \Pconst^j_{\semiemp\condpos_j}}{2 \sqrt{n}} \,,
\end{equation}
and finishes the proof.
\end{proof}

\subsection{Regret of semi-ETU under model misspecification}
\label{app:regret_bound}

In this section, we quantify the influence of the estimation error of marginal probabilities, proving~\autoref{thm:regret_bound}.

\paragraph{Notation}
To emphasize the dependence of $\eturisk$ on the label probability estimates, in this section we will write
\begin{equation}
    \eturisk(\spredmatrix; \sinstancematrix)
= \expectation*_{\rlabelmatrix \sim \marginals(\sinstancematrix)}{\taskloss(\rlabelmatrix, \spredmatrix)}
\eqqcolon \eturisk(\spredmatrix; \marginals) \,,
\end{equation}
This notation is well-defined, as we have shown in \autoref{app:eta-sufficient} that in fact, the dependence
on $\sinstancematrix$ is mediated \emph{only} through the marginal label probabilities
$\marginals(\sinstancematrix) = \left(\marginals(\sinstance_1), \ldots, \marginals(\sinstance_{\numinstances})\right)$,
and we abbreviate $\marginals(\sinstancematrix)$ as $\marginals$ \,.
Similarly, we will write
\begin{align}
    \semiemp\truepos_j(\marginals) &= \expectation*_{\labelscol[j] \sim \marginals[j](\sinstancematrix)}{\truepos(\labelscol[j],\predscol[j])} = \numinstances^{-1} \sum_{i=1}^{\numinstances} \marginals[j](\sinstance_i) \spreds[ij] \,, \nonumber \\
  \semiemp\condpos_j(\marginals) &= \expectation*_{\labelscol[j] \sim \marginals[j](\sinstancematrix)}{\condpos(\labelscol[j])} = \numinstances^{-1} \sum_{i=1}^{\numinstances} \marginals[j](\sinstance_i) \,.
\end{align}
Note that $\predpos$ is independent of $\marginals$. This allows us to write the
semi-ETU objective as
\begin{equation}
    \semieturisk(\spredmatrix; \marginals) \coloneqq \taskloss(\semiemp\trueposvec, \predposvec, \semiemp\condposvec) \,,
\end{equation}
where we dropped the dependence of $\semiemp\trueposvec$ and $\semiemp\condposvec$ on $\marginals$ as clear from the context.
The optimal (Bayes) predictor $\optpredmatrix$ is the one which maximizes the expected utility 
with regard to the \emph{true} label probabilities. In the notation of this chapter, \eqref{eq:bayes_predictor}
reads
\begin{equation}
\optpredmatrix = \argmax_{\spredmatrix \in \predictspace[k]^{\numinstances}} \eturisk(\spredmatrix; \marginals) \,.
\label{eq:optimal_predictor_regret}
\end{equation}

Unfortunately, the learning algorithm does not know the true label marginals $\marginals(\sinstancematrix)$, but has only access to the estimates $\empirical\marginals(\sinstancematrix) = \left(\empirical\marginals(\sinstance_1), \ldots, \empirical\marginals(\sinstance_{\numinstances})\right)$ for the considered set of instances. The algorithm computes its predictions $\pluginpredmatrix$ by using the estimates in place of the true marginals.
Thus, it can only generate
\begin{equation}
    \pluginpredmatrix = \argmax_{\spredmatrix \in \predictspace[k]^{\numinstances}} \eturisk(\spredmatrix; \empirical\marginals) \,.
\label{eq:plugin_predictor_regret}
\end{equation}

We can make the same definitions also for the semi-empirical ETU optimization, leading to
\begin{equation}
    \semiemp \optpredmatrix = \argmax_{\spredmatrix \in \predictspace[k]^{\numinstances}} \semieturisk(\spredmatrix; \marginals) \quad
    \text{and}\quad 
    \semiemp \pluginpredmatrix = \argmax_{\spredmatrix \in \predictspace[k]^{\numinstances}} \semieturisk(\spredmatrix; \empirical\marginals) \,.
    \label{eq:semi-etu-plugin-predictor}
\end{equation}

\paragraph{Regret for semi-ETU approximation}
First, we show that for metrics with 
$\condpos$-Lipschitz components, the \emph{$\taskloss$-regret} of the resulting semi-ETU predictor \eqref{eq:semi-etu-plugin-predictor},
which is the suboptimality of $\semiemp\pluginpredmatrix$ (with respect to $\optpredmatrix$) in terms of $\eturisk$, is well-controlled and upper-bounded by the estimation error of the marginals.
As the resulting expression is somewhat unwieldy, 
we then apply some further bounding to arrive at the much simpler 
result stated in the main paper in \autoref{thm:regret_bound}.

\begin{lemma}[Misspecification for semi-ETU approximation]
\label{thm:etu-misspecification}
    Let $\taskloss \in \oilosses$ be an instance-order invariant
linearly decomposable loss function that is $\condpos$-Lipschitz, and $\spredmatrix$ 
an arbitrary set of predictions. Then, the difference of the semi-empirical ETU
risk when using two different versions of the marginals, $\marginals$ and $\marginals'$,
is bounded by the difference $\marginals - \marginals'$ through
\begin{equation}
    | \semieturisk(\spredmatrix; \marginals) - \semieturisk(\spredmatrix; \marginals') |
    \leq \numinstances^{-1} \sum_{j=1}^{\numlabels} 
    \left( \Tconst^j(\empirical\truepos^j(\marginals)) + \Pconst^j(\empirical\condpos^j(\marginals)) \right)
    \sum_{i=1}^n |\marginals[j](\sinstance_i) - \marginals[j]'(\sinstance_i)|
    \,.
\end{equation}
\end{lemma}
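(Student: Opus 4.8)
The plan is to reduce the whole estimate, via linear decomposability and the $\condpos$-Lipschitz property, to a per-label bound on how much the two semi-empirical confusion-matrix entries $\semiemp\truepos_j$ and $\semiemp\condpos_j$ move when the marginals change from $\marginals$ to $\marginals'$. First I would expand both sides using $\semieturisk(\spredmatrix; \marginals) = \sum_{j=1}^{\numlabels} \lossfnc^j(\semiemp\truepos_j(\marginals), \predpos_j, \semiemp\condpos_j(\marginals))$ and the analogous expression at $\marginals'$. The key structural observation is that the predicted-positive count $\predpos_j$ depends only on the predictions $\spredmatrix$ and not on the marginals, so it is \emph{identical} in both evaluations. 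The triangle inequality applied to the label sum then gives
\[
| \semieturisk(\spredmatrix; \marginals) - \semieturisk(\spredmatrix; \marginals') |
\le \sum_{j=1}^{\numlabels} \bigl| \lossfnc^j(\semiemp\truepos_j(\marginals), \predpos_j, \semiemp\condpos_j(\marginals)) - \lossfnc^j(\semiemp\truepos_j(\marginals'), \predpos_j, \semiemp\condpos_j(\marginals')) \bigr| \,.
\]

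Next I would invoke the $\condpos$-Lipschitz property of each $\lossfnc^j$. Because the middle argument coincides, the $\Qconst$ term vanishes, and treating the $\marginals$-evaluation as the reference point lands the constants at $\semiemp\condpos_j(\marginals)$, so the $j$-th summand is bounded by
\[
\Tconst^j(\semiemp\condpos_j(\marginals)) \, \bigl| \semiemp\truepos_j(\marginals) - \semiemp\truepos_j(\marginals') \bigr| + \Pconst^j(\semiemp\condpos_j(\marginals)) \, \bigl| \semiemp\condpos_j(\marginals) - \semiemp\condpos_j(\marginals') \bigr| \,.
\]
It then remains to control the two sensitivities. For the conditional positives this is exact: $\semiemp\condpos_j(\marginals) - \semiemp\condpos_j(\marginals') = \numinstances^{-1}\sum_i (\marginals[j](\sinstance_i) - \marginals[j]'(\sinstance_i))$, so by the triangle inequality its magnitude is at most $\numinstances^{-1}\sum_i |\marginals[j](\sinstance_i) - \marginals[j]'(\sinstance_i)|$. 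For the true positives the same difference appears but weighted by $\spreds[ij] \in \{0,1\}$; bounding $\spreds[ij] \le 1$ yields the \emph{identical} upper bound. Substituting both into the displayed per-label inequality and summing over $j$ produces exactly the claimed $\numinstances^{-1} \sum_j (\Tconst^j + \Pconst^j)\sum_i |\marginals[j](\sinstance_i) - \marginals[j]'(\sinstance_i)|$.

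There is no genuinely hard step once this structure is in place; the argument is essentially bookkeeping. The point requiring the most care is the handling of the Lipschitz constants: the $\condpos$-Lipschitz definition pins the constant to the unprimed $\condpos$-argument, so I must consistently designate the $\marginals$-evaluation as the reference to obtain constants at $\semiemp\condpos_j(\marginals)$. A secondary subtlety is that the true-positive and conditional-positive sensitivities respond to the perturbation differently (weighted versus unweighted sums over instances), yet both collapse to the same $\ell_1$ bound precisely because the predictions are binary, i.e.\ $\spreds[ij] \le 1$.
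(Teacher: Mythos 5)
Your proposal is correct and follows essentially the same route as the paper's proof: decompose over labels, apply the triangle inequality, invoke $\condpos$-Lipschitzness with the $\predpos$-argument unchanged, and bound both $|\semiemp\truepos_j(\marginals)-\semiemp\truepos_j(\marginals')|$ and $|\semiemp\condpos_j(\marginals)-\semiemp\condpos_j(\marginals')|$ by $\numinstances^{-1}\sum_i|\marginals[j](\sinstance_i)-\marginals[j]'(\sinstance_i)|$ using $\spreds[ij]\in\{0,1\}$. Your handling of where the Lipschitz constants are evaluated is, if anything, slightly cleaner than the paper's (which has a notational slip, writing the $\Tconst^j$ constant as a function of $\semiemp\truepos^j$ in places).
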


\begin{proof}
Plugging in the definitions, we have
\begin{align}
\MoveEqLeft
    | \semieturisk(\spredmatrix; \marginals) - \semieturisk(\spredmatrix; \marginals) | =
    \left| \smashsum_{j=1}^{\numlabels} \lossfnc^j(\semiemp\truepos^j(\marginals), \predpos^j, \semiemp\condpos^j(\marginals))
    - \smashsum_{j=1}^{\numlabels} \lossfnc^j(\semiemp\truepos^j(\marginals'), \predpos^j, \semiemp\condpos^j(\marginals'))
     \right| \nonumber \\
     &\leq \sum_{j=1}^{\numlabels} \left| \lossfnc^j(\semiemp\truepos^j(\marginals), \predpos^j, \semiemp\condpos^j(\marginals)) -
     \lossfnc^j(\semiemp\truepos^j(\marginals'), \predpos^j, \semiemp\condpos^j(\marginals')) \right|
     \nonumber \\
     &\leq \sum_{j=1}^{\numlabels} \Tconst^j(\semiemp\truepos^j(\marginals)) \left|\semiemp\truepos^j(\marginals) - \semiemp\truepos^j(\marginals') \right| + \Pconst^j(\semiemp\condpos^j(\marginals)) \left|\semiemp\condpos^j(\marginals)  - \semiemp\condpos^j(\marginals') \right| \,,
\end{align}
where the last line used the definition of $\condpos$-Lipschitzness.

The terms under the sum can be bounded by the estimation error of $\marginals'$:
\begin{align}
\left|\semiemp\truepos_j(\marginals) - \semiemp\truepos_j(\marginals')\right|
&= \left| n^{-1} \smashsum_{i=1}^n \spreds[ij] (\marginals[j](\sinstance_i) -
\marginals[j]'(\sinstance_i)) \right|
\le n^{-1} \sum_{i=1}^n |\marginals[j](\sinstance_i) -
\marginals[j]'(\sinstance_i)| \,, \nonumber \\
\left|\semiemp\condpos_j(\marginals) - \semiemp\condpos_j(\marginals') \right|
&= \left| n^{-1} \smashsum_{i=1}^n  (\marginals[j](\sinstance_i) -
\marginals[j]'(\sinstance_i)) \right|
\le n^{-1} \sum_{i=1}^n |\marginals[j](\sinstance_i) -
\marginals[j]'(\sinstance_i)| \,, 
\end{align}
where in the first line we used $\spreds[ij] \in \{0,1\}$.
\end{proof}

\begin{lemma}[Regret bound for semi-ETU approximation]
\label{thm:regret_bound_full_approx}
Let $\taskloss \in \oilosses$ be an instance-order invariant
linearly decomposable loss function that is $\condpos$-Lipschitz.
Then we have
\begin{multline}
\eturisk(\optpredmatrix; \sinstancematrix)
- \eturisk(\approxpredmatrix; \sinstancematrix)
\le 
     \frac{1}{\sqrt{n}} \sum_{j=1}^{\numlabels} \bigl(\Tconst^j(\semiemp\condpos_j(\marginals)) + \Pconst^j(\semiemp\condpos_j(\marginals)) \bigr) 
+ {} \\
2 \sum_{j=1}^{\numlabels}
\frac{\Tconst^j(\semiemp\condpos_j(\marginals)) + \Pconst^j(\condpos_j(\marginals))}{n} \sum_{i=1}^n |\marginals[j](\sinstance_i) -
\empirical\marginals[j](\sinstance_i)|  \,.
\end{multline}
\end{lemma}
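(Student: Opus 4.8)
The plan is to prove the bound by a telescoping decomposition that interpolates between the true ETU objective $\eturisk(\cdot;\marginals)$, which $\optpredmatrix$ maximizes, and the semi-empirical objective $\semieturisk(\cdot;\empirical\marginals)$, which $\approxpredmatrix$ maximizes by definition. Working in the marginal-dependence notation of this section, I would insert the four intermediate quantities $\semieturisk(\optpredmatrix;\marginals)$, $\semieturisk(\optpredmatrix;\empirical\marginals)$, $\semieturisk(\approxpredmatrix;\empirical\marginals)$, and $\semieturisk(\approxpredmatrix;\marginals)$ and write
\begin{align}
\eturisk(\optpredmatrix;\marginals) - \eturisk(\approxpredmatrix;\marginals)
&= \bigl[\eturisk(\optpredmatrix;\marginals) - \semieturisk(\optpredmatrix;\marginals)\bigr]
 + \bigl[\semieturisk(\optpredmatrix;\marginals) - \semieturisk(\optpredmatrix;\empirical\marginals)\bigr] \nonumber \\
&\quad {} + \bigl[\semieturisk(\optpredmatrix;\empirical\marginals) - \semieturisk(\approxpredmatrix;\empirical\marginals)\bigr]
 + \bigl[\semieturisk(\approxpredmatrix;\empirical\marginals) - \semieturisk(\approxpredmatrix;\marginals)\bigr] \nonumber \\
&\quad {} + \bigl[\semieturisk(\approxpredmatrix;\marginals) - \eturisk(\approxpredmatrix;\marginals)\bigr] \,.
\end{align}
Each bracket changes exactly one ingredient at a time --- either swapping $\eturisk$ for $\semieturisk$ at fixed predictions and marginals, or swapping $\marginals$ for $\empirical\marginals$ at a fixed predictor --- so each reduces directly to an already-established estimate.

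Concretely, the first and last brackets are the ETU-versus-semi-ETU approximation error evaluated at $\optpredmatrix$ and $\approxpredmatrix$ respectively; bounding their absolute values by \autoref{thm:etu_approximation} gives $\tfrac{1}{2\sqrt{\numinstances}}\sum_{j=1}^{\numlabels}(\Tconst^j(\semiemp\condpos_j(\marginals)) + \Pconst^j(\semiemp\condpos_j(\marginals)))$ each, which together produce the first summand $\tfrac{1}{\sqrt{\numinstances}}\sum_j(\Tconst^j + \Pconst^j)$ of the claim. The third bracket is nonpositive, since $\approxpredmatrix = \argmax_{\spredmatrix}\semieturisk(\spredmatrix;\empirical\marginals)$ implies $\semieturisk(\optpredmatrix;\empirical\marginals) \le \semieturisk(\approxpredmatrix;\empirical\marginals)$, so this term simply drops out. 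The second and fourth brackets are semi-ETU misspecification differences between $\marginals$ and $\empirical\marginals$ at a fixed predictor; bounding each by \autoref{thm:etu-misspecification} (applied with $\marginals' = \empirical\marginals$) and adding yields the factor $2$ in the second summand $2\sum_j \tfrac{\Tconst^j + \Pconst^j}{\numinstances}\sum_{i=1}^{\numinstances}|\marginals[j](\sinstance_i) - \empirical\marginals[j](\sinstance_i)|$.

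The main obstacle here is bookkeeping rather than hard analysis: the decomposition must be arranged so that the optimality of $\approxpredmatrix$ can actually be invoked, which forces both $\optpredmatrix$ and $\approxpredmatrix$ to be compared under the single common objective $\semieturisk(\cdot;\empirical\marginals)$ in the middle bracket, while every remaining difference is reduced to one of the two cited bounds. I would emphasize that optimality of $\optpredmatrix$ for $\eturisk$ is never used --- the inequality holds for $\optpredmatrix$ merely as the competitor of interest --- and that no estimate beyond \autoref{thm:etu_approximation} and \autoref{thm:etu-misspecification} is needed to complete the proof.
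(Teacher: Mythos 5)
Your proposal is correct and follows essentially the same route as the paper: the paper telescopes through $\semieturisk(\cdot;\empirical\marginals)$, drops the middle term by optimality of $\approxpredmatrix$, bounds the two remaining differences by a supremum of $|\eturisk(\spredmatrix;\marginals)-\semieturisk(\spredmatrix;\empirical\marginals)|$, and then splits that supremum via \autoref{thm:etu_approximation} and \autoref{thm:etu-misspecification}. Your five-term decomposition simply performs that final split inside the telescope rather than after the supremum bound, which is an immaterial reorganization.
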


\begin{proof}
Using the optimality of $\approxpredmatrix$ and a supremum bound, we get
\begin{align}
\eturisk(\optpredmatrix; \marginals)
- \eturisk(\approxpredmatrix; \marginals)
&= 
\eturisk(\optpredmatrix; \marginals)
- \semieturisk(\optpredmatrix; \empirical\marginals) \nonumber \\
&\qquad+ \semieturisk(\optpredmatrix; \empirical\marginals)
- \semieturisk(\approxpredmatrix; \empirical\marginals) \nonumber \\
&\qquad+ \semieturisk(\approxpredmatrix; \empirical\marginals)
- \eturisk(\approxpredmatrix; \marginals) \nonumber \\
&\le
\eturisk(\optpredmatrix; \marginals)
- \semieturisk(\optpredmatrix; \empirical\marginals) \\
&\qquad+ \semieturisk(\approxpredmatrix; \empirical\marginals)
- \eturisk(\approxpredmatrix; \marginals)
 \nonumber \\
&\le 2 \sup_{\spredmatrix} \left|\eturisk(\spredmatrix; \marginals)
- \semieturisk(\spredmatrix; \empirical\marginals) \right| \,.
\end{align}

We then use \autoref{thm:etu_approximation} and \autoref{thm:etu-misspecification}
to bound
\begin{align}
\MoveEqLeft
 \Big|\eturisk(\spredmatrix; \marginals)
- \semieturisk(\spredmatrix; \empirical\marginals) \Big| \nonumber \\
&=
 \left|\eturisk(\spredmatrix; \marginals)
 - \semieturisk(\spredmatrix; \marginals) + \semieturisk(\spredmatrix; \marginals)
- \semieturisk(\spredmatrix; \empirical\marginals) \right| \nonumber \\
&\le
 \left|\eturisk(\spredmatrix; \marginals)
 - \semieturisk(\spredmatrix; \marginals) \right| + \left|\semieturisk(\spredmatrix; \marginals)
- \semieturisk(\spredmatrix; \empirical\marginals) \right| \nonumber \\
&\le \frac{1}{2\sqrt{\numinstances}} \left( \smashsum_{j=1}^{\numlabels} (\Tconst^j(\semiemp\condpos_j) + \Pconst^j(\semiemp\condpos_j))\right)
+ \sum_{j=1}^{\numlabels}
\frac{\Tconst^j(\semiemp\condpos_j) + \Pconst^j(\semiemp\condpos_j)}{n} \sum_{i=1}^n |\marginals[j](\sinstance_i) -
\empirical\marginals[j](\sinstance_i)| \,,
\end{align}
where we use the shorthand $\semiemp\condpos_j = \semiemp\condpos_j(\marginals)$.
\end{proof}

At the cost of having a less strict bound, we can simplify this to 
\thmregbound*

\begin{proof}
Using Cauchy-Schwartz, we can bound
\begin{align}
    \MoveEqLeft
        \sum_{j=1}^{\numlabels}
\frac{\Tconst^j(\semiemp\condpos_j(\marginals)) + \Pconst^j(\condpos_j(\marginals))}{n} \sum_{i=1}^n |\marginals[j](\sinstance_i) -
\empirical\marginals[j](\sinstance_i)| \nonumber
\\ &\leq \numinstances^{-1} \sum_{i=1}^n \sqrt{\sum_{j=1}^{\numlabels} \bigl(\Tconst^j(\semiemp\condpos_j(\marginals)) + \Pconst^j(\condpos_j(\marginals)) \bigr)^2} 
\cdot \sqrt{\sum_{j=1}^{\numlabels} (\marginals[j](\sinstance_i) -
\empirical\marginals[j](\sinstance_i))^2} \nonumber \\
&= \numinstances^{-1} \sqrt{\sum_{j=1}^{\numlabels} \bigl(\Tconst^j(\semiemp\condpos_j(\marginals)) + \Pconst^j(\condpos_j(\marginals)) \bigr)^2} \cdot \sum_{i=1}^n 
\sqrt{\|\marginals(\sinstance_i) - \empirical\marginals(\sinstance_i)\|_2^2} \nonumber \\
&= \frac{\sqrt{\numlabels}}{\numinstances} B \sum_{i=1}^n \|\marginals(\sinstance_i) - \empirical\marginals(\sinstance_i)\|_2 \,.
\end{align}
For the other term, we can use the inequality between arithmetic and quadratic mean, so that
\begin{equation}
        \sum_{j=1}^{\numlabels} \bigl(\Tconst^j(\semiemp\condpos_j(\marginals)) + \Pconst^j(\semiemp\condpos_j(\marginals)) \bigr) 
        \leq \numlabels \sqrt{\numlabels^{-1} \sum_{j=1}^{\numlabels} \bigl(\Tconst^j(\semiemp\condpos_j(\marginals)) + \Pconst^j(\semiemp\condpos_j(\marginals)) \bigr)^2 } = \numlabels B \,.
\end{equation}
\end{proof}

\subsection{Regret for non-approximated ETU}
\label{app:unapprox-etu-regret}
We can formulate the equivalent of \autoref{thm:regret_bound_full_approx}
also for the maximizer of the true empirical ETU risk, $\pluginpredmatrix$:
\begin{lemma}[Regret bound for ETU maximization]
\label{thm:regret_bound_full}
Let $\taskloss \in \oilosses$ be an instance-order invariant
linearly decomposable loss function that is $\condpos$-Lipschitz.
Then we have
\begin{multline}
\eturisk(\optpredmatrix; \sinstancematrix)
- \eturisk(\pluginpredmatrix; \sinstancematrix)
\le \frac{1}{\sqrt{n}} \sum_{j=1}^{\numlabels} \bigl(\Tconst^j(\semiemp\condpos_j(\marginals)) + \Tconst^j(\semiemp\condpos_j(\empirical\marginals)) + 
    \Pconst^j(\semiemp\condpos_j(\marginals)) + \Pconst^j(\semiemp\condpos_j(\empirical\marginals)) \bigr) 
+ {} \\
2 \sum_{j=1}^{\numlabels}
\frac{\Tconst^j(\semiemp\condpos_j(\marginals)) + \Pconst^j(\condpos_j(\marginals))}{n} \sum_{i=1}^n |\marginals[j](\sinstance_i) -
\empirical\marginals[j](\sinstance_i)|  \,.
\end{multline}
\end{lemma}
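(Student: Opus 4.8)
The plan is to follow the argument of \autoref{thm:regret_bound_full_approx} almost verbatim, changing only the pivot point of the telescoping. Since $\pluginpredmatrix$ is defined in \eqref{eq:plugin_predictor_regret} as the maximizer of the \emph{full} empirical ETU risk $\eturisk(\cdot; \empirical\marginals)$ (rather than of its semi-empirical approximation), I would decompose
\begin{align}
\eturisk(\optpredmatrix; \marginals) - \eturisk(\pluginpredmatrix; \marginals)
&= \bigl(\eturisk(\optpredmatrix; \marginals) - \eturisk(\optpredmatrix; \empirical\marginals)\bigr) \nonumber \\
&\quad + \bigl(\eturisk(\optpredmatrix; \empirical\marginals) - \eturisk(\pluginpredmatrix; \empirical\marginals)\bigr) \nonumber \\
&\quad + \bigl(\eturisk(\pluginpredmatrix; \empirical\marginals) - \eturisk(\pluginpredmatrix; \marginals)\bigr) \,.
\end{align}
The optimality of $\pluginpredmatrix$ for $\eturisk(\cdot; \empirical\marginals)$ makes the middle bracket nonpositive, so it is dropped, and each remaining bracket is at most $\sup_{\spredmatrix} |\eturisk(\spredmatrix; \marginals) - \eturisk(\spredmatrix; \empirical\marginals)|$. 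This reduces the regret to
\begin{equation}
\eturisk(\optpredmatrix; \marginals) - \eturisk(\pluginpredmatrix; \marginals) \le 2 \sup_{\spredmatrix} \bigl|\eturisk(\spredmatrix; \marginals) - \eturisk(\spredmatrix; \empirical\marginals)\bigr| \,.
\end{equation}

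The crux, and the single place where this proof departs from that of \autoref{thm:regret_bound_full_approx}, is bounding the difference of two \emph{unapproximated} ETU risks inside the supremum. I would insert the semi-empirical risk on \emph{both} sides and apply the triangle inequality:
\begin{align}
\bigl|\eturisk(\spredmatrix; \marginals) - \eturisk(\spredmatrix; \empirical\marginals)\bigr|
&\le \bigl|\eturisk(\spredmatrix; \marginals) - \semieturisk(\spredmatrix; \marginals)\bigr| \nonumber \\
&\quad + \bigl|\semieturisk(\spredmatrix; \marginals) - \semieturisk(\spredmatrix; \empirical\marginals)\bigr| \nonumber \\
&\quad + \bigl|\semieturisk(\spredmatrix; \empirical\marginals) - \eturisk(\spredmatrix; \empirical\marginals)\bigr| \,.
\end{align}
The first and third terms are precisely the approximation gap controlled by \autoref{thm:etu_approximation}; I would invoke that theorem twice, once with the label distribution taken to be $\marginals$ and once with it taken to be $\empirical\marginals$, which is legitimate because \autoref{thm:etu_approximation} holds for any marginals and any fixed $\spredmatrix$. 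This yields $\tfrac{1}{2\sqrt{\numinstances}} \sum_j (\Tconst^j(\semiemp\condpos_j(\marginals)) + \Pconst^j(\semiemp\condpos_j(\marginals)))$ and $\tfrac{1}{2\sqrt{\numinstances}} \sum_j (\Tconst^j(\semiemp\condpos_j(\empirical\marginals)) + \Pconst^j(\semiemp\condpos_j(\empirical\marginals)))$. The middle term is the semi-ETU misspecification gap, bounded by \autoref{thm:etu-misspecification} with $\marginals' = \empirical\marginals$, giving $\numinstances^{-1} \sum_j (\Tconst^j(\semiemp\condpos_j(\marginals)) + \Pconst^j(\semiemp\condpos_j(\marginals))) \sum_i |\marginals[j](\sinstance_i) - \empirical\marginals[j](\sinstance_i)|$. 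Crucially, none of these three bounds depends on $\spredmatrix$, since the positive rates $\semiemp\condpos_j$ entering the Lipschitz constants are fixed by the marginals alone and not by the predictions, so the supremum can be taken trivially.

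Collecting the pieces and carrying the outer factor of $2$ reproduces the claimed inequality: twice the two approximation terms combine into the $\tfrac{1}{\sqrt{\numinstances}}$ summand carrying all four constants $\Tconst^j(\semiemp\condpos_j(\marginals)), \Tconst^j(\semiemp\condpos_j(\empirical\marginals)), \Pconst^j(\semiemp\condpos_j(\marginals)), \Pconst^j(\semiemp\condpos_j(\empirical\marginals))$, while twice the misspecification term gives the estimation-error summand with prefactor $2$. I expect the main obstacle to be conceptual rather than computational: because $\pluginpredmatrix$ optimizes the full ETU, one cannot route the comparison through a single semi-empirical pivot as in the approximated case, and there is no misspecification lemma available for the full ETU directly. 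One is therefore forced to pay the \autoref{thm:etu_approximation} error twice — once at $\marginals$ and once at $\empirical\marginals$ — which is exactly what produces the symmetric pair of Lipschitz terms distinguishing this bound from that of \autoref{thm:regret_bound_full_approx}.
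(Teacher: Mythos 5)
Your proposal is correct and follows essentially the same route as the paper: the same three-term telescoping through $\eturisk(\cdot;\empirical\marginals)$ with the middle term dropped by optimality of $\pluginpredmatrix$, the same reduction to $2\sup_{\spredmatrix}|\eturisk(\spredmatrix;\marginals)-\eturisk(\spredmatrix;\empirical\marginals)|$, and the same insertion of $\semieturisk$ on both sides so that \autoref{thm:etu_approximation} is paid twice (once at $\marginals$, once at $\empirical\marginals$) and \autoref{thm:etu-misspecification} once. Your closing observation about why the four-constant term appears, in contrast to \autoref{thm:regret_bound_full_approx}, matches the paper's reasoning exactly.
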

\begin{proof}
    Following the same line of argument as for \autoref{thm:regret_bound_full_approx},
    we get
\begin{align}
\eturisk(\optpredmatrix; \marginals)
- \eturisk(\pluginpredmatrix; \marginals)
&= 
\eturisk(\optpredmatrix; \marginals)
- \eturisk(\optpredmatrix; \empirical\marginals) \nonumber \\
&\qquad+ \eturisk(\optpredmatrix; \empirical\marginals)
- \eturisk(\pluginpredmatrix; \empirical\marginals) \nonumber \\
&\qquad+ \eturisk(\pluginpredmatrix; \empirical\marginals)
- \eturisk(\pluginpredmatrix; \marginals) \nonumber \\
&\le
\eturisk(\optpredmatrix; \marginals)
- \eturisk(\optpredmatrix; \empirical\marginals) \nonumber \\
&\qquad+ \eturisk(\pluginpredmatrix; \empirical\marginals)
- \eturisk(\pluginpredmatrix; \marginals)
 \nonumber \\
&\le 2 \sup_{\spredmatrix} \left|\eturisk(\spredmatrix; \marginals)
- \eturisk(\spredmatrix; \empirical\marginals) \right| \,.
\end{align}

Next, we make use of the semi-empirical ETU risk to bound
\begin{align}
    \left|\eturisk(\spredmatrix; \marginals)
- \eturisk(\spredmatrix; \empirical\marginals) \right| 
\leq& \left|\eturisk(\spredmatrix; \marginals) - \semieturisk(\spredmatrix; \marginals) \right| \nonumber \\
& \quad {} + \left|\semieturisk(\spredmatrix; \marginals) - \semieturisk(\spredmatrix; \empirical\marginals) \right| \nonumber \\
& \quad {} + \left|\semieturisk(\spredmatrix; \empirical\marginals) - \eturisk(\spredmatrix; \empirical\marginals) \right| 
\end{align}
The individual terms can now again be bounded by \autoref{thm:etu_approximation}
and \autoref{thm:etu-misspecification}.
\end{proof}

\clearpage

\section{Sparse and greedy algorithms}
\label{app:sparse-algorithms}

In this section, we present pseudocodes for sparse variants of the introduced algorithms, which we used in the main experiment to compute efficiently the results for large datasets.
These variants use \emph{compressed sparse row} (CSR) representation for row vectors. CSR vectors are represented as a list of tuples $\boldsymbol{a}_{i}^{\text{csr}} := \{(\text{index}, \text{value}) : \text{value} \neq 0\}$.
In these algorithms, we replace $\empirical \marginals(\sinstance_i)$ and $\spreds_i$ with their sparse variants, $\empirical \marginals(\sinstance_i)^{\text{csr}} := \{ (j, \empirical \marginals[j](\sinstance_i)) : \empirical \marginals[j](\sinstance_i) \neq 0 \}$ and $\spreds_i^{\text{csr}} := \{ (j, \spreds[ij]) : \spreds[ij] \neq 0 \}$, respectively. We need to ensure that of both $\empirical \marginals(\sinstance_i)^{\text{csr}}$ and $\spreds_i^{\text{csr}}$ are ordered by their indices for efficient element-wise multiplication (Hadamard product) between them. If we assume that the size of $|\spreds_{i}^{\text{csr}}| = k$ and $|\empirical \marginals(\sinstance_i)^{\text{csr}}| = k'$, the resulting time and space complexity of the sparse algorithms are $\mathcal{O}(\numinstances(k' + k \log k))$ and $\mathcal{O}(\numinstances(k' + k))$, respectively.
Additionally, we present the greedy variants of sparse BCA algorithms introduced so far that do only one pass over the dataset.

We present the sparse variant of BCA (\autoref{alg:bca}) in \autoref{alg:bca-sparse}
and the sparse variant of Greedy (\autoref{alg:greedy}) in \autoref{alg:greedy-sparse}. 
The sparse versions of their specialized counterparts for coverage are presented in \autoref{alg:bca-coverage-sparse} and \autoref{alg:greedy-coverage-sparse}.

\begin{figure}
\begin{minipage}[t]{0.49\linewidth}
\input{algorithms/bca-sparse}
\end{minipage}%
\hfill%
\begin{minipage}[t]{0.49\linewidth}
\input{algorithms/bca-coverage-sparse}
\end{minipage}
\end{figure}

\begin{figure}
\begin{minipage}[t]{0.49\linewidth}
\input{algorithms/greedy-sparse}
\end{minipage}%
\hfill%
\begin{minipage}[t]{0.49\linewidth}
\input{algorithms/greedy-coverage-sparse}
\end{minipage}
\end{figure}

\clearpage

\section{Extended results, experiments and details of empirical comparison}
\label{app:extended-results}

\subsection{Datasets characteristics}
In \autoref{tab:alldata} we include an overview of the main characteristics of the
datasets used in this paper.

\begin{table}[!ht]
    \centering
    \caption{Multi-label datasets from XMLC repository~\cite{Bhatia_et_al_2016}. APpL
    and ALpP represent average points per label and average labels per point,
    respectively.}
    \footnotesize
    \label{tab:alldata}
    \pgfplotstabletypeset[
    every head row/.style={before row={\toprule}, after row=\midrule,},
    every last row/.style={after row=\bottomrule}, 
    columns={Dataset,Labels,Instances,Test,Features,APpL,ALpP},
    columns/Dataset/.style={string type, column type={l|}},
    columns/Instances/.style={fixed, fixed zerofill, precision=0, column type=r, column name={\#Training}},
    columns/Test/.style={fixed, fixed zerofill, precision=0, column type=r, column name={\#Testing}},
    columns/Features/.style={fixed, fixed zerofill, precision=0, column type=r, column name={\#Features}},
    columns/Labels/.style={fixed, fixed zerofill,  precision=0, column type=r, column name={\#Labels}},
    columns/APpL/.style={fixed, fixed zerofill,  precision=1, column type=r, column name={APpL}},
    columns/ALpP/.style={fixed zerofill, precision=1, column name={ALpP}, column type={r}}
    ]
    {tables/data/datasets.txt}
\end{table}

\begin{table}%
\caption{Mean results with standard deviation of different inference strategies on $@k$ measures calculated with $k \in \{1,3\}$, using probability estimates from \lightxml model. Notation: P---precision, R---recall, F1---F1-measure, Cov---Coverage. The \smash{\colorbox{Green!20!white}{green color}} indicates cells in which the strategy matches the metric. The best results are in \textbf{bold} and the second best are in \textit{italic}.}
\small
\centering

\pgfplotstableset{
greencell/.style={postproc cell content/.append style={/pgfplots/table/@cell content/.add={\cellcolor{Green!20!white}}{},}}}

\resizebox{\linewidth}{!}{
\setlength\tabcolsep{2.5 pt}
\pgfplotstabletypeset[
    every row no 0/.append style={before row={
     & \multicolumn{24}{c}{\eurlex} \\ \midrule
    }},
    every row no 13/.append style={before row={ \midrule
     & \multicolumn{24}{c}{\amazoncatsmall} \\ \midrule
    }},
    every row no 26/.append style={before row={ \midrule
     & \multicolumn{24}{c}{\wiki} \\ \midrule
    }},
    every row no 39/.append style={before row={ \midrule
     & \multicolumn{24}{c}{\wikipedia} \\ \midrule
    }},
    every row no 52/.append style={before row={ \midrule
     & \multicolumn{24}{c}{\amazon} \\ \midrule
    }},
    every row no 5/.append style={before row={\midrule}},
    every row no 18/.append style={before row={\midrule}},
    every row no 31/.append style={before row={\midrule}},
    every row no 44/.append style={before row={\midrule}},
    every row no 57/.append style={before row={\midrule}},
every row no 0 column no 1/.style={greencell},
every row no 0 column no 2/.style={greencell},
every row no 0 column no 13/.style={greencell},
every row no 0 column no 14/.style={greencell},
every row no 13 column no 1/.style={greencell},
every row no 13 column no 2/.style={greencell},
every row no 13 column no 13/.style={greencell},
every row no 13 column no 14/.style={greencell},
every row no 26 column no 1/.style={greencell},
every row no 26 column no 2/.style={greencell},
every row no 26 column no 13/.style={greencell},
every row no 26 column no 14/.style={greencell},
every row no 39 column no 1/.style={greencell},
every row no 39 column no 2/.style={greencell},
every row no 39 column no 13/.style={greencell},
every row no 39 column no 14/.style={greencell},
every row no 52 column no 1/.style={greencell},
every row no 52 column no 2/.style={greencell},
every row no 52 column no 13/.style={greencell},
every row no 52 column no 14/.style={greencell},
every row no 5 column no 5/.style={greencell},
every row no 5 column no 6/.style={greencell},
every row no 5 column no 17/.style={greencell},
every row no 5 column no 18/.style={greencell},
every row no 18 column no 5/.style={greencell},
every row no 18 column no 6/.style={greencell},
every row no 18 column no 17/.style={greencell},
every row no 18 column no 18/.style={greencell},
every row no 31 column no 5/.style={greencell},
every row no 31 column no 6/.style={greencell},
every row no 31 column no 17/.style={greencell},
every row no 31 column no 18/.style={greencell},
every row no 44 column no 5/.style={greencell},
every row no 44 column no 6/.style={greencell},
every row no 44 column no 17/.style={greencell},
every row no 44 column no 18/.style={greencell},
every row no 57 column no 5/.style={greencell},
every row no 57 column no 6/.style={greencell},
every row no 57 column no 17/.style={greencell},
every row no 57 column no 18/.style={greencell},
every row no 6 column no 5/.style={greencell},
every row no 6 column no 6/.style={greencell},
every row no 6 column no 17/.style={greencell},
every row no 6 column no 18/.style={greencell},
every row no 19 column no 5/.style={greencell},
every row no 19 column no 6/.style={greencell},
every row no 19 column no 17/.style={greencell},
every row no 19 column no 18/.style={greencell},
every row no 32 column no 5/.style={greencell},
every row no 32 column no 6/.style={greencell},
every row no 32 column no 17/.style={greencell},
every row no 32 column no 18/.style={greencell},
every row no 45 column no 5/.style={greencell},
every row no 45 column no 6/.style={greencell},
every row no 45 column no 17/.style={greencell},
every row no 45 column no 18/.style={greencell},
every row no 58 column no 5/.style={greencell},
every row no 58 column no 6/.style={greencell},
every row no 58 column no 17/.style={greencell},
every row no 58 column no 18/.style={greencell},
every row no 7 column no 7/.style={greencell},
every row no 7 column no 8/.style={greencell},
every row no 7 column no 19/.style={greencell},
every row no 7 column no 20/.style={greencell},
every row no 20 column no 7/.style={greencell},
every row no 20 column no 8/.style={greencell},
every row no 20 column no 19/.style={greencell},
every row no 20 column no 20/.style={greencell},
every row no 33 column no 7/.style={greencell},
every row no 33 column no 8/.style={greencell},
every row no 33 column no 19/.style={greencell},
every row no 33 column no 20/.style={greencell},
every row no 46 column no 7/.style={greencell},
every row no 46 column no 8/.style={greencell},
every row no 46 column no 19/.style={greencell},
every row no 46 column no 20/.style={greencell},
every row no 59 column no 7/.style={greencell},
every row no 59 column no 8/.style={greencell},
every row no 59 column no 19/.style={greencell},
every row no 59 column no 20/.style={greencell},
every row no 8 column no 7/.style={greencell},
every row no 8 column no 8/.style={greencell},
every row no 8 column no 19/.style={greencell},
every row no 8 column no 20/.style={greencell},
every row no 21 column no 7/.style={greencell},
every row no 21 column no 8/.style={greencell},
every row no 21 column no 19/.style={greencell},
every row no 21 column no 20/.style={greencell},
every row no 34 column no 7/.style={greencell},
every row no 34 column no 8/.style={greencell},
every row no 34 column no 19/.style={greencell},
every row no 34 column no 20/.style={greencell},
every row no 47 column no 7/.style={greencell},
every row no 47 column no 8/.style={greencell},
every row no 47 column no 19/.style={greencell},
every row no 47 column no 20/.style={greencell},
every row no 60 column no 7/.style={greencell},
every row no 60 column no 8/.style={greencell},
every row no 60 column no 19/.style={greencell},
every row no 60 column no 20/.style={greencell},
every row no 9 column no 9/.style={greencell},
every row no 9 column no 10/.style={greencell},
every row no 9 column no 21/.style={greencell},
every row no 9 column no 22/.style={greencell},
every row no 22 column no 9/.style={greencell},
every row no 22 column no 10/.style={greencell},
every row no 22 column no 21/.style={greencell},
every row no 22 column no 22/.style={greencell},
every row no 35 column no 9/.style={greencell},
every row no 35 column no 10/.style={greencell},
every row no 35 column no 21/.style={greencell},
every row no 35 column no 22/.style={greencell},
every row no 48 column no 9/.style={greencell},
every row no 48 column no 10/.style={greencell},
every row no 48 column no 21/.style={greencell},
every row no 48 column no 22/.style={greencell},
every row no 61 column no 9/.style={greencell},
every row no 61 column no 10/.style={greencell},
every row no 61 column no 21/.style={greencell},
every row no 61 column no 22/.style={greencell},
every row no 10 column no 9/.style={greencell},
every row no 10 column no 10/.style={greencell},
every row no 10 column no 21/.style={greencell},
every row no 10 column no 22/.style={greencell},
every row no 23 column no 9/.style={greencell},
every row no 23 column no 10/.style={greencell},
every row no 23 column no 21/.style={greencell},
every row no 23 column no 22/.style={greencell},
every row no 36 column no 9/.style={greencell},
every row no 36 column no 10/.style={greencell},
every row no 36 column no 21/.style={greencell},
every row no 36 column no 22/.style={greencell},
every row no 49 column no 9/.style={greencell},
every row no 49 column no 10/.style={greencell},
every row no 49 column no 21/.style={greencell},
every row no 49 column no 22/.style={greencell},
every row no 62 column no 9/.style={greencell},
every row no 62 column no 10/.style={greencell},
every row no 62 column no 21/.style={greencell},
every row no 62 column no 22/.style={greencell},
every row no 11 column no 11/.style={greencell},
every row no 11 column no 12/.style={greencell},
every row no 11 column no 23/.style={greencell},
every row no 11 column no 24/.style={greencell},
every row no 24 column no 11/.style={greencell},
every row no 24 column no 12/.style={greencell},
every row no 24 column no 23/.style={greencell},
every row no 24 column no 24/.style={greencell},
every row no 37 column no 11/.style={greencell},
every row no 37 column no 12/.style={greencell},
every row no 37 column no 23/.style={greencell},
every row no 37 column no 24/.style={greencell},
every row no 50 column no 11/.style={greencell},
every row no 50 column no 12/.style={greencell},
every row no 50 column no 23/.style={greencell},
every row no 50 column no 24/.style={greencell},
every row no 63 column no 11/.style={greencell},
every row no 63 column no 12/.style={greencell},
every row no 63 column no 23/.style={greencell},
every row no 63 column no 24/.style={greencell},
every row no 12 column no 11/.style={greencell},
every row no 12 column no 12/.style={greencell},
every row no 12 column no 23/.style={greencell},
every row no 12 column no 24/.style={greencell},
every row no 25 column no 11/.style={greencell},
every row no 25 column no 12/.style={greencell},
every row no 25 column no 23/.style={greencell},
every row no 25 column no 24/.style={greencell},
every row no 38 column no 11/.style={greencell},
every row no 38 column no 12/.style={greencell},
every row no 38 column no 23/.style={greencell},
every row no 38 column no 24/.style={greencell},
every row no 51 column no 11/.style={greencell},
every row no 51 column no 12/.style={greencell},
every row no 51 column no 23/.style={greencell},
every row no 51 column no 24/.style={greencell},
every row no 64 column no 11/.style={greencell},
every row no 64 column no 12/.style={greencell},
every row no 64 column no 23/.style={greencell},
every row no 64 column no 24/.style={greencell},
    every head row/.style={
    before row={\toprule Inference & \multicolumn{4}{c|}{Instance @1}& \multicolumn{8}{c|}{Macro @1} & \multicolumn{4}{c|}{Instance @3}& \multicolumn{8}{c}{Macro @3} \\},
    after row={\midrule}},
    every last row/.style={after row=\bottomrule},
    columns={method,iP@1,iP@1std,iR@1,iR@1std,mP@1,mP@1std,mR@1,mR@1std,mF@1,mF@1std,mC@1,mC@1std,iP@3,iP@3std,iR@3,iR@3std,mP@3,mP@3std,mR@3,mR@3std,mF@3,mF@3std,mC@3,mC@3std},
    columns/method/.style={string type, column type={l|}, column name={strategy}},
    columns/iP@1/.style={string type,column name={P},column type={r@{ }}},
    columns/mF@1/.style={string type,column name={F1},column type={r@{ }}},
    columns/iR@1/.style={string type,column name={R},column type={r@{ }}},
    columns/mP@1/.style={string type,column name={P},column type={r@{ }}},
    columns/mC@1/.style={string type,column name={Cov},column type={r@{ }}},
    columns/mR@1/.style={string type,column name={R},column type={r@{ }}},
    columns/iP@3/.style={string type,column name={P},column type={r@{ }}},
    columns/mF@3/.style={string type,column name={F1},column type={r@{ }}},
    columns/iR@3/.style={string type,column name={R},column type={r@{ }}},
    columns/mP@3/.style={string type,column name={P},column type={r@{ }}},
    columns/mC@3/.style={string type,column name={Cov},column type={r@{ }}},
    columns/mR@3/.style={string type,column name={R},column type={r@{ }}},
    columns/iP@1std/.style={string type,column type={l},column name={std},column type/.add={>{\scriptsize $\pm$}}{}},
    columns/mF@1std/.style={string type,column type={l},column name={std},column type/.add={>{\scriptsize $\pm$}}{}},
    columns/iR@1std/.style={string type,column type={l|},column name={std},column type/.add={>{\scriptsize $\pm$}}{}},
    columns/mP@1std/.style={string type,column type={l},column name={std},column type/.add={>{\scriptsize $\pm$}}{}},
    columns/mC@1std/.style={string type,column type={l|},column name={std},column type/.add={>{\scriptsize $\pm$}}{}},
    columns/mR@1std/.style={string type,column type={l},column name={std},column type/.add={>{\scriptsize $\pm$}}{}},
    columns/iP@3std/.style={string type,column type={l},column name={std},column type/.add={>{\scriptsize $\pm$}}{}},
    columns/mF@3std/.style={string type,column type={l},column name={std},column type/.add={>{\scriptsize $\pm$}}{}},
    columns/iR@3std/.style={string type,column type={l|},column name={std},column type/.add={>{\scriptsize $\pm$}}{}},
    columns/mP@3std/.style={string type,column type={l},column name={std},column type/.add={>{\scriptsize $\pm$}}{}},
    columns/mC@3std/.style={string type,column type={l},column name={std},column type/.add={>{\scriptsize $\pm$}}{}},
    columns/mR@3std/.style={string type,column type={l},column name={std},column type/.add={>{\scriptsize $\pm$}}{}},
]{tables/data/lightxml_extended_str.txt}
}
\label{tab:results-lightxml-k-1-3}
\end{table}

\begin{table}%
\caption{Mean results with standard deviation of different inference strategies on $@k$ measures calculated with $k \in \{5,10\}$, using probability estimates from \lightxml model. Notation: P---precision, R---recall, F1---F1-measure, Cov---Coverage. The \smash{\colorbox{Green!20!white}{green color}} indicates cells in which the strategy matches the metric. The best results are in \textbf{bold} and the second best are in \textit{italic}.}
\small
\centering

\pgfplotstableset{
greencell/.style={postproc cell content/.append style={/pgfplots/table/@cell content/.add={\cellcolor{Green!20!white}}{},}}}

\resizebox{\linewidth}{!}{
\setlength\tabcolsep{2.5 pt}
\pgfplotstabletypeset[
    every row no 0/.append style={before row={
     & \multicolumn{24}{c}{\eurlex} \\ \midrule
    }},
    every row no 13/.append style={before row={ \midrule
     & \multicolumn{24}{c}{\amazoncatsmall} \\ \midrule
    }},
    every row no 26/.append style={before row={ \midrule
     & \multicolumn{24}{c}{\wiki} \\ \midrule
    }},
    every row no 39/.append style={before row={ \midrule
     & \multicolumn{24}{c}{\wikipedia} \\ \midrule
    }},
    every row no 52/.append style={before row={ \midrule
     & \multicolumn{24}{c}{\amazon} \\ \midrule
    }},
    every row no 5/.append style={before row={\midrule}},
    every row no 18/.append style={before row={\midrule}},
    every row no 31/.append style={before row={\midrule}},
    every row no 44/.append style={before row={\midrule}},
    every row no 57/.append style={before row={\midrule}},
every row no 0 column no 1/.style={greencell},
every row no 0 column no 2/.style={greencell},
every row no 0 column no 13/.style={greencell},
every row no 0 column no 14/.style={greencell},
every row no 13 column no 1/.style={greencell},
every row no 13 column no 2/.style={greencell},
every row no 13 column no 13/.style={greencell},
every row no 13 column no 14/.style={greencell},
every row no 26 column no 1/.style={greencell},
every row no 26 column no 2/.style={greencell},
every row no 26 column no 13/.style={greencell},
every row no 26 column no 14/.style={greencell},
every row no 39 column no 1/.style={greencell},
every row no 39 column no 2/.style={greencell},
every row no 39 column no 13/.style={greencell},
every row no 39 column no 14/.style={greencell},
every row no 52 column no 1/.style={greencell},
every row no 52 column no 2/.style={greencell},
every row no 52 column no 13/.style={greencell},
every row no 52 column no 14/.style={greencell},
every row no 5 column no 5/.style={greencell},
every row no 5 column no 6/.style={greencell},
every row no 5 column no 17/.style={greencell},
every row no 5 column no 18/.style={greencell},
every row no 18 column no 5/.style={greencell},
every row no 18 column no 6/.style={greencell},
every row no 18 column no 17/.style={greencell},
every row no 18 column no 18/.style={greencell},
every row no 31 column no 5/.style={greencell},
every row no 31 column no 6/.style={greencell},
every row no 31 column no 17/.style={greencell},
every row no 31 column no 18/.style={greencell},
every row no 44 column no 5/.style={greencell},
every row no 44 column no 6/.style={greencell},
every row no 44 column no 17/.style={greencell},
every row no 44 column no 18/.style={greencell},
every row no 57 column no 5/.style={greencell},
every row no 57 column no 6/.style={greencell},
every row no 57 column no 17/.style={greencell},
every row no 57 column no 18/.style={greencell},
every row no 6 column no 5/.style={greencell},
every row no 6 column no 6/.style={greencell},
every row no 6 column no 17/.style={greencell},
every row no 6 column no 18/.style={greencell},
every row no 19 column no 5/.style={greencell},
every row no 19 column no 6/.style={greencell},
every row no 19 column no 17/.style={greencell},
every row no 19 column no 18/.style={greencell},
every row no 32 column no 5/.style={greencell},
every row no 32 column no 6/.style={greencell},
every row no 32 column no 17/.style={greencell},
every row no 32 column no 18/.style={greencell},
every row no 45 column no 5/.style={greencell},
every row no 45 column no 6/.style={greencell},
every row no 45 column no 17/.style={greencell},
every row no 45 column no 18/.style={greencell},
every row no 58 column no 5/.style={greencell},
every row no 58 column no 6/.style={greencell},
every row no 58 column no 17/.style={greencell},
every row no 58 column no 18/.style={greencell},
every row no 7 column no 7/.style={greencell},
every row no 7 column no 8/.style={greencell},
every row no 7 column no 19/.style={greencell},
every row no 7 column no 20/.style={greencell},
every row no 20 column no 7/.style={greencell},
every row no 20 column no 8/.style={greencell},
every row no 20 column no 19/.style={greencell},
every row no 20 column no 20/.style={greencell},
every row no 33 column no 7/.style={greencell},
every row no 33 column no 8/.style={greencell},
every row no 33 column no 19/.style={greencell},
every row no 33 column no 20/.style={greencell},
every row no 46 column no 7/.style={greencell},
every row no 46 column no 8/.style={greencell},
every row no 46 column no 19/.style={greencell},
every row no 46 column no 20/.style={greencell},
every row no 59 column no 7/.style={greencell},
every row no 59 column no 8/.style={greencell},
every row no 59 column no 19/.style={greencell},
every row no 59 column no 20/.style={greencell},
every row no 8 column no 7/.style={greencell},
every row no 8 column no 8/.style={greencell},
every row no 8 column no 19/.style={greencell},
every row no 8 column no 20/.style={greencell},
every row no 21 column no 7/.style={greencell},
every row no 21 column no 8/.style={greencell},
every row no 21 column no 19/.style={greencell},
every row no 21 column no 20/.style={greencell},
every row no 34 column no 7/.style={greencell},
every row no 34 column no 8/.style={greencell},
every row no 34 column no 19/.style={greencell},
every row no 34 column no 20/.style={greencell},
every row no 47 column no 7/.style={greencell},
every row no 47 column no 8/.style={greencell},
every row no 47 column no 19/.style={greencell},
every row no 47 column no 20/.style={greencell},
every row no 60 column no 7/.style={greencell},
every row no 60 column no 8/.style={greencell},
every row no 60 column no 19/.style={greencell},
every row no 60 column no 20/.style={greencell},
every row no 9 column no 9/.style={greencell},
every row no 9 column no 10/.style={greencell},
every row no 9 column no 21/.style={greencell},
every row no 9 column no 22/.style={greencell},
every row no 22 column no 9/.style={greencell},
every row no 22 column no 10/.style={greencell},
every row no 22 column no 21/.style={greencell},
every row no 22 column no 22/.style={greencell},
every row no 35 column no 9/.style={greencell},
every row no 35 column no 10/.style={greencell},
every row no 35 column no 21/.style={greencell},
every row no 35 column no 22/.style={greencell},
every row no 48 column no 9/.style={greencell},
every row no 48 column no 10/.style={greencell},
every row no 48 column no 21/.style={greencell},
every row no 48 column no 22/.style={greencell},
every row no 61 column no 9/.style={greencell},
every row no 61 column no 10/.style={greencell},
every row no 61 column no 21/.style={greencell},
every row no 61 column no 22/.style={greencell},
every row no 10 column no 9/.style={greencell},
every row no 10 column no 10/.style={greencell},
every row no 10 column no 21/.style={greencell},
every row no 10 column no 22/.style={greencell},
every row no 23 column no 9/.style={greencell},
every row no 23 column no 10/.style={greencell},
every row no 23 column no 21/.style={greencell},
every row no 23 column no 22/.style={greencell},
every row no 36 column no 9/.style={greencell},
every row no 36 column no 10/.style={greencell},
every row no 36 column no 21/.style={greencell},
every row no 36 column no 22/.style={greencell},
every row no 49 column no 9/.style={greencell},
every row no 49 column no 10/.style={greencell},
every row no 49 column no 21/.style={greencell},
every row no 49 column no 22/.style={greencell},
every row no 62 column no 9/.style={greencell},
every row no 62 column no 10/.style={greencell},
every row no 62 column no 21/.style={greencell},
every row no 62 column no 22/.style={greencell},
every row no 11 column no 11/.style={greencell},
every row no 11 column no 12/.style={greencell},
every row no 11 column no 23/.style={greencell},
every row no 11 column no 24/.style={greencell},
every row no 24 column no 11/.style={greencell},
every row no 24 column no 12/.style={greencell},
every row no 24 column no 23/.style={greencell},
every row no 24 column no 24/.style={greencell},
every row no 37 column no 11/.style={greencell},
every row no 37 column no 12/.style={greencell},
every row no 37 column no 23/.style={greencell},
every row no 37 column no 24/.style={greencell},
every row no 50 column no 11/.style={greencell},
every row no 50 column no 12/.style={greencell},
every row no 50 column no 23/.style={greencell},
every row no 50 column no 24/.style={greencell},
every row no 63 column no 11/.style={greencell},
every row no 63 column no 12/.style={greencell},
every row no 63 column no 23/.style={greencell},
every row no 63 column no 24/.style={greencell},
every row no 12 column no 11/.style={greencell},
every row no 12 column no 12/.style={greencell},
every row no 12 column no 23/.style={greencell},
every row no 12 column no 24/.style={greencell},
every row no 25 column no 11/.style={greencell},
every row no 25 column no 12/.style={greencell},
every row no 25 column no 23/.style={greencell},
every row no 25 column no 24/.style={greencell},
every row no 38 column no 11/.style={greencell},
every row no 38 column no 12/.style={greencell},
every row no 38 column no 23/.style={greencell},
every row no 38 column no 24/.style={greencell},
every row no 51 column no 11/.style={greencell},
every row no 51 column no 12/.style={greencell},
every row no 51 column no 23/.style={greencell},
every row no 51 column no 24/.style={greencell},
every row no 64 column no 11/.style={greencell},
every row no 64 column no 12/.style={greencell},
every row no 64 column no 23/.style={greencell},
every row no 64 column no 24/.style={greencell},
    every head row/.style={
    before row={\toprule Inference & \multicolumn{4}{c|}{Instance @5}& \multicolumn{8}{c|}{Macro @5} & \multicolumn{4}{c|}{Instance @10}& \multicolumn{8}{c}{Macro @10} \\},
    after row={\midrule}},
    every last row/.style={after row=\bottomrule},
    columns={method,iP@5,iP@5std,iR@5,iR@5std,mP@5,mP@5std,mR@5,mR@5std,mF@5,mF@5std,mC@5,mC@5std,iP@10,iP@10std,iR@10,iR@10std,mP@10,mP@10std,mR@10,mR@10std,mF@10,mF@10std,mC@10,mC@10std},
    columns/method/.style={string type, column type={l|}, column name={strategy}},
    columns/iP@5/.style={string type,column name={P},column type={r@{ }}},
    columns/iR@5/.style={string type,column name={R},column type={r@{ }}},
    columns/mP@5/.style={string type,column name={P},column type={r@{ }}},
    columns/mC@5/.style={string type,column name={Cov},column type={r@{ }}},
    columns/mR@5/.style={string type,column name={R},column type={r@{ }}},
    columns/mF@5/.style={string type,column name={F1},column type={r@{ }}},
    columns/iP@10/.style={string type,column name={P},column type={r@{ }}},
    columns/mF@10/.style={string type,column name={F1},column type={r@{ }}},
    columns/iR@10/.style={string type,column name={R},column type={r@{ }}},
    columns/mP@10/.style={string type,column name={P},column type={r@{ }}},
    columns/mC@10/.style={string type,column name={Cov},column type={r@{ }}},
    columns/mR@10/.style={string type,column name={R},column type={r@{ }}},
    columns/iP@5std/.style={string type,column type={l},column name={std},column type/.add={>{\scriptsize $\pm$}}{}},
    columns/mF@5std/.style={string type,column type={l},column name={std},column type/.add={>{\scriptsize $\pm$}}{}},
    columns/iR@5std/.style={string type,column type={l|},column name={std},column type/.add={>{\scriptsize $\pm$}}{}},
    columns/mP@5std/.style={string type,column type={l},column name={std},column type/.add={>{\scriptsize $\pm$}}{}},
    columns/mC@5std/.style={string type,column type={l|},column name={std},column type/.add={>{\scriptsize $\pm$}}{}},
    columns/mR@5std/.style={string type,column type={l},column name={std},column type/.add={>{\scriptsize $\pm$}}{}},
    columns/iP@10std/.style={string type,column type={l},column name={std},column type/.add={>{\scriptsize $\pm$}}{}},
    columns/mF@10std/.style={string type,column type={l},column name={std},column type/.add={>{\scriptsize $\pm$}}{}},
    columns/iR@10std/.style={string type,column type={l|},column name={std},column type/.add={>{\scriptsize $\pm$}}{}},
    columns/mP@10std/.style={string type,column type={l},column name={std},column type/.add={>{\scriptsize $\pm$}}{}},
    columns/mC@10std/.style={string type,column type={l},column name={std},column type/.add={>{\scriptsize $\pm$}}{}},
    columns/mR@10std/.style={string type,column type={l},column name={std},column type/.add={>{\scriptsize $\pm$}}{}},
]{tables/data/lightxml_extended_str.txt}
}
\label{tab:results-lightxml-k-5-10}
\end{table}

\begin{table}%
\caption{Results of different inference strategies on $@k$ measures calculated with $k \in \{1,3,5,10\}$, using true labels as predictions. Notation: P---precision, R---recall, F1---F1-measure. The \smash{\colorbox{Green!20!white}{green color}} indicates cells in which the strategy matches the metric. The best results are in \textbf{bold} and the second best are in \textit{italic}.}
\small
\centering

\pgfplotstableset{
greencell/.style={postproc cell content/.append style={/pgfplots/table/@cell content/.add={\cellcolor{Green!20!white}}{},}}}

\resizebox{\linewidth}{!}{
\setlength\tabcolsep{2.5 pt}
\pgfplotstabletypeset[
    every row no 0/.append style={before row={
     & \multicolumn{20}{c}{\eurlex} \\ \midrule
    }},
    every row no 7/.append style={before row={ \midrule
     & \multicolumn{20}{c}{\amazoncatsmall} \\ \midrule
    }},
    every row no 14/.append style={before row={ \midrule
     & \multicolumn{20}{c}{\wiki} \\ \midrule
    }},
    every row no 21/.append style={before row={ \midrule
     & \multicolumn{20}{c}{\wikipedia} \\ \midrule
    }},
    every row no 28/.append style={before row={ \midrule
     & \multicolumn{20}{c}{\amazon} \\ \midrule
    }},
    every row no 1/.append style={before row={\midrule}},
    every row no 8/.append style={before row={\midrule}},
    every row no 15/.append style={before row={\midrule}},
    every row no 22/.append style={before row={\midrule}},
    every row no 29/.append style={before row={\midrule}},
every row no 1 column no 3/.style={greencell},
every row no 1 column no 8/.style={greencell},
every row no 1 column no 13/.style={greencell},
every row no 1 column no 18/.style={greencell},
every row no 8 column no 3/.style={greencell},
every row no 8 column no 8/.style={greencell},
every row no 8 column no 13/.style={greencell},
every row no 8 column no 18/.style={greencell},
every row no 15 column no 3/.style={greencell},
every row no 15 column no 8/.style={greencell},
every row no 15 column no 13/.style={greencell},
every row no 15 column no 18/.style={greencell},
every row no 22 column no 3/.style={greencell},
every row no 22 column no 8/.style={greencell},
every row no 22 column no 13/.style={greencell},
every row no 22 column no 18/.style={greencell},
every row no 29 column no 3/.style={greencell},
every row no 29 column no 8/.style={greencell},
every row no 29 column no 13/.style={greencell},
every row no 29 column no 18/.style={greencell},
every row no 2 column no 3/.style={greencell},
every row no 2 column no 8/.style={greencell},
every row no 2 column no 13/.style={greencell},
every row no 2 column no 18/.style={greencell},
every row no 9 column no 3/.style={greencell},
every row no 9 column no 8/.style={greencell},
every row no 9 column no 13/.style={greencell},
every row no 9 column no 18/.style={greencell},
every row no 16 column no 3/.style={greencell},
every row no 16 column no 8/.style={greencell},
every row no 16 column no 13/.style={greencell},
every row no 16 column no 18/.style={greencell},
every row no 23 column no 3/.style={greencell},
every row no 23 column no 8/.style={greencell},
every row no 23 column no 13/.style={greencell},
every row no 23 column no 18/.style={greencell},
every row no 30 column no 3/.style={greencell},
every row no 30 column no 8/.style={greencell},
every row no 30 column no 13/.style={greencell},
every row no 30 column no 18/.style={greencell},
every row no 3 column no 4/.style={greencell},
every row no 3 column no 9/.style={greencell},
every row no 3 column no 14/.style={greencell},
every row no 3 column no 19/.style={greencell},
every row no 10 column no 4/.style={greencell},
every row no 10 column no 9/.style={greencell},
every row no 10 column no 14/.style={greencell},
every row no 10 column no 19/.style={greencell},
every row no 17 column no 4/.style={greencell},
every row no 17 column no 9/.style={greencell},
every row no 17 column no 14/.style={greencell},
every row no 17 column no 19/.style={greencell},
every row no 24 column no 4/.style={greencell},
every row no 24 column no 9/.style={greencell},
every row no 24 column no 14/.style={greencell},
every row no 24 column no 19/.style={greencell},
every row no 31 column no 4/.style={greencell},
every row no 31 column no 9/.style={greencell},
every row no 31 column no 14/.style={greencell},
every row no 31 column no 19/.style={greencell},
every row no 4 column no 4/.style={greencell},
every row no 4 column no 9/.style={greencell},
every row no 4 column no 14/.style={greencell},
every row no 4 column no 19/.style={greencell},
every row no 11 column no 4/.style={greencell},
every row no 11 column no 9/.style={greencell},
every row no 11 column no 14/.style={greencell},
every row no 11 column no 19/.style={greencell},
every row no 18 column no 4/.style={greencell},
every row no 18 column no 9/.style={greencell},
every row no 18 column no 14/.style={greencell},
every row no 18 column no 19/.style={greencell},
every row no 25 column no 4/.style={greencell},
every row no 25 column no 9/.style={greencell},
every row no 25 column no 14/.style={greencell},
every row no 25 column no 19/.style={greencell},
every row no 32 column no 4/.style={greencell},
every row no 32 column no 9/.style={greencell},
every row no 32 column no 14/.style={greencell},
every row no 32 column no 19/.style={greencell},
every row no 5 column no 5/.style={greencell},
every row no 5 column no 10/.style={greencell},
every row no 5 column no 15/.style={greencell},
every row no 5 column no 20/.style={greencell},
every row no 12 column no 5/.style={greencell},
every row no 12 column no 10/.style={greencell},
every row no 12 column no 15/.style={greencell},
every row no 12 column no 20/.style={greencell},
every row no 19 column no 5/.style={greencell},
every row no 19 column no 10/.style={greencell},
every row no 19 column no 15/.style={greencell},
every row no 19 column no 20/.style={greencell},
every row no 26 column no 5/.style={greencell},
every row no 26 column no 10/.style={greencell},
every row no 26 column no 15/.style={greencell},
every row no 26 column no 20/.style={greencell},
every row no 33 column no 5/.style={greencell},
every row no 33 column no 10/.style={greencell},
every row no 33 column no 15/.style={greencell},
every row no 33 column no 20/.style={greencell},
every row no 6 column no 5/.style={greencell},
every row no 6 column no 10/.style={greencell},
every row no 6 column no 15/.style={greencell},
every row no 6 column no 20/.style={greencell},
every row no 13 column no 5/.style={greencell},
every row no 13 column no 10/.style={greencell},
every row no 13 column no 15/.style={greencell},
every row no 13 column no 20/.style={greencell},
every row no 20 column no 5/.style={greencell},
every row no 20 column no 10/.style={greencell},
every row no 20 column no 15/.style={greencell},
every row no 20 column no 20/.style={greencell},
every row no 27 column no 5/.style={greencell},
every row no 27 column no 10/.style={greencell},
every row no 27 column no 15/.style={greencell},
every row no 27 column no 20/.style={greencell},
every row no 34 column no 5/.style={greencell},
every row no 34 column no 10/.style={greencell},
every row no 34 column no 15/.style={greencell},
every row no 34 column no 20/.style={greencell},
    every head row/.style={
    before row={\toprule Inference & \multicolumn{2}{c|}{Instance @1}& \multicolumn{3}{c|}{Macro @1} & \multicolumn{2}{c|}{Instance @3}& \multicolumn{3}{c|}{Macro @3} & \multicolumn{2}{c|}{Instance @5}& \multicolumn{3}{c|}{Macro @5} & \multicolumn{2}{c|}{Instance @10}& \multicolumn{3}{c}{Macro @10} \\},
    after row={\midrule}},
    every last row/.style={after row=\bottomrule},
    columns={method,iP@1,iR@1,mP@1,mR@1,mF@1,iP@3,iR@3,mP@3,mR@3,mF@3,iP@5,iR@5,mP@5,mR@5,mF@5,iP@10,iR@10,mP@10,mR@10,mF@10},
    columns/method/.style={string type, column type={l|}, column name={strategy}},
    columns/iP@1/.style={string type,column name={P}},
    columns/mF@1/.style={string type,column name={F1}, column type={c|}},
    columns/iR@1/.style={string type,column name={R}, column type={c|}},
    columns/mP@1/.style={string type,column name={P}},
    columns/mR@1/.style={string type,column name={R}},
    columns/iP@3/.style={string type,column name={P}},
    columns/mF@3/.style={string type,column name={F1}, column type={c|}},
    columns/iR@3/.style={string type,column name={R}, column type={c|}},
    columns/mP@3/.style={string type,column name={P}},
    columns/mR@3/.style={string type,column name={R}},
    columns/iP@5/.style={string type,column name={P}},
    columns/mF@5/.style={string type,column name={F1}, column type={c|}},
    columns/iR@5/.style={string type,column name={R}, column type={c|}},
    columns/mP@5/.style={string type,column name={P}},
    columns/mR@5/.style={string type,column name={R}},
    columns/iP@10/.style={string type,column name={P}},
    columns/mF@10/.style={string type,column name={F1}},
    columns/iR@10/.style={string type,column name={R}, column type={c|}},
    columns/mP@10/.style={string type,column name={P}},
    columns/mR@10/.style={string type,column name={R}},
]{tables/data/all_true_as_pred_str.txt}
}
\label{tab:true-as-pred}
\end{table}

\begin{table}%
\caption{Impact of different values of $k'$ and $\epsilon$ on the results of @k measures calculated with $k \in \{3,5\}$. Notation: P---precision, R---recall, F1---F1-measure, Cov---Coverage, I---number of iterations, T---time in seconds.
The \smash{\colorbox{Green!20!white}{green color}} indicates cells in which the strategy matches the metric. 
The best results are in \textbf{bold} and the second best are in \textit{italic}.}

\small
\centering

\pgfplotstableset{
greencell/.style={postproc cell content/.append style={/pgfplots/table/@cell content/.add={\cellcolor{Green!20!white}}{},}}}

\resizebox{\linewidth}{!}{
\setlength\tabcolsep{2.5 pt}
\pgfplotstabletypeset[
    every row no 0/.append style={before row={
     & \multicolumn{16}{c}{\amazoncatsmall, $k'= 100$} \\ \midrule
    }},
    every row no 9/.append style={before row={ \midrule
     & \multicolumn{16}{c}{\wikipedia, $k'= 100$} \\ \midrule
    }},
    every row no 18/.append style={before row={ \midrule
     & \multicolumn{16}{c}{\wikipedia, $k'= 1000$} \\ \midrule
    }},
    every row no 27/.append style={before row={ \midrule
     & \multicolumn{16}{c}{\amazon, $k'= 100$} \\ \midrule
    }},
    every row no 36/.append style={before row={ \midrule
     & \multicolumn{16}{c}{\amazon, $k'= 1000$} \\ \midrule
    }},
    every row no 3/.append style={before row={\midrule}},
    every row no 6/.append style={before row={\midrule}},
    every row no 12/.append style={before row={\midrule}},
    every row no 15/.append style={before row={\midrule}},
    every row no 21/.append style={before row={\midrule}},
    every row no 24/.append style={before row={\midrule}},
    every row no 30/.append style={before row={\midrule}},
    every row no 33/.append style={before row={\midrule}},
    every row no 39/.append style={before row={\midrule}},
    every row no 42/.append style={before row={\midrule}},
    every row no 48/.append style={before row={\midrule}},
    every row no 51/.append style={before row={\midrule}},
    every row no 57/.append style={before row={\midrule}},
    every row no 60/.append style={before row={\midrule}},
    every row no 0 column no 3/.style={greencell},
    every row no 1 column no 3/.style={greencell},
    every row no 2 column no 3/.style={greencell},
    every row no 0 column no 11/.style={greencell},
    every row no 1 column no 11/.style={greencell},
    every row no 2 column no 11/.style={greencell},
    every row no 9 column no 3/.style={greencell},
    every row no 10 column no 3/.style={greencell},
    every row no 11 column no 3/.style={greencell},
    every row no 9 column no 11/.style={greencell},
    every row no 10 column no 11/.style={greencell},
    every row no 11 column no 11/.style={greencell},
    every row no 18 column no 3/.style={greencell},
    every row no 19 column no 3/.style={greencell},
    every row no 20 column no 3/.style={greencell},
    every row no 18 column no 11/.style={greencell},
    every row no 19 column no 11/.style={greencell},
    every row no 20 column no 11/.style={greencell},
    every row no 27 column no 3/.style={greencell},
    every row no 28 column no 3/.style={greencell},
    every row no 29 column no 3/.style={greencell},
    every row no 27 column no 11/.style={greencell},
    every row no 28 column no 11/.style={greencell},
    every row no 29 column no 11/.style={greencell},
    every row no 36 column no 3/.style={greencell},
    every row no 37 column no 3/.style={greencell},
    every row no 38 column no 3/.style={greencell},
    every row no 36 column no 11/.style={greencell},
    every row no 37 column no 11/.style={greencell},
    every row no 38 column no 11/.style={greencell},
    every row no 45 column no 3/.style={greencell},
    every row no 46 column no 3/.style={greencell},
    every row no 47 column no 3/.style={greencell},
    every row no 45 column no 11/.style={greencell},
    every row no 46 column no 11/.style={greencell},
    every row no 47 column no 11/.style={greencell},
    every row no 54 column no 3/.style={greencell},
    every row no 55 column no 3/.style={greencell},
    every row no 56 column no 3/.style={greencell},
    every row no 54 column no 11/.style={greencell},
    every row no 55 column no 11/.style={greencell},
    every row no 56 column no 11/.style={greencell},
    every row no 3 column no 5/.style={greencell},
    every row no 4 column no 5/.style={greencell},
    every row no 5 column no 5/.style={greencell},
    every row no 3 column no 13/.style={greencell},
    every row no 4 column no 13/.style={greencell},
    every row no 5 column no 13/.style={greencell},
    every row no 12 column no 5/.style={greencell},
    every row no 13 column no 5/.style={greencell},
    every row no 14 column no 5/.style={greencell},
    every row no 12 column no 13/.style={greencell},
    every row no 13 column no 13/.style={greencell},
    every row no 14 column no 13/.style={greencell},
    every row no 21 column no 5/.style={greencell},
    every row no 22 column no 5/.style={greencell},
    every row no 23 column no 5/.style={greencell},
    every row no 21 column no 13/.style={greencell},
    every row no 22 column no 13/.style={greencell},
    every row no 23 column no 13/.style={greencell},
    every row no 30 column no 5/.style={greencell},
    every row no 31 column no 5/.style={greencell},
    every row no 32 column no 5/.style={greencell},
    every row no 30 column no 13/.style={greencell},
    every row no 31 column no 13/.style={greencell},
    every row no 32 column no 13/.style={greencell},
    every row no 39 column no 5/.style={greencell},
    every row no 40 column no 5/.style={greencell},
    every row no 41 column no 5/.style={greencell},
    every row no 39 column no 13/.style={greencell},
    every row no 40 column no 13/.style={greencell},
    every row no 41 column no 13/.style={greencell},
    every row no 48 column no 5/.style={greencell},
    every row no 49 column no 5/.style={greencell},
    every row no 50 column no 5/.style={greencell},
    every row no 48 column no 13/.style={greencell},
    every row no 49 column no 13/.style={greencell},
    every row no 50 column no 13/.style={greencell},
    every row no 57 column no 5/.style={greencell},
    every row no 58 column no 5/.style={greencell},
    every row no 59 column no 5/.style={greencell},
    every row no 57 column no 13/.style={greencell},
    every row no 58 column no 13/.style={greencell},
    every row no 59 column no 13/.style={greencell},
    every row no 6 column no 6/.style={greencell},
    every row no 7 column no 6/.style={greencell},
    every row no 8 column no 6/.style={greencell},
    every row no 6 column no 14/.style={greencell},
    every row no 7 column no 14/.style={greencell},
    every row no 8 column no 14/.style={greencell},
    every row no 15 column no 6/.style={greencell},
    every row no 16 column no 6/.style={greencell},
    every row no 17 column no 6/.style={greencell},
    every row no 15 column no 14/.style={greencell},
    every row no 16 column no 14/.style={greencell},
    every row no 17 column no 14/.style={greencell},
    every row no 24 column no 6/.style={greencell},
    every row no 25 column no 6/.style={greencell},
    every row no 26 column no 6/.style={greencell},
    every row no 24 column no 14/.style={greencell},
    every row no 25 column no 14/.style={greencell},
    every row no 26 column no 14/.style={greencell},
    every row no 33 column no 6/.style={greencell},
    every row no 34 column no 6/.style={greencell},
    every row no 35 column no 6/.style={greencell},
    every row no 33 column no 14/.style={greencell},
    every row no 34 column no 14/.style={greencell},
    every row no 35 column no 14/.style={greencell},
    every row no 42 column no 6/.style={greencell},
    every row no 43 column no 6/.style={greencell},
    every row no 44 column no 6/.style={greencell},
    every row no 42 column no 14/.style={greencell},
    every row no 43 column no 14/.style={greencell},
    every row no 44 column no 14/.style={greencell},
    every row no 51 column no 6/.style={greencell},
    every row no 52 column no 6/.style={greencell},
    every row no 53 column no 6/.style={greencell},
    every row no 51 column no 14/.style={greencell},
    every row no 52 column no 14/.style={greencell},
    every row no 53 column no 14/.style={greencell},
    every row no 60 column no 6/.style={greencell},
    every row no 61 column no 6/.style={greencell},
    every row no 62 column no 6/.style={greencell},
    every row no 60 column no 14/.style={greencell},
    every row no 61 column no 14/.style={greencell},
    every row no 62 column no 14/.style={greencell},
    every head row/.style={
    before row={\toprule Inference & \multicolumn{2}{c|}{Instance @3}& \multicolumn{4}{c|}{Macro @3} & \multicolumn{2}{c|}{It./Time @3} & \multicolumn{2}{c|}{Instance @5} & \multicolumn{4}{c|}{Macro @5} & \multicolumn{2}{c}{It./Time @5} \\},
    after row={\midrule}},
    every last row/.style={after row=\bottomrule},
    columns={method,iP@3,iR@3,mP@3,mR@3,mF@3,mC@3,iters@3,time@3,iP@5,iR@5,mP@5,mR@5,mF@5,mC@5,iters@5,time@5},
    columns/method/.style={string type, column type={l|}, column name={strategy}},
    columns/iP@3/.style={string type,column name={\multicolumn{1}{c}{P}}, column type={r}},
    columns/mC@3/.style={string type,column name={\multicolumn{1}{c|}{Cov}}, column type={r|}},
    columns/mF@3/.style={string type,column name={\multicolumn{1}{c}{F1}}, column type={r}},
    columns/iR@3/.style={string type,column name={\multicolumn{1}{c|}{R}}, column type={r|}},
    columns/mP@3/.style={string type,column name={\multicolumn{1}{c}{P}}, column type={r}},
    columns/mR@3/.style={string type,column name={\multicolumn{1}{c}{R}}, column type={r}},
    columns/iters@3/.style={string type,column name={\multicolumn{1}{c}{I}}, column type={r}},
    columns/time@3/.style={string type,column name={\multicolumn{1}{c|}{T}}, column type={r|}},
    columns/iP@5/.style={string type,column name={\multicolumn{1}{c}{P}}, column type={r}},
    columns/mF@5/.style={string type,column name={\multicolumn{1}{c}{F1}}, column type={r}},
    columns/mC@5/.style={string type,column name={\multicolumn{1}{c|}{Cov}}, column type={r|}},
    columns/iR@5/.style={string type,column name={\multicolumn{1}{c|}{R}}, column type={r|}},
    columns/mP@5/.style={string type,column name={\multicolumn{1}{c}{P}}, column type={r}},
    columns/mR@5/.style={string type,column name={\multicolumn{1}{c}{R}}, column type={r}},
    columns/iters@5/.style={string type,column name={\multicolumn{1}{c}{I}}, column type={r}},
    columns/time@5/.style={string type,column name={\multicolumn{1}{c}{T}}, column type={r}},
]{tables/data/lightxml_params_str.txt}
}
\label{tab:params-impact}
\end{table}

\subsection{Extended results of the main experiment}

In this section, we present the extended results of our main experiment from \autoref{sec:experiments}. Here in \autoref{tab:results-lightxml-k-1-3} and \autoref{tab:results-lightxml-k-5-10}, we present additional results for $k = 1$, as well as results for additional methods:
\begin{itemize}
    \item \infgreedymacp, \infgreedymacf, \infgreedycov -- the greedy algorithms (\autoref{alg:greedy}) for optimizing macro-precision, -F1, and coverage (\autoref{alg:greedy-coverage}),
    \item \infmacr -- the optimal strategy for macro-recall: selection of $k$ labels with the highest $\condpos_j^{-1}\marginals[j]$, with $\condpos$ estimated on the training set.
\end{itemize}
Because both greedy and block coordinate-ascent algorithms depend on the order in which examples are presented, we check if this has a significant influence on the results. We ran the procedures 5 times with different seeds and presented both mean and standard deviation, which is mostly less than $\num{.25}$ for the macro measures.

\subsection{Results on true labels}

In the main experiment, the reported performance depends on three things: the inherent difficulty of the data,
the success of the inference algorithm and the quality of the provided marginal probabilities. 
To be able to judge these terms, we also ran the inference algorithms on probabilities generated from the test set labels: 
$\marginals[j](\sinstance_i) = 1$ iff $\slabelmatrix[ij] = 1$ to rest only our inference strategy. We present the result of this experiment in \autoref{tab:true-as-pred}.
Notice that in this experiment, the specialized inference strategies are almost always the best on measures they aim to optimize. Also, the obtained results for macro measures are significantly below 100\%. This is because, in this datasets, not all labels
are represented with even one positive training example in the test set. Still, there remains a significant gap between
these results and the results of the main experiments, with probability estimates coming from the \lightxml model.

\subsection{Impact of stopping criterion and using only top-$k'$ predictions on predictive and computational performance.}

In this section, we investigate the impact of stopping criterion and using precalculated top-$k'$ predictions on the predictive performance as well as computational performance.

To materialize all marginal probability estimates for datasets like \wikipedia and \amazon, one requires an enormous amount of memory and time. Because of that, we are not able to calculate the exact results for these datasets, and instead, we used pre-select top-$k'$ labels with the highest $\empirical \marginals[j]$ as described in~\autoref{sec:efficient-inference}.
In the main experiment in \autoref{sec:experiments}, we used $k'=100$ for smaller datasets (\eurlex, \wiki, \amazoncatsmall) and $k'=1000$ for larger datasets (\wikipedia and \amazon).
Here, we additionally compare the results for $k'=100$ and $k'=1000$ for \wikipedia and \amazon investigating the impact of predictive and computational performance.

Another factor that impacts the running time is stopping criterion $\epsilon$, which indicates minimal expected utility gain to continue BCA algorithm. Here we test and report results for 3 values of $\epsilon \in \{10^{-3}, 10^{-5}, 10^{-7}\}$ on \amazoncatsmall, \wikipedia, and \amazon datasets that are the largest in terms of number of samples and labels.

The results of BCA methods with different values of $\epsilon$ and $k'$ are presented in \autoref{tab:params-impact}. 
In this table, we additionally report the number of iterations (number of passes over datasets performed by the BCA algorithm) and time in seconds. The numbers are the mean results over 5 runs with different seeds. For all values of $\epsilon$ and $k'$, the results are very similar. 
In the case of \wikipedia dataset, $k'=1000$ achieves slightly better results than $k'=100$ in a few cases.
In the case of all datasets, the smallest $\epsilon=10^{-7}$ is usually the best. 
However, even the largest tested $\epsilon=10^{-3}$ that is greater than the inverse of a number of labels and samples is only slightly worse at the same time, requiring a much smaller number of iterations. 
For all values of $\epsilon$ and $k'$, the BCA algorithm terminates in just a few iterations, finishing always in less than 20 minutes. 
Please note that we implemented our algorithms in Python with some parts optimized using Numba~\citep{lam2015numba} -- LLVM-based just-in-time (JIT) compiler for Python. 
Because of that, we believe that the running time can be further reduced by using a more performant programming language.

\begin{figure}
\centering
\begin{tabular}{ccc}
\includegraphics[width = 0.3\linewidth]{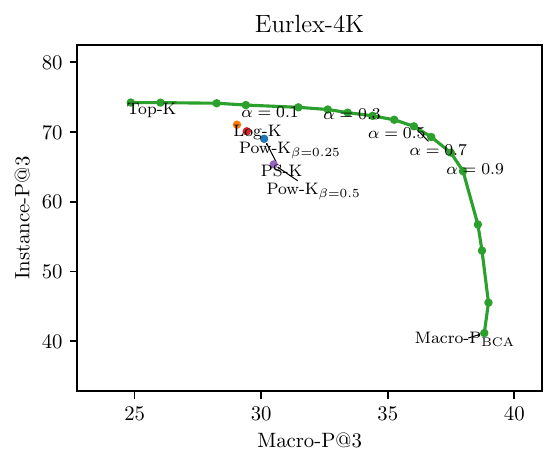} & 
\includegraphics[width = 0.3\linewidth]{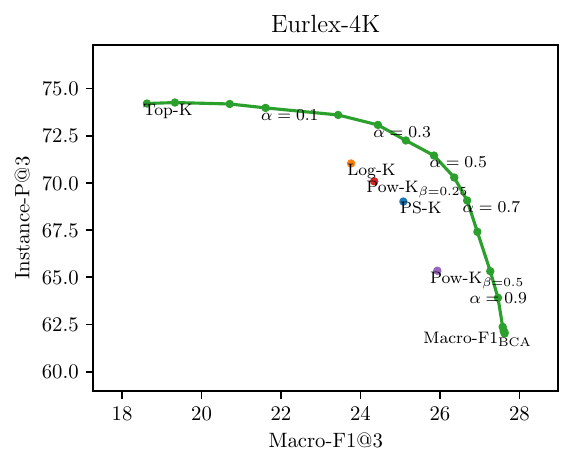} & 
\includegraphics[width = 0.3\linewidth]{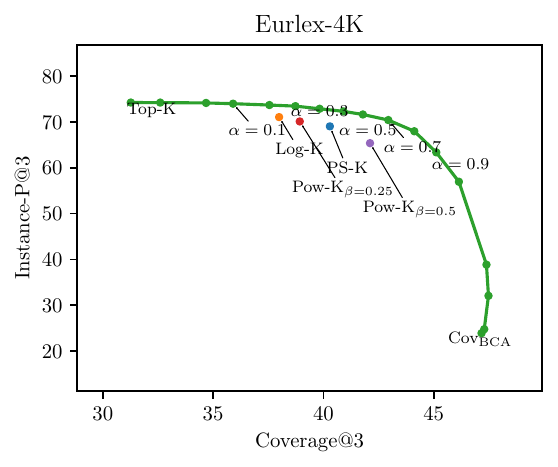} \\
\includegraphics[width = 0.3\linewidth]{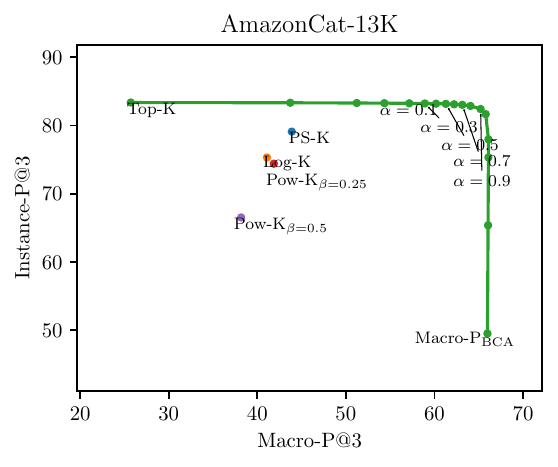} & 
\includegraphics[width = 0.3\linewidth]{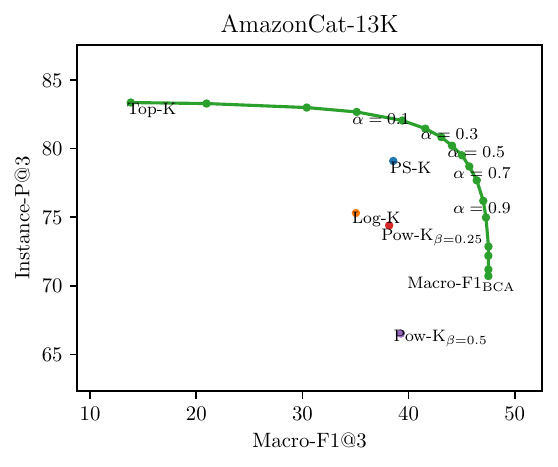} & 
\includegraphics[width = 0.3\linewidth]{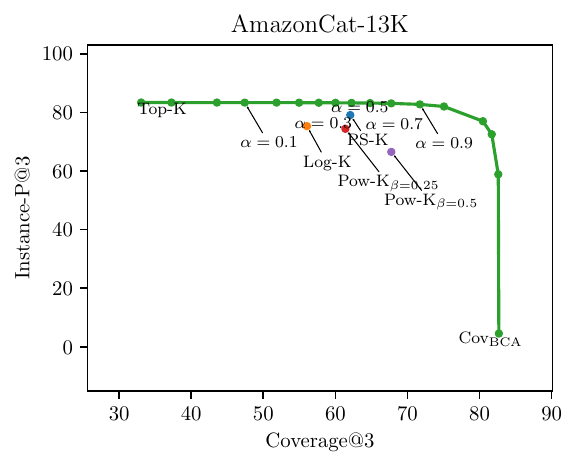} \\
\includegraphics[width = 0.3\linewidth]{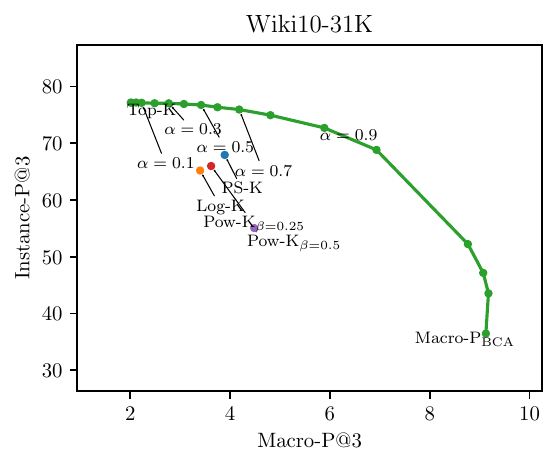} & 
\includegraphics[width = 0.3\linewidth]{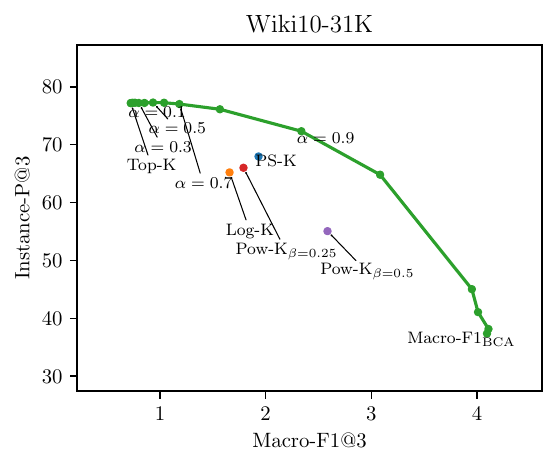} & 
\includegraphics[width = 0.3\linewidth]{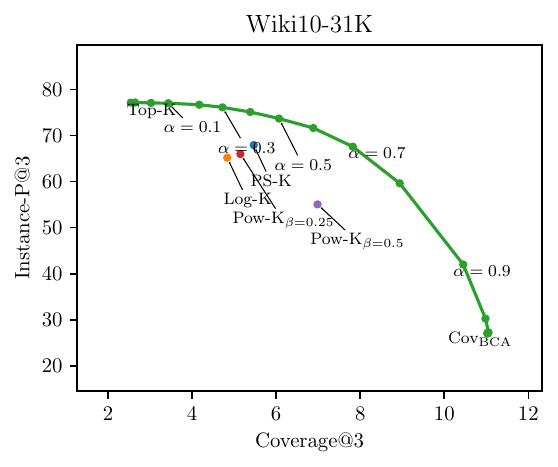} \\
\includegraphics[width = 0.3\linewidth]{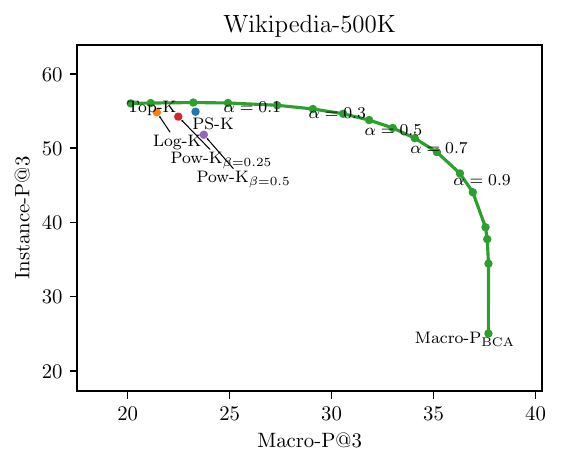} & 
\includegraphics[width = 0.3\linewidth]{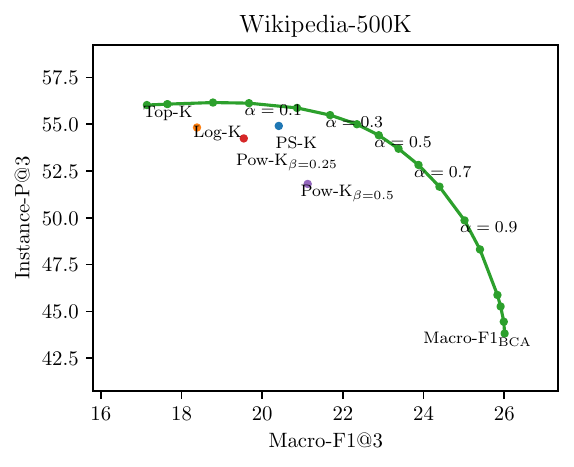} & 
\includegraphics[width = 0.3\linewidth]{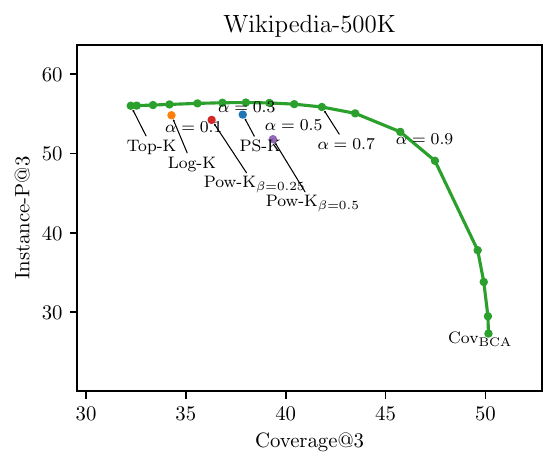} \\
\includegraphics[width = 0.3\linewidth]{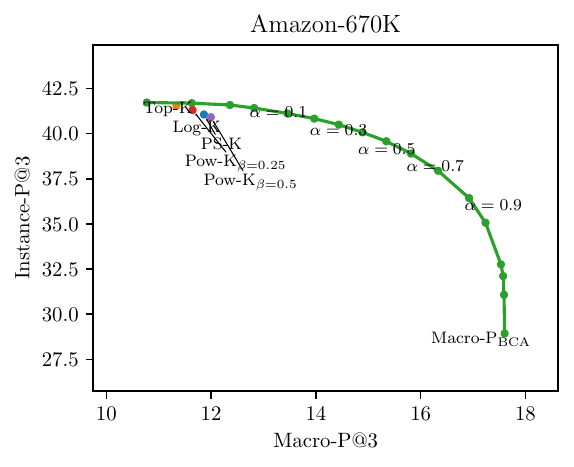} & 
\includegraphics[width = 0.3\linewidth]{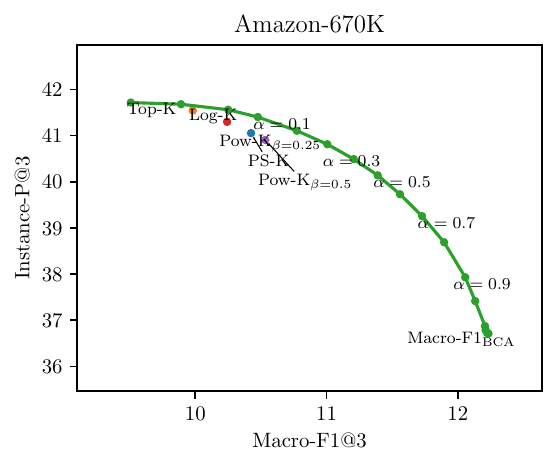} & 
\includegraphics[width = 0.3\linewidth]{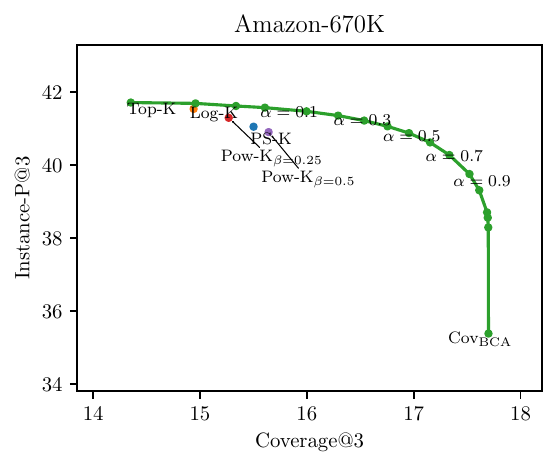}
\end{tabular}
\caption{Comparison of the baseline algorithms with the BCA inference with mixed objectives with $k=3$.
The {\color{Green}green line} shows the results for different interpolations between two measures.}
\label{fig:mixed-1}
\end{figure}

\begin{figure}
\centering
\begin{tabular}{ccc}
\includegraphics[width = 0.3\linewidth]{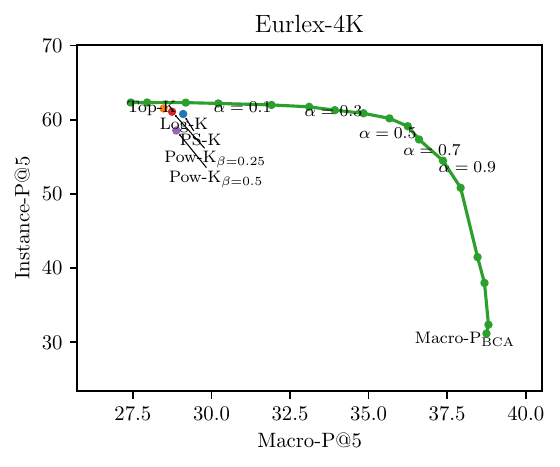} & 
\includegraphics[width = 0.3\linewidth]{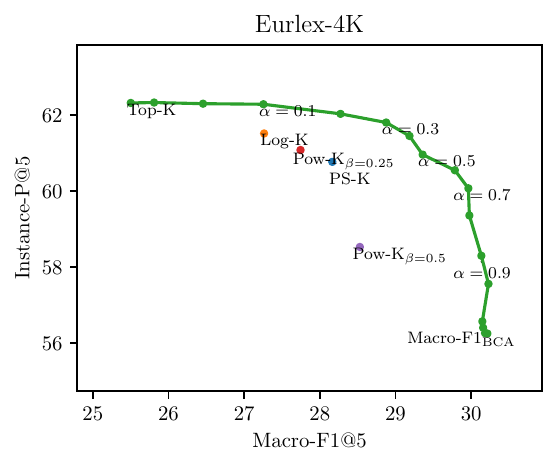} & 
\includegraphics[width = 0.3\linewidth]{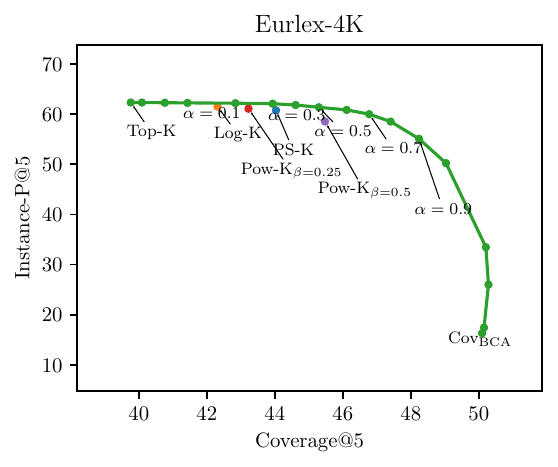} \\
\includegraphics[width = 0.3\linewidth]{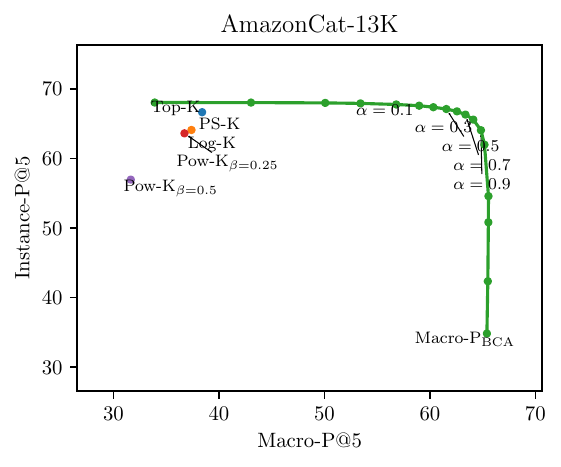} & 
\includegraphics[width = 0.3\linewidth]{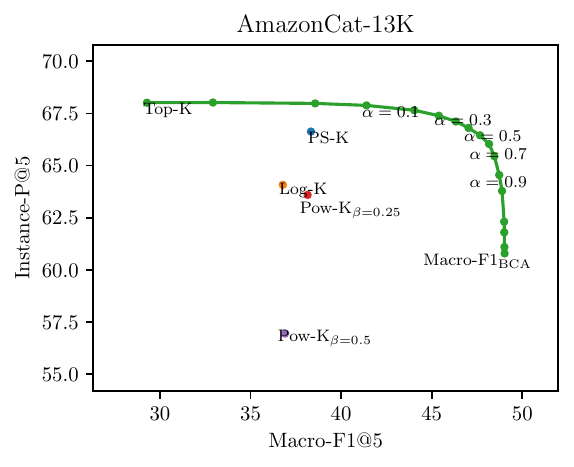} & 
\includegraphics[width = 0.3\linewidth]{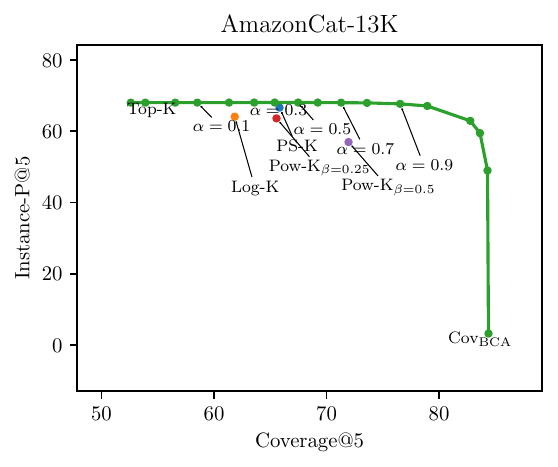} \\
\includegraphics[width = 0.3\linewidth]{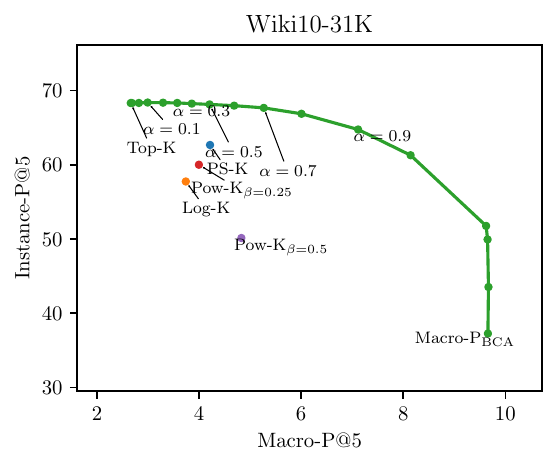} & 
\includegraphics[width = 0.3\linewidth]{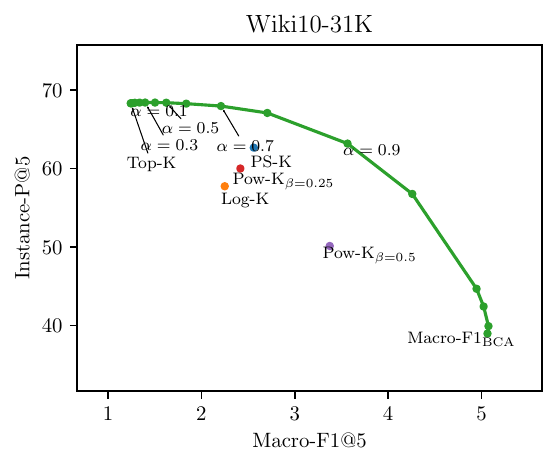} & 
\includegraphics[width = 0.3\linewidth]{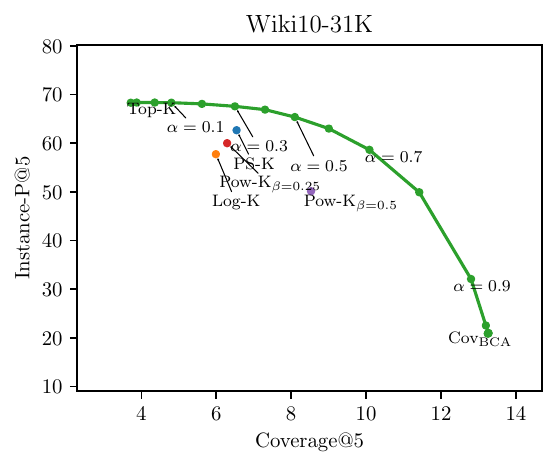} \\
\includegraphics[width = 0.3\linewidth]{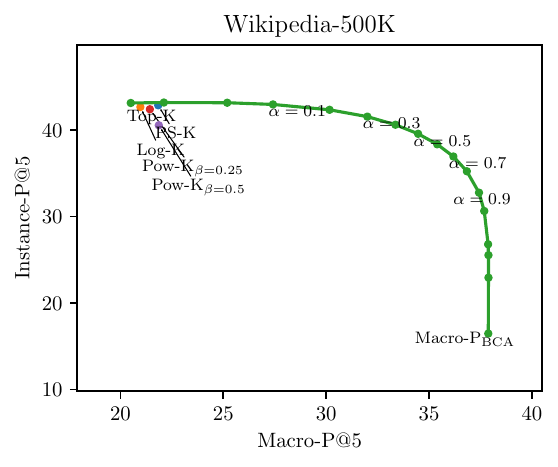} & 
\includegraphics[width = 0.3\linewidth]{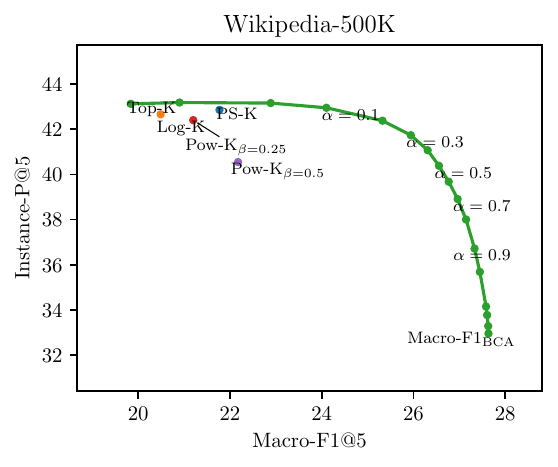} & 
\includegraphics[width = 0.3\linewidth]{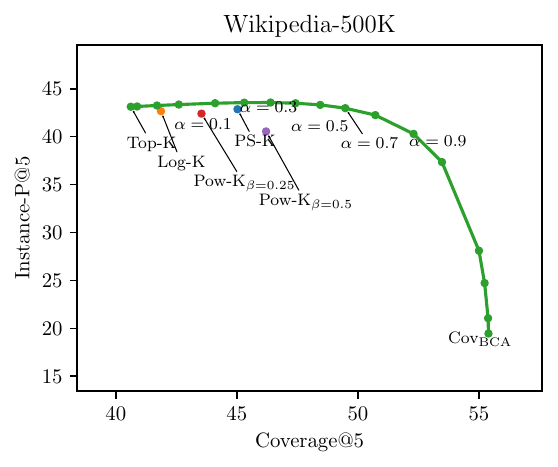} \\
\includegraphics[width = 0.3\linewidth]{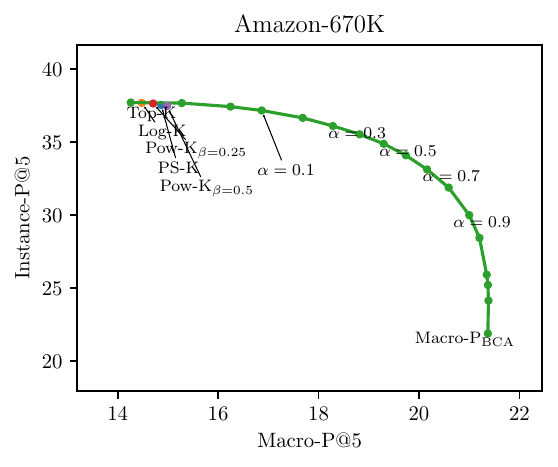} & 
\includegraphics[width = 0.3\linewidth]{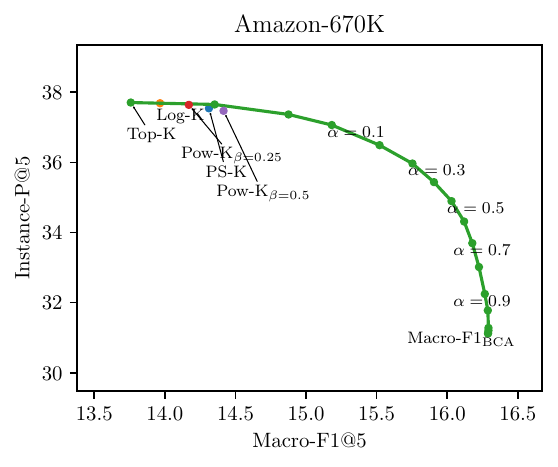} & 
\includegraphics[width = 0.3\linewidth]{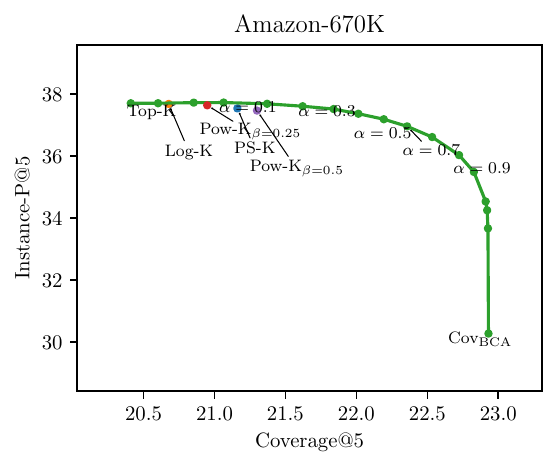}
\end{tabular}
\caption{Comparison of the baseline algorithms with the BCA inference with mixed objectives with $k=5$.
The {\color{Green}green line} shows the results for different interpolations between two measures.}
\label{fig:mixed-2}
\end{figure}

\subsection{Results with mixed utilities}
\label{app:mixed-utilities}

As we already discussed in~\autoref{sec:experiments}, the optimization of macro-measures comes with the cost of a significant drop in performance on instance-wise measures, which in some cases may not be acceptable. To achieve the desired trade-off between tail and head label performance, one can optimize a mixed utility that is a linear combination of instance-wise measures and selected macro-measures. Here, we present the results for three such mixed utilities (combinations of instance-precision with macro-precision, macro-f1-measure, and coverage):
\begin{align}
    \taskloss_{1}(\slabelmatrix, \spredmatrix) &:= (1 - \alpha) \taskloss_{\text{Instance-P}}(\slabelmatrix, \spredmatrix) + \alpha \taskloss_{\text{Macro-P}}(\slabelmatrix, \spredmatrix) \nonumber \\
                                               &= \sum_{j=1}^{\numlabels}(1 - \alpha) \lossfnc_{\text{Instance-P}}(\labelscol[j], \predscol[j]) + \alpha \lossfnc_{\text{Macro-P}}(\labelscol[j], \predscol[j]) \nonumber \\
                                               &= \sum_{j=1}^{\numlabels}(1 - \alpha) \frac{\truepos_j}{k} + \alpha \frac{\truepos_j}{\numlabels \predpos_j} \,, \nonumber \\
    \taskloss_{2}(\slabelmatrix, \spredmatrix) &:= (1 - \alpha) \taskloss_{\text{Instance-P}}(\slabelmatrix, \spredmatrix) + \alpha \taskloss_{\text{Macro-F1}}(\slabelmatrix, \spredmatrix) \nonumber \\
                                               &= \sum_{j=1}^{\numlabels}(1 - \alpha) \lossfnc_{\text{Instance-P}}(\labelscol[j], \predscol[j]) + \alpha \lossfnc_{\text{Macro-F1}}(\labelscol[j], \predscol[j]) \nonumber \\
                                               &= \sum_{j=1}^{\numlabels}(1 - \alpha) \frac{\truepos_j}{k} + \alpha \frac{2\truepos_j}{\numlabels (\predpos_j + \condpos_j)} \,, \nonumber \\
    \taskloss_{3}(\slabelmatrix, \spredmatrix) &:= (1 - \alpha) \taskloss_{\text{Instance-P}}(\slabelmatrix, \spredmatrix) + \alpha \taskloss_{\text{Cov}}(\slabelmatrix, \spredmatrix) \nonumber \\
                                               &= \sum_{j=1}^{\numlabels}(1 - \alpha) \lossfnc_{\text{Instance-P}}(\labelscol[j], \predscol[j]) + \alpha \lossfnc_{\text{Cov}}(\labelscol[j], \predscol[j]) \nonumber \\
                                               &= \sum_{j=1}^{\numlabels}(1 - \alpha) \frac{\truepos_j}{k} + \alpha \frac{\indicator{\truepos_j > 0}}{\numlabels} \,,
\end{align}

In~\autoref{fig:mixed-1} and~\autoref{fig:mixed-2}, we present the plots with results on two combined measures for different values of $\alpha \in \{0.01, 0.05, 0.1, 0.2, 0.3, 0.4, 0.5, 0.6, 0.7, 0.8, 0.9, 0.95, 0.99, 0.995, 0.999\}$. Once again, the presented results are the mean values over 5 runs with different seeds. The plots show that the instance-vs-macro curve has a nice concave shape that dominates simple baselines. In particular, we can initially improve macro-measures significantly with only a minor drop in instance-measures, and only if we want to optimize even more strongly for macro-measures, we get larger drops in instance-wise measures. A particularly notable feature of the plug-in approach is that the curves in the figure are cheap to produce since there is no requirement for expensive re-training of the entire architecture, so one can easily select an optimal interpolation constant according to some criteria, such as a maximum decrease of instance-wise performance.

\subsection{Hardware}

The \lightxml model was trained on a workstation with a single Nvidia Tesla V100 GPU with 32 GB of memory and 64 GB of RAM. All the inference strategies were then run on the workstation with 64 GB of RAM. However, to run them, one requires only 16 GB of memory.

\clearpage

\section{Efficient inference with Probabilistic Label Trees}
\label{app:efficient-inference-plt}

In this section, we describe probabilistic labels trees (PLTs)~\cite{Jasinska_et_al_2016}, which allow for efficient retrieval of only the top-$k$ labels
with the highest conditional probabilities. Then, we introduce a weighted version and show how to modify it to achieve efficient and exact optimal prediction for any $\gains[j](\marginals[j](\sinstance))$ that is monotonous in $\marginals[j](\sinstance)$.
Thus, if the LPE $\marginals$ is modeled by a PLT, this provides an alternative to
the sparse inference introduced in \autoref{app:sparse-algorithms} that also scales to XMLC problems.

\subsection{Probabilistic labels trees}

We denote a tree by $\tree$, the set of all its nodes by $\nodes_{\tree}$, the root node by $\rootnode_{\tree}$, and the set of its leaves by $\leaves_{\tree}$. The leaf $\leafnode_j \in \leaves_{\tree}$ corresponds to the label $j \in \intrange{\numlabels}$. The parent node of $\node$ is denoted by $\parent(\node)$, and the set of child nodes by $\children(\node)$. The set of leaves of a (sub)tree rooted in node $\node$ is denoted by $\leaves_{\node}$, and path from node $\node$ to the root by $\Path(\node)$.

A PLT uses a tree $\tree$ to factorize %
conditional probabilities of labels, $\marginals[j](\rinstance) = \probability{\rlabels[j] = 1 \given \rinstance}$, $j \in \intrange{\numlabels}$, by using the chain rule. 
Let us define an event that $\leaves_{\node}$ contains at least one relevant label in $\rlabels$, denoting
$\nodeisrelevant_{\node} \coloneqq \indicator{\exists j :\, \leafnode_j \in \leaves_{\node} \land \rlabels[j] = 1}$. 
Now for every node $\node \in \nodes_{\tree}$, %
the conditional probability of containing at least one relevant label is given by:
\begin{equation}
\marginals[\node](\rinstance) \coloneqq \probability{\nodeisrelevant_{\node} = 1 \given \rinstance} = \; \prod_{\mathclap{\node^{\prime} \in \Path(\node)}} \; \treeconditional(\rinstance, \node') \,,
\label{eqn:plt-factorization-prediction}
\end{equation}
where $\treeconditional(\rinstance, \node) \coloneqq \probability{\nodeisrelevant_{\node} = 1 \given \nodeisrelevant_{\parent(\node)} = 1, \rinstance}$ for non-root nodes, and $\treeconditional(\rinstance, \node) \coloneqq \probability{\nodeisrelevant_{\node} = 1 \given \rinstance}$ for the root. Notice that \eqref{eqn:plt-factorization-prediction} can also be stated recursively as
\begin{equation}
\marginals[\node](\rinstance) = \treeconditional(\rinstance, \node) \cdot \marginals[\parent(\node)](\rinstance) \,,
\label{eqn:plt-estimates-factorization-recursion}
\end{equation}
and that for leaf nodes we get the %
conditional probabilities of labels
\begin{equation}
\marginals[\leafnode_j](\rinstance) = \marginals[j](\rinstance) \,, \quad \textrm{for~} \leafnode_j \in \leaves_{\tree} \,.
\label{eqn:plt_leaf_prob}
\end{equation}

To obtain a PLT, it suffices, for a given $\tree$, to train probabilistic classifiers estimating $\treeconditional(\rinstance, \node)$ for all $\node \in \nodes_{\tree}$. Analogously to the main paper, we denote estimates of $\treeconditional$ by $\estimtreeconditional$.

\subsection{Weighted PLTs}

\cite{Wydmuch_et_al_2021} introduced an $A^*$-search based algorithm for efficiently finding the $k$ leaves with the highest gain $\truegain(\leafnode_j, \sinstance) = \poslossweight_{j} \marginals[j](\sinstance)$, where $\poslossweight_{j} \in \rightopeninterval{0}{\infty}$ is the weight given to label $j$. 
The outline of the search method presented below is an adapted description from \cite{Wydmuch_et_al_2021}.

For the gain function $\truegain(\leafnode_j, \sinstance) = \poslossweight_{j} \predmarginals[j](\sinstance)$, the procedure uses a cost function $\truecost(\leafnode_j, \sinstance)$ for each path from the root to a leaf. 
Notice that the following holds:
\begin{equation}
\truegain(\leafnode_j, \sinstance) = \poslossweight_{j} \predmarginals[j](\sinstance) = \exp \left(-\left(- \log \poslossweight_{j} - \; \smashsum_{\mathclap{\node \in \Path(\leafnode_j)}} \log \estimtreeconditional(\sinstance, \node) \right)\right) \,.
\end{equation}
Because $\exp$ is monotonous, we can define the following cost function for a selected label $j$, which the algorithm will aim to minimize:
\begin{equation}
\truecost(\leafnode_j, \sinstance) \coloneqq - \log \poslossweight_{j} - \; \sum_{\mathclap{\node \in \Path(\leafnode_j)}} \log \estimtreeconditional(\sinstance, \node) \,.
\end{equation}
We can then guide the $A^*$-search with the function $\estimcost(\node, \sinstance) = \partialpathcost(\node, \sinstance) + \restpathestimate(\node, \sinstance)$, 
estimating the value of the optimal path in the subtree of node $\node$, where
\begin{equation}
\partialpathcost(\node, \sinstance) = \; - \sum_{\mathclap{\node' \in \Path(\node)}} \log \estimtreeconditional(\sinstance, \node')
\end{equation}
is the cost of reaching tree node $\node$ from the root, and
\begin{equation}
\restpathestimate(\node, \sinstance) = -\log \max_{j \in \leaves_{\node}} \poslossweight_{j}
\end{equation}
is a heuristic function estimating the cost of reaching the best leaf from node $\node$. The $A^*$-search in our procedure evaluates nodes in ascending order of their estimated cost values $\estimcost(\leafnode_j, \sinstance)$.

This approach has been proven by \cite{Wydmuch_et_al_2021} to guarantee that $A^*$-search finds the optimal solution---top-$k$ labels with the highest $\truecost(\leafnode_j, \sinstance)$ and thereby top-$k$ labels with the highest $\poslossweight_{j}\marginals[j](\sinstance)$---in the optimally efficient way, i.e., there is no other algorithm used with this heuristic that expands fewer nodes~\cite{Russell_Norvig_2016}.

\subsection{PLTs for monotonous gain functions}

\begin{algorithm}[!t]
\caption{Select top-$k$ labels with highest gain using PLTs $(\tree, \estimtreeconditional, \sinstance, k, g_j(\cdot))$}
\label{alg:plt-general-top-lprediction}
\begin{small}
\begin{algorithmic}[1] 
\State $\spreds \xleftarrow{} 0^{\numlabels}$ \Comment{Initialize prediction $\spreds$ vector to all zeros}
\State $\mathcal{Q} \xleftarrow{} \emptyset$ \Comment{Initialize priority queue $\mathcal{Q}$, \textbf{ordered descending by $\estimgain(\node, \sinstance)$}}
\State $\partialpathcost(\rootnode_{\tree}, \sinstance) \xleftarrow{} \estimtreeconditional(\sinstance, \rootnode_{\tree})$ \Comment{Calculate estimated conditional probability $\partialpathcost(\rootnode_{\tree}, \sinstance)$ for the tree root}
\State $\estimgain(\rootnode_{\tree}, \sinstance) \xleftarrow{} \max_{j \in \leaves_{\rootnode_{\tree}}} \left( g_j(\partialpathcost(\rootnode_{\tree}, \sinstance))\right)$ \Comment{Calculate estimated gain $\estimgain(\rootnode_{\tree}, \sinstance)$ for the tree root}
\State $\mathcal{Q}\mathrm{.add}((\rootnode_{\tree}, \partialpathcost(\rootnode_{\tree}, \sinstance), \estimgain(\rootnode_{\tree}, \sinstance)))$ \Comment{Add the tree root with estimates $\partialpathcost(\rootnode_{\tree}, \sinstance)$ and $\estimgain(\rootnode_{\tree}, \sinstance)$ to the queue}
\While{$\|\spreds\|_1 < k$} \Comment{While the number of predicted labels is less than $k$}
	\State $(\node, \partialpathcost(\node, \sinstance),\_) \xleftarrow{} \mathcal{Q}\mathrm{.pop}()$ \Comment{Pop the  element with the lowest cost from the queue} %
	\If{$\node$ is a leaf} 
		$\spreds[\node] \xleftarrow{} 1$ \Comment{If the node is a leaf, set the corresponding label in the prediction vector}
	\Else $\,$ \textbf{for} $\node' \in \children(\node)$ \textbf{do} \Comment{If the node is an internal node, for all child nodes}
	    \State $\partialpathcost(\node', \sinstance) \xleftarrow{} \partialpathcost(\node, \sinstance) \estimtreeconditional(\sinstance, \node')$ \Comment{Calculate $\partialpathcost(\node', \sinstance)$}
	    \State $\estimgain(\node', \sinstance) \xleftarrow{} \max_{j \in \leaves_{\node}} \left( g_j(\partialpathcost(\node', \sinstance))\right)$ \Comment{Calculate estimate $\estimgain(\node', \sinstance)$}
		\State $\mathcal{Q}\mathrm{.add}((\node', \partialpathcost(\node', \sinstance), \estimgain(\node', \sinstance)))$  \Comment{Add $\node'$, and its estimates $\partialpathcost(\rootnode_{\tree}, \sinstance)$ and $\estimgain(\rootnode_{\tree}, \sinstance)$ to the queue}
	\EndIf 
\EndWhile
\State \textbf{return} $\spreds$ \Comment{Return the prediction vector}
\end{algorithmic}
\end{small}
\end{algorithm}

Notice that in Weighted PLTs instead of minimizing cost, one can directly optimize gain by using a tree-search procedure that evaluates nodes in descending order of their estimated gain $\estimgain(\node, \sinstance) = \partialpathcost(\node, \sinstance) \cdot \restpathestimate(\node, \sinstance)$, where $\partialpathcost(\node, \sinstance) = \sum_{\node' \in \Path(\node)} \estimtreeconditional(\sinstance, \node')$ and $\restpathestimate(\node, \sinstance) = \max_{j \in \leaves_{\node}} \poslossweight_{j}$. To generalize this formulation and apply it to the proposed algorithms, we need to be able to find the top-$k$ with any gain function $\gains[j](\marginals[j](\sinstance))$ that is monotonous with $\marginals[j](\sinstance)$.

We can guide the tree search using the estimated gain function $\estimgain(\node, \sinstance) = \restpathestimate(\node, \partialpathcost(\node, \sinstance))$, where
\begin{equation}
    \partialpathcost(\node, \sinstance) = \quad \prod_{\mathclap{\node' \in \Path(\node)}} \estimtreeconditional(\sinstance, \node') \,,
\end{equation}
the same as in the case of Weighted PLT, it simply corresponds to the conditional probability of finding at least one positive label in the subtree of node $\node$, and the heuristic part is:
\begin{equation}
    \restpathestimate(\node, \sinstance) = \max_{j \in \leaves_{\node}} \left( g_j(\partialpathcost(\node, \sinstance)) \right) \,.
\end{equation}

As in the case of Weighted PLT, we would like to guarantee that this search procedure finds the optimal solution---the top-$k$ labels with the highest  $\truegain(\leafnode_j, \sinstance)$.
To do that, we need to ensure that $\estimgain(\node, \sinstance)$ is admissible, i.e., in case of maximization, it never underestimates the gain from reaching a leaf node~\cite{Russell_Norvig_2016}.
We also would like $\estimgain(\node, \sinstance)$ to be consistent, making the proposed tree search procedure optimally efficient, i.e., there is no other algorithm used with the same $\estimgain(\node, \sinstance)$ that expands fewer nodes~\cite{Russell_Norvig_2016}. \autoref{alg:plt-general-top-lprediction} outlines this procedure for finding the top-$k$ labels with highest values of $\gains[j](\marginals[j](\sinstance))$ that is monotonous with $\marginals[j](\sinstance)$, and that can replace the $\text{select top-k}(\gains)$ operation in the proposed BCA and Greedy algorithms.
Unfortunately, not all gain functions $\gains[j](\marginals[j](\sinstance))$ can be calculated efficiently. To calculate the semi-empirical quantity $\semiemp \condpos$ exactly, one needs to know \emph{all} the $\marginals[j](\sinstance)$. To obtain them, the whole tree need to be evaluated, which prevents this method from being efficient. Because of that, we use this technique to get exact predictions for the measures that are defined solely on $\semiemp \truepos$ and $\semiemp \predpos$, like macro-precision and coverage.

\subsection{Experimental comparison of inference with PLTs}

In this experiment, we evaluate the inference using probabilistic label trees that can efficiently perform the exact $\text{select top-k}(\gains)$ operation. 
We compare inference times of these variants of the methods that require only one pass over the dataset: \inftopk, \infpstopk, \infpower, \inflog, \infgreedymacp, \infmacr, and \infgreedycov, for $k=\{3, 5\}$.
As a baseline, we use the same PLT to obtain the $k'=100$ and $k'=1000$ highest $\marginals(\sinstance_i)$, and then run the
sparse algorithms on top of these predictions.
We use a modified implementation of PS-PLT~\citep{Wydmuch_et_al_2021, Jasinska-Kobus_et_al_2020}, trained with default settings.

We present the results in~\autoref{tab:main-results-plt}. As in the case of the main experiment, we again observe that the specialized inference strategies are indeed the best on the measure they aim to optimize. Compared to the cost of obtaining marginals for $k'=100$ and $k'=1000$ labels, which can be later used with one of the proposed inference algorithms (the cost of running this step is not included in the table), the greedy methods combined with PLT-based search consistently achieve speed-up when compared to top $k'=1000$ inference.
Unfortunately, they are also often slower than $k'=100$ inference. As we showed in the previous section, $k'=100$ is usually enough to get good predictive performance on macro measures using BCA and Greedy algorithms.
However, here, the PLT-based methods obtain the exact solution for finding the prediction with the highest expected gain. While it is worth noting that tree structure and the type of node estimator have an impact on the performance of PLT-based algorithms and we conducted this experiment with just a single setup, we believe that this experiment shows that PLT-based inference, in some cases, can be an alternative to inference with sparse marginal estimates.

\begin{table}%
\caption{Results of different inference strategies on $@k$ measures with $k = \{3,5\}$ using \textsc{PLT} model. 
Notation: P---precision, R---recall, F1---F1-measure, Cov---Coverage, T---time in seconds, $k'$=100/1000---speed-up relative to the time required to obtain top-100/1000 $\marginals(\sinstance)$ for all the instances in $\sinstancematrix$.
The \smash{\colorbox{Green!20!white}{green color}} indicates cells in which the strategy matches the metric. 
The best results are in \textbf{bold} and the second best are in \textit{italic}.}
\small
\centering

\pgfplotstableset{
greencell/.style={postproc cell content/.append style={/pgfplots/table/@cell content/.add={\cellcolor{Green!20!white}}{},}}}

\resizebox{\linewidth}{!}{
\setlength\tabcolsep{4 pt}
\pgfplotstabletypeset[
    every row no 0/.append style={before row={
     & \multicolumn{18}{c}{\eurlex} \\ \midrule
    }},
    every row no 9/.append style={before row={ \midrule
     & \multicolumn{18}{c}{\amazoncatsmall} \\ \midrule
    }},
    every row no 18/.append style={before row={ \midrule
     & \multicolumn{18}{c}{\wiki} \\ \midrule
    }},
    every row no 2/.append style={before row={\midrule}},
    every row no 6/.append style={before row={\midrule}},
    every row no 11/.append style={before row={\midrule}},
    every row no 15/.append style={before row={\midrule}},
    every row no 20/.append style={before row={\midrule}},
    every row no 24/.append style={before row={\midrule}},
    every row no 2 column no 1/.style={greencell},
    every row no 2 column no 10/.style={greencell},
    every row no 11 column no 1/.style={greencell},
    every row no 11 column no 10/.style={greencell},
    every row no 20 column no 1/.style={greencell},
    every row no 20 column no 10/.style={greencell},
    every row no 6 column no 3/.style={greencell},
    every row no 6 column no 12/.style={greencell},
    every row no 15 column no 3/.style={greencell},
    every row no 15 column no 12/.style={greencell},
    every row no 24 column no 3/.style={greencell},
    every row no 24 column no 12/.style={greencell},
    every row no 7 column no 4/.style={greencell},
    every row no 7 column no 13/.style={greencell},
    every row no 16 column no 4/.style={greencell},
    every row no 16 column no 13/.style={greencell},
    every row no 25 column no 4/.style={greencell},
    every row no 25 column no 13/.style={greencell},
    every row no 8 column no 6/.style={greencell},
    every row no 8 column no 15/.style={greencell},
    every row no 17 column no 6/.style={greencell},
    every row no 17 column no 15/.style={greencell},
    every row no 26 column no 6/.style={greencell},
    every row no 26 column no 15/.style={greencell},
    every head row/.style={
    before row={\toprule Inference & \multicolumn{2}{c|}{Instance @3}& \multicolumn{4}{c|}{Macro @3} & \multicolumn{3}{c|}{Time/Speed-up @3} & \multicolumn{2}{c|}{Instance @5} & \multicolumn{4}{c|}{Macro @5} & \multicolumn{3}{c}{Time/Speed-up @5} \\},
    after row={\midrule}},
    every last row/.style={after row=\bottomrule},
    columns={method,iP@3,iR@3,mP@3,mR@3,mF@3,mC@3,time@3,top100time@3,top1000time@3,iP@5,iR@5,mP@5,mR@5,mF@5,mC@5,time@5,top100time@5,top1000time@5},
    columns/method/.style={string type, column type={l|}, column name={strategy}},
    columns/iP@3/.style={string type,column name={\multicolumn{1}{c}{P}}, column type={r}},
    columns/mF@3/.style={string type,column name={\multicolumn{1}{c}{F1}}, column type={r}},
    columns/iR@3/.style={string type,column name={\multicolumn{1}{c|}{R}}, column type={r|}},
    columns/mP@3/.style={string type,column name={\multicolumn{1}{c}{P}}, column type={r}},
    columns/mC@3/.style={string type,column name={\multicolumn{1}{c|}{Cov}}, column type={r|}},
    columns/mR@3/.style={string type,column name={\multicolumn{1}{c}{R}}, column type={r}},
    columns/iP@5/.style={string type,column name={\multicolumn{1}{c}{P}}, column type={r}},
    columns/mF@5/.style={string type,column name={\multicolumn{1}{c}{F1}}, column type={r}},
    columns/iR@5/.style={string type,column name={\multicolumn{1}{c|}{R}}, column type={r|}},
    columns/mP@5/.style={string type,column name={\multicolumn{1}{c}{P}}, column type={r}},
    columns/mC@5/.style={string type,column name={\multicolumn{1}{c|}{Cov}}, column type={r|}},
    columns/mR@5/.style={string type,column name={\multicolumn{1}{c}{R}}, column type={r}},
    columns/time@5/.style={string type,column name={\multicolumn{1}{c}{T}}, column type={r}},
    columns/time@3/.style={string type,column name={\multicolumn{1}{c}{T}}, column type={r}},
    columns/top100time@5/.style={string type,column name={$k'$=100}, column type={r}},
    columns/top1000time@5/.style={string type,column name={$k'$=1000}, column type={r}},
    columns/top100time@3/.style={string type,column name={$k'$=100}, column type={r}},
    columns/top1000time@3/.style={string type,column name={$k'$=1000}, column type={r|}},
]{tables/data/plt_main_str.txt}
}
\label{tab:main-results-plt}
\end{table}

\end{document}